\def\eqref#1{equation~\ref{#1}}
\def\1{\bm{1}}
\DeclareMathAlphabet{\mathsfit}{\encodingdefault}{\sfdefault}{m}{sl}
\SetMathAlphabet{\mathsfit}{bold}{\encodingdefault}{\sfdefault}{bx}{n}
\useunder{\uline}{\ul}{} 
\theoremstyle{plain}
\newtheorem{theorem}{Theorem}[section]
\newtheorem{proposition}[theorem]{Proposition}
\theoremstyle{definition}
\newtheorem{definition}[theorem]{Definition}
\theoremstyle{remark}
\title{Quantifying and Enhancing Multi-modal Robustness with Modality Preference}
\author{Zequn Yang$^1$, Yake Wei$^1$, Ce Liang$^1$, Di Hu$^1$\thanks{Corresponding author} \\
 Gaoling School of Artificial Intelligence, Renmin University of China$^1$ \\
\texttt{\{zqyang,yakewei,liangce158,dihu\}@ruc.edu.cn}
} 
\begin{document}

\maketitle

\vspace{-3mm}
\begin{abstract}
\vspace{-3mm}
Multi-modal models have shown a promising capability to effectively integrate information from various sources, yet meanwhile, they are found vulnerable to pervasive perturbations, such as uni-modal attacks and missing conditions. To counter these perturbations, robust multi-modal representations are highly expected, which are positioned well away from the discriminative multi-modal decision boundary. In this paper, different from conventional empirical studies, we focus on a commonly used joint multi-modal framework and theoretically discover that larger uni-modal representation margins and more reliable integration for modalities are essential components for achieving higher robustness. This discovery can further explain the limitation of multi-modal robustness and the phenomenon that multi-modal models are often vulnerable to attacks on the specific modality. Moreover, our analysis reveals how the widespread issue, that the model has different preferences for modalities, limits the multi-modal robustness by influencing the essential components and could lead to attacks on the specific modality highly effective. Inspired by our theoretical finding, we introduce a training procedure called \textit{Certifiable Robust Multi-modal Training} (CRMT), which can alleviate this influence from modality preference and explicitly regulate essential components to significantly improve robustness in a certifiable manner. Our method demonstrates substantial improvements in performance and robustness compared with existing methods. Furthermore, our training procedure can be easily extended to enhance other robust training strategies, highlighting its credibility and flexibility. The code is available at https://github.com/GeWu-Lab/Certifiable-Robust-Multi-modal-Training. 

\end{abstract}

\vspace{-4mm}
\section{Introduction}
\vspace{-1mm}
As data are often presented from different perspectives, like text, images, and audio, how to effectively exploit and integrate information from multiple sources becomes important. This has given rise to the concept of multi-modal learning, which serves as a potent approach that enables a more comprehensive understanding of complex concepts and facilitates more effective knowledge acquisition for different sources of information \citep{wei2022learning}.
Nowadays, multi-modal learning has demonstrated its remarkable ability in various tasks, including scene understanding~\citep{antol2015vqa, yang2022avqa, li2022learning}, and emotion recognition~\citep{tripathi2018multi,chudasama2022m2fnet}.

\vspace{-1mm}
However, real-world data are often perturbed, such as attacks, and missing modality, which impacts the performance of multi-modal model~\citep{kumar2020finding}. 
As a result, multi-modal robustness, which refers to the model's ability to defend against such perturbations, has received increasing attention in recent studies~\citep{bednarek2020robustness, vishwamitra2021understanding}. 
Unlike data from a single modality, multi-modal data can be perturbed across all modalities. Therefore, for robustness, multi-modal models should have the ability to resist attacks on both individual and multiple modalities.
Nevertheless, experiments have suggested that multi-modal models could perform badly when encountering perturbation~\citep{noever2021reading} or missing modality~\citep{yu2020investigating, ma2022multimodal}. 
Based on these observations about the vulnerability of the multi-modal model, how to achieve a robust multi-modal model arises and attaches more focus. 

\vspace{-1mm}
To address this, previous methods improve the training strategies to obtain a robust multi-modal model~\citep{liang2021multibench, ding2021multimodal}. Specifically, certain studies extend uni-modal robust training strategies, like adversarial training and mixup, to learn a discriminative multi-modal decision boundary~\citep{li2022adversarial, maheshwari2023missing}. Others take steps like cross-modal alignment~\citep{tian2021can} to enhance the connection among modalities and obtain compact and robust multi-modal representation. 
However, even if they empirically improve robustness to some degree, they still lack in-depth theoretical analysis to understand the resilience of multi-modal models to perturbations, which is vital for safety-critical applications. More importantly, a universal phenomenon can be observed for these robust training methods that the adversarial attack on specific modalities could be more effective than others. As shown in \autoref{teasor}, the $\ell_2$-PGD attack is more effective on modality \#$a$ than modality \#$v$ for both Joint Training and three widely used robust training methods on the Kinetics Sounds dataset. 

\vspace{-1mm}
\begin{wrapfigure}{rhtp!}{0.45\linewidth}
\vspace{-7.2mm}
\includegraphics[width=\linewidth]{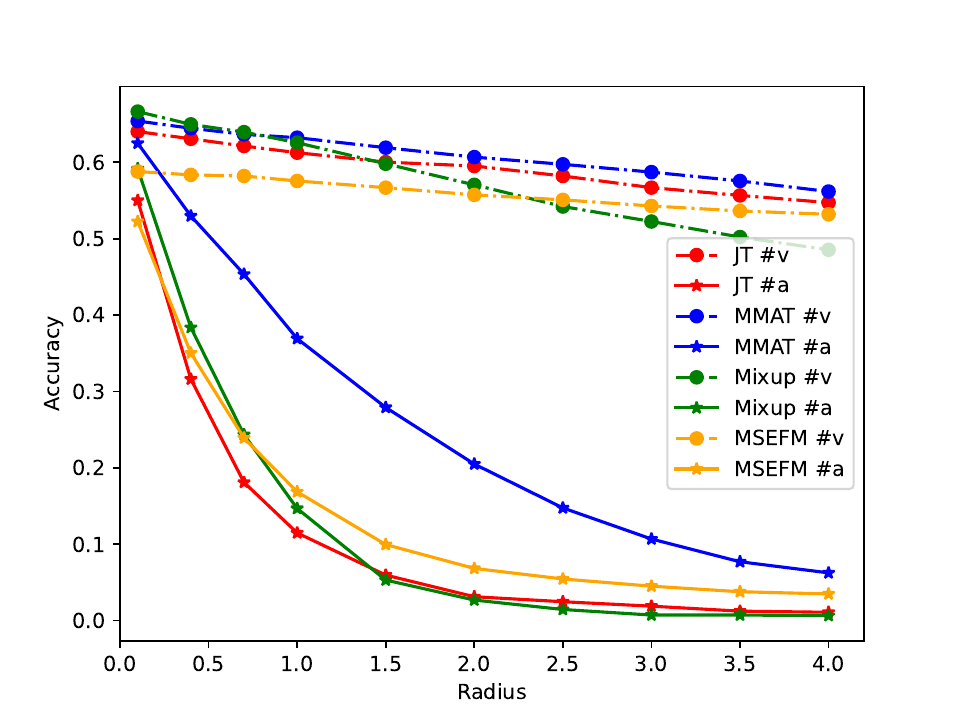}
\vspace{-7mm}
\caption{Accuracy of different multi-modal robust training methods compared with Joint Training (JT) baseline under $\ell_2$-PGD attack with a range of radius for modality \#$v$ (vision) and \#$a$ (audio) respectively on Kinetics Sounds dataset. Results show that all these methods are more vulnerable to attacks on the specific modality \#$a$.}
\vspace{-5mm}
  \label{teasor}
\end{wrapfigure}

To deeply understand multi-modal robustness and elucidate this phenomenon, we first depict the multi-modal decision boundary by integrating uni-modal representation margins. Then, we derive a lower bound for the perturbation radius that the multi-modal model can consistently defend against. We discover that larger uni-modal margins coupled with reasonable integration are crucial for enhanced multi-modal robustness.
With the above theoretical results, we investigate the pervasive issue~\citep{wang2020makes, peng2022balanced} that multi-modal models exhibit a pronounced preference for a particular modality. This preferred modality profoundly influences the model's decision, then resulting in the above robustness phenomenon.
On one hand, when a specific preferred modality is substantial enough to rely upon, the model will be reluctant to learn from other modalities, leading to the imbalance problem~\citep{huang2022modality, wu2022characterizing}. This imbalance problem hinders the enhancement of the uni-modal margin on the other modality, thus limiting the robustness. On the other hand, since the multi-modal model heavily relies on the preferred modality, the corresponding factor used in modality integration becomes larger, which will amplify the variation of the uni-modal margin in decision-making. Hence, in case the preferred modality is vulnerable, the multi-modal model becomes more vulnerable to multi-modal attack. Further, this preference for vulnerable modality makes attacks on the preferred modality significantly more effective than other ones, explaining the observation in~\autoref{teasor}. 

\vspace{-1mm}
Since the essential components of robustness have an interrelation with each other, directly applying regulation~is hard to guarantee higher robustness. 
To address this, we employ an orthogonal-based framework that formulates an alternative bound, which eliminates the interrelation and explicitly presents the integration. 
Building upon our theoretical analysis, we introduce a two-step \textit{Certifiable Robust Multi-modal Training} (CRMT) procedure to ensure progressively superior robustness.
Initially, we redefine uni-modal margins related to the reliability of the modality in this framework. Then we propose to regulate the unreliable modality by enlarging its margin, which can alleviate the imbalanced problem brought by modality preference. 
Further, our approach adjusts the integration of modalities considering the improvement of certified bound. These steps not only mitigate the large gap between robustness against attack on each modality but also credibly guarantee higher multi-modal robustness. 
To validate our method, we conduct extensive experiments to present the advanced robustness against both uni-modal and multi-modal attacks. Our contributions are as~follows:
\vspace{-2mm}
\begin{enumerate}
    \item We focus on a commonly used multi-modal model, offering invaluable insights into the essential components influencing multi-modal robustness.
\vspace{-1mm}
    \item We present analyses highlighting how multi-modal preference limits the multi-modal robustness and contributes to the vulnerability of multi-modal models towards specific modalities.
\vspace{-5mm}
    \item Drawing from our theoretical findings, we introduce a two-step training procedure, alleviating the limitation brought by modality preference. Our method can effectively enhance both performance and robustness over three real-world multi-modal datasets.
\end{enumerate}

\vspace{-2mm}

\vspace{-2mm}
\section{Related work}

\vspace{-1mm}

\paragraph{Multi-modal Robustness Analysis.}
Recent studies have highlighted the vulnerability of deep neural networks (DNNs) to attacks and perturbations~\citep{goodfellow2014explaining, madry2017towards}, raising significant concerns regarding their deployment. With the presence of multiple modalities, the forms of perturbations include uni-modal attacks,  multi-modal attacks~\citep{schlarmann2023adversarial}, and modality missing~\citep{lee2023multimodal}.
One possible way to enable the models to resist these perturbations is to design robust training strategies to obtain a reliable decision boundary that is distanced from samples~\citep{li2021audio}. Certain methods, such as multi-modal adversarial training~\citep{li2022adversarial} and incorporating uni-modal tasks~\citep{ma2022multimodal}, have proven pertinent in this context.
On the other hand, studies focus on the latent consistency among modalities and propose to enhance modality interaction like alignment~\citep{tsai2018learning} to obtain compact and robust representation~\citep{bednarek2020robustness}. Among these empirical works, we identify a universal phenomenon that multi-modal models are commonly vulnerable to a certain modality~\citep{liang2021multibench}, which currently lacks a comprehensive theoretical explanation. This motivates us to theoretically figure out the essential components determining the multi-modal robustness and explain the vulnerable modality phenomenon.

\vspace{-2mm}
\paragraph{Multi-modal imbalance problem.}
Multi-modal learning is a significant approach for achieving a richer understanding across various tasks~\citep{baltruvsaitis2018multimodal, jiang2021review}. 
However, even with the potential for a more comprehensive latent representation from multi-modal models~\citep{huang2021makes}, their performance might not always surpass that of the best uni-modal counterparts~\citep{wang2020makes, huang2022modality}. Due to the inherent differences between modalities, a multi-modal model might prefer or lean towards a modality that is easier to learn, potentially overlooking others~\citep{wu2022characterizing}. In response, various strategies are proposed to strengthen the learning of uni-modalities, thereby enhancing the generalization of multi-modal model~\citep{wang2020makes, peng2022balanced, fan2023pmr, xu2023mmcosine}. 
However, these researches do not elucidate how this modality preference problem impacts robustness against perturbations, which is our main focus.

\vspace{-2mm}

\paragraph{Certified robustness.}

To describe the robustness of the model, certified robustness is introduced to describe the size of permissible perturbations that a model can consistently defend against. 
Techniques such as randomized smoothing~\citep{cohen2019certified, rosenfeld2020certified, yang2021certified, salman2019provably} and interval bound propagation~\citep{zhang2018efficient, lyu2021towards, xu2020automatic} are widely utilized to relax the model, which assists in determining the certificate bounds. Additionally, the Lipschitz constant can serve as an intuitive metric to illustrate how perturbations influence the decision of models~\citep{weng2018towards, leino2021globally}.
However, these methods are tailored for inputs with only a single modality but fail to focus on the primary concerns with multiple modalities.
In our research, we establish certified robustness for multi-modal models and determine the essential components influenced by modality preference that limit the robustness.

% \vspace{-1mm}

\section{Method}
% \vspace{-1mm}
\subsection{Preliminaries}

% \vspace{-1mm}
\paragraph{Multi-modal Framework. }

We consider a general $K$-way classification problem with a vectorized input sample $\bm{x} = (\bm{x}^{(1)},\bm{x}^{(2)}) \in \mathbb{R}^{d_1 + d_2}$ consisting of two modalities, and a ground truth label $y \in [K]$. We consider the commonly used joint learning framework in multi-modal learning, where all of the modalities are projected into the shared space for downstream tasks~\citep{huang2022modality}. Concretely, the uni-modal representations are extracted by encoders $\phi^{(m)}$ and concatenated to form the joint representation, which is then mapped to the output space using a linear classifier, where $W \in \mathbb{R}^{K \times (\mathrm{dim}(\phi^{(1)}) +  \mathrm{dim}(\phi^{(2)})) }$ and $\bm{b} \in \mathbb{R}^K $ are the weight matrix and bias respectively, and $\mathrm{dim}(\cdot)$ represents the dimension. 
The logits output of the multi-modal model can be denoted as $h(\bm{x}) = W [\phi^{(1)} (\bm{x}^{(1)}); \phi^{(2)} (\bm{x}^{(2)})] + \bm{b}= W^{(1)} \phi^{(1)} (\bm{x}^{(1)}) + W^{(2)} \phi^{(2)} (\bm{x}^{(2)}) + \bm{b}$, where $W^{(m)} \in \mathbb{R}^{K \times \mathrm{dim}(\phi^{(m)})} $ is the part of classifier $W$ related to the $m$-th modality.

\vspace{-1mm}

\begin{figure*}[t]
\vspace{-2mm}
  \centering
  \setlength{\abovecaptionskip}{3mm}
  \setlength{\belowcaptionskip}{-4mm} 
  
  \includegraphics[width=\textwidth]{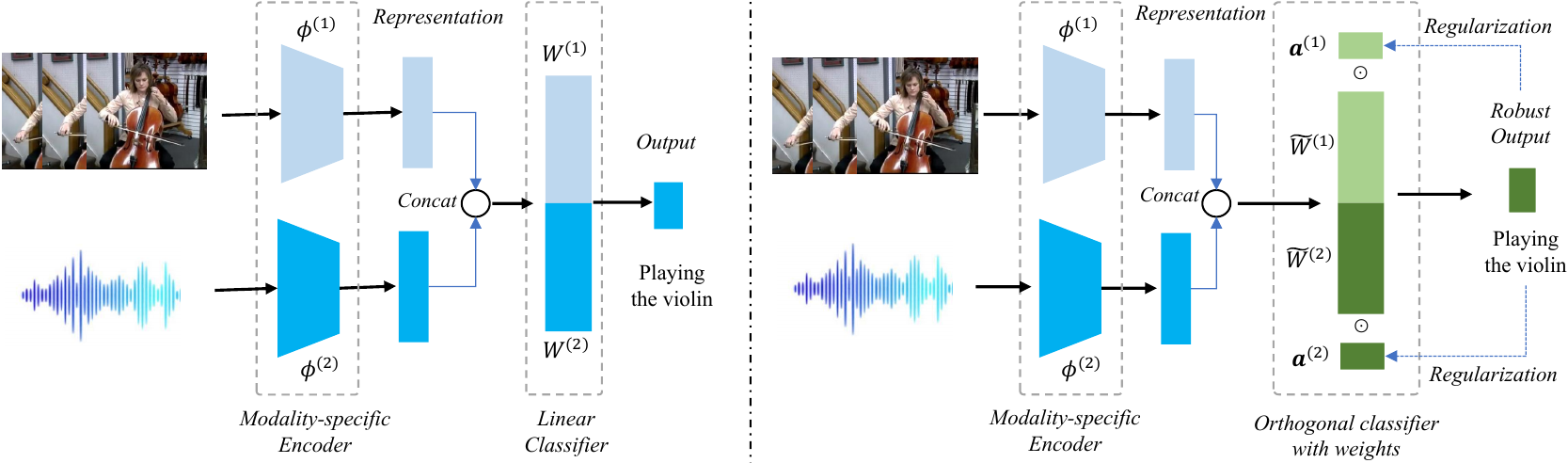}
  \caption{Illustration of traditional multi-modal joint learning framework ~\citep{baltruvsaitis2018multimodal} (left) and our framework introducing orthogonality into each uni-modal classifier (right). Our framework can be easily applied to explicit regularization to achieve larger certified robustness. 
    }
\vspace{-2mm}
  \label{method}
\end{figure*}

\vspace{-1mm}

\paragraph{Multi-modal robustness.}

% To determine the robustness of a multi-modal model, we aim to quantify the radius of the minimum perturbation required to alter the sample to the decision boundary. Then, we can achieve certified robustness within the range of perturbation smaller than this radius. 
To assess the certified robustness of a multi-modal model, we can measure the radius of the smallest perturbation that shifts a sample to the decision boundary. In other words, any perturbation that falls within this radius can be defended against.
\begin{definition}
Suppose the multi-modal model $h$ can correctly classify the sample $\bm{x}$, \textit{i.e.} $\forall k\neq y, h_y(\bm{x}) > h_k(\bm{x})$, where $h_k$ denotes the logit score of the $k$-th class. The robustness radius of the multi-modal model towards the sample $\bm{x}$ can be defined as:
% and perturbed data $\bm{x}'$, 
\begin{equation}
\begin{aligned}
    P(\bm{x}) = \min_{\bm{x}'} \left\|\bm{x} - \bm{x}' \right\|_2 \quad \quad
    s.t. ~\exists j \neq y,   h_y (\bm{x}') = h_j (\bm{x}').
\end{aligned}
\label{mm_robustness}
\end{equation}
\end{definition}
\vspace{-3mm} 
\autoref{mm_robustness} seeks to identify the smallest perturbation, denoted as $\bm{x} - \bm{x}'$, that results in reaching the decision boundary between the ground truth $y$ and its nearest class $j$. 
Hence, any smaller perturbation can always be defended against. 
% Larger $P(\bm{x})$ in \autoref{mm_robustness} suggests the multi-modal model is more robust to perturbation on $\bm{x}$.
% The condition in \autoref{mm_robustness} denotes that the perturbed sample $\bm{x}'$ lands on the decision boundary between ground truth $y$ and the other closest class $j$, which is the . 
% In the following, we use $j$ to represent the category in the condition corresponding to $\bm{x}'$.
% The radius targets only samples that could be correctly classified, which equals to $\forall k\neq y, h_y(\bm{x}) > h_k(\bm{x})$. 
Here, we use $\ell_2$-norm to measure the radius, reflecting the overall size of the perturbation~\citep{tsuzuku2018lipschitz, carlini2017towards}. 

 % provides evidence for the multi-modal model to defend against perturbations within the radius $P(\bm{x})$, and
\vspace{-1mm}
\subsection{Certified robustness for Multi-modal Model}

\vspace{-1mm}
In this section, we endeavor to uncover the essential components that impact multi-modal robustness.
Referring to \autoref{teasor}, there is a notable variation in the effectiveness of perturbations on different modalities. This observation prompts us to distinguish the differences within each uni-modality. Therefore, we introduce the concept of \textit{uni-modal margin}, which quantifies the distance between a uni-modal representation and the uni-modal decision boundary. To elaborate further, $(W^{(m)}_{y\cdot} - W^{(m)}_{k\cdot})\phi(\bm{x}^{(m)})=0$ signifies that sample $\bm{x}^{(m)}$ is positioned on the decision boundary between classes $y$ and $k$ for $m$-th modality. The uni-modal representation margin can be defined as:
\vspace{-1mm}
\begin{definition}
    Given the uni-modal encoder $\phi^{(m)}$ and the classifier $W^{(m)}$, the margin on representation space between ground truth $y$ and other label $k$ is defined as:
    \begin{equation}
    \zeta_{k}^{(m)}(\bm{x}^{(m)}) = \frac{(W^{(m)}_{y\cdot} - W^{(m)}_{k\cdot})  \phi^{(m)} (\bm{x}^{(m)})}{\left\|W^{(m)}_{y\cdot} - W^{(m)}_{k\cdot}\right\|_2}.  
\end{equation}
\end{definition}
\vspace{-4mm}
In this context, a larger margin indicates the uni-modality is more reliable in distinguishing these two classes. Using this margin definition, we can re-examine the constraint conditions in \autoref{mm_robustness}, which outline the perturbed sample located on the multi-modal decision boundary where $y$ and the closest class $j$ are tied in the output space:
% In this regard, a larger margin indicates a clearer separation between two classes, suggesting that the representation is well learned~\citep{chen2012large,liu2019fair, de2023large}. With this definition of margin, we can reconsider the constraint conditions in \autoref{mm_robustness}, which describe the decision boundary that have the tie of ground truth $y$ and the runner-up class $j$ in the output space: 
% Directly solving the minimum perturbation as described in \autoref{mm_robustness} is difficult when the encoders $\phi^{(m)}$ is complicated. Consequently, we aim to re-consider the constraint conditions in \autoref{mm_robustness} which describe the tie of class $y$ and $j$ in the output space. 
% Specifically, given the runner-up class $j$, we introduce the uni-modal margin $\zeta_j^{(m)}$ to describe the perturbed sample $\bm{x}'$ on the decision boundary, which can be expressed as follows:

\vspace{-4mm}
\begin{equation}
\begin{aligned}
   h_y(\bm{x}') - h_j(\bm{x}’) = c_j^{(1)} \zeta_{j}^{(1)}(\bm{x}'^{(1)}) + c_j^{(2)} \zeta_{j}^{(2)}(\bm{x}'^{(2)}) + \beta_j = 0.
\end{aligned}
\label{db_eq5}
\end{equation}
\vspace{-4mm}

Inspired by \autoref{db_eq5}, we observe that the multi-modal decision boundary can be described as integration of different uni-modal margins of the perturbed sample with factors $c_j^{(m)} = \|W^{(m)}_{y\cdot} - W^{(m)}_{j\cdot}\|_2$ and constant term $\beta_j = b_y - b_j$.
 %这让我们发现不同单模态间隔被用facotr来integrate了，
% We define the integration factors  which represent the modality-specific weights that integrate different uni-modal representation margin.
% The factor value shows the extent to which changes in the uni-modal margin affect multi-modal decisions. To perturb a sample $\bm{x}$ to a decision boundary, varying the uni-modal margin corresponds with the larger factor $c_j^{(m)}$ is more effective.  
The integration factors $c_j^{(m)}$ quantify how variations in the uni-modal margin influence multi-modal decisions. When perturbing a sample $\bm{x}$ towards the decision boundary, a larger factor $c_j^{(m)}$ indicates altering the margin of the corresponding modality is more effective.
% When the factor of certain uni-modality is large, the change in margin will be amplified, thus .
% factor值的大小显示了单模态间隔变化如何影响多模态决策的程度；
% 同时，考虑扰动的大小时，我们同样需要考虑单模态的鲁棒性对其的影响。
Meanwhile, when the sample is perturbed, how the uni-modal margin varies also depends on the size of the perturbation.
Thus, we propose to introduce the Lipschitz constant $\tau_{j}^{(m)}$ for the uni-modal margin. This Lipschitz constant is the smallest constant to limit the local variation range of a certain function~\citep{finlay2018improved}, which is given by:

\vspace{-2mm}
\begin{equation}
\begin{aligned}
    |\zeta_{j}^{(m)}(\bm{x}^{(m)}) -\zeta_{j}^{(m)}(\bm{x}'^{(m)})|  \leq  \tau_{j}^{(m)}\left\|\bm{x}^{(m)} - \bm{x}'^{(m)}\right\|_2.
\end{aligned}
\label{Lip_0}
\end{equation}

Then, we can provide the lower bound for the perturbation radius for multi-modal robustness.
\begin{theorem} 
Given an input $\bm{x}$ with ground-truth label $y \in [K]$ and the closest label $j \neq y$, $\zeta_{j}^{(m)}(\bm{x}^{(m)})$ as the representation margin for $m$-th modality with Lipschitz constraint $\tau_j^{(m)}$, and the integration factor $c^{(m)}_j$. Define $\bm{x}'$ as the perturbed sample, and $\bm{x} - \bm{x}'$ as the perturbation. The lower bound for the perturbation radius can be described as:
% Define $\bm{x}' = (\bm{x}'^{(1)}, \bm{x}'^{(2)})$ as the perturbed sample, and $\bm{x} - \bm{x}'$ as the perturbation.
\begin{equation}
\begin{aligned}
 P(\bm{x})& = \min_{\bm{x}'} \left\|\bm{x} - \bm{x}'\right\|_2 
 \geq \frac{c_j^{(1)} \zeta_{j}^{(1)}(\bm{x}^{(1)}) + c_j^{(2)} \zeta_{j}^{(2)}(\bm{x}^{(2)})+ \beta_j}{\sqrt{(c_j^{(1)} \tau_j^{(1)})^2 +(c_j^{(2)} \tau_j^{(2)})^2 }} \\
  where &  \quad j \neq y \quad s.t. \quad  c_j^{(1)} \zeta_{j}^{(1)}(\bm{x}'^{(1)}) + c_j^{(2)} \zeta_{j}^{(2)}(\bm{x}'^{(2)}) + \beta_j = 0.
\end{aligned}
\label{ori_robustness}
\end{equation}
\end{theorem}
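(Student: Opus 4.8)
The plan is to prove the bound for a single fixed candidate class $j$ and then specialize to the closest one. First I would fix an arbitrary perturbed point $\bm{x}'$ lying on the $y$-versus-$j$ decision boundary, i.e. one satisfying the constraint $c_j^{(1)}\zeta_j^{(1)}(\bm{x}'^{(1)}) + c_j^{(2)}\zeta_j^{(2)}(\bm{x}'^{(2)}) + \beta_j = 0$ from \autoref{db_eq5}. Substituting the resulting expression for $\beta_j$ into the clean-point quantity $h_y(\bm{x}) - h_j(\bm{x}) = c_j^{(1)}\zeta_j^{(1)}(\bm{x}^{(1)}) + c_j^{(2)}\zeta_j^{(2)}(\bm{x}^{(2)}) + \beta_j$ cancels the common bias and rewrites the numerator purely as the weighted sum of per-modality margin differences $\sum_m c_j^{(m)}\big(\zeta_j^{(m)}(\bm{x}^{(m)}) - \zeta_j^{(m)}(\bm{x}'^{(m)})\big)$. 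By the correct-classification hypothesis this numerator equals $h_y(\bm{x}) - h_j(\bm{x}) > 0$, so the bound I am heading for is non-vacuous.

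Next I would control each margin difference via the Lipschitz property \autoref{Lip_0}, so that $|\zeta_j^{(m)}(\bm{x}^{(m)}) - \zeta_j^{(m)}(\bm{x}'^{(m)})| \le \tau_j^{(m)}\|\bm{x}^{(m)} - \bm{x}'^{(m)}\|_2$, bounding the numerator above by $\sum_m c_j^{(m)}\tau_j^{(m)}\|\bm{x}^{(m)} - \bm{x}'^{(m)}\|_2$. Reading this as the inner product of the vectors $(c_j^{(1)}\tau_j^{(1)}, c_j^{(2)}\tau_j^{(2)})$ and $(\|\bm{x}^{(1)} - \bm{x}'^{(1)}\|_2, \|\bm{x}^{(2)} - \bm{x}'^{(2)}\|_2)$, I would apply Cauchy--Schwarz to split off the factor $\sqrt{(c_j^{(1)}\tau_j^{(1)})^2 + (c_j^{(2)}\tau_j^{(2)})^2}$. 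The leftover factor $\sqrt{\|\bm{x}^{(1)} - \bm{x}'^{(1)}\|_2^2 + \|\bm{x}^{(2)} - \bm{x}'^{(2)}\|_2^2}$ is exactly $\|\bm{x} - \bm{x}'\|_2$, because the two modalities are concatenated into a single input vector and the squared $\ell_2$ norm therefore splits additively across the two blocks. Rearranging then gives $\|\bm{x} - \bm{x}'\|_2 \ge \big(c_j^{(1)}\zeta_j^{(1)}(\bm{x}^{(1)}) + c_j^{(2)}\zeta_j^{(2)}(\bm{x}^{(2)}) + \beta_j\big)\big/\sqrt{(c_j^{(1)}\tau_j^{(1)})^2 + (c_j^{(2)}\tau_j^{(2)})^2}$ for every such $\bm{x}'$, which lower bounds the distance from $\bm{x}$ to the $y$-versus-$j$ boundary.

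Finally I would pass from this per-class statement to $P(\bm{x})$. Since the inequality holds uniformly over all boundary points of the $y$-versus-$j$ pair, it bounds $\mathrm{dist}(\bm{x}, \{h_y = h_j\})$ from below; choosing $j$ to be the closest label — the one realizing the minimum in \autoref{mm_robustness} — identifies this distance with $P(\bm{x})$ and yields the claimed bound. I expect the only genuinely delicate point to be this last reduction: the minimization defining $P(\bm{x})$ ranges over all $\bm{x}'$ for which \emph{some} $j\neq y$ ties with $y$, so I must argue that $P(\bm{x}) = \min_{j\neq y}\mathrm{dist}(\bm{x}, \{h_y = h_j\})$ and then instantiate the per-$j$ bound at the minimizing $j$. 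The Lipschitz and Cauchy--Schwarz steps are routine, and the norm decomposition is immediate from concatenation, so the conceptual care lies entirely in handling the existential quantifier over the target class.
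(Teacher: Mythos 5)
Your proof is correct, and its skeleton --- rewriting the clean-point quantity $h_y(\bm{x})-h_j(\bm{x})$ as a sum of per-modality margin differences via the boundary constraint, applying the per-modality Lipschitz bounds, then Cauchy--Schwarz together with the block decomposition of the concatenated $\ell_2$ norm --- is exactly the paper's. The one genuine difference is that you dispense with the step the paper labors over most. Because your chain only needs the one-sided estimate $\sum_m c_j^{(m)}\bigl(\zeta_{j}^{(m)}(\bm{x}^{(m)})-\zeta_{j}^{(m)}(\bm{x}'^{(m)})\bigr)\le\sum_m c_j^{(m)}\bigl|\zeta_{j}^{(m)}(\bm{x}^{(m)})-\zeta_{j}^{(m)}(\bm{x}'^{(m)})\bigr|$, which holds for \emph{every} feasible $\bm{x}'$ simply because $c_j^{(m)}\ge 0$ and $t\le|t|$, you never need to determine the sign of the individual margin differences. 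The paper instead proves, by a contradiction argument invoking the intermediate value theorem, that at the minimizing perturbation $\zeta_{j}^{(m)}(\bm{x}^{(m)})\ge\zeta_{j}^{(m)}(\bm{x}'^{(m)})$ holds for each modality separately, so that the sum of absolute differences coincides with the signed sum. That auxiliary lemma would be needed to make the intermediate inequality an equality (i.e., to show the bound is not lost at that step), but it is superfluous for the stated lower bound, so your version is the leaner one. Your handling of the quantifier over the target class --- bounding $\mathrm{dist}(\bm{x},\{h_y=h_j\})$ uniformly for each $j$ and then instantiating at the closest class, using that the feasible set in \autoref{mm_robustness} is the union over $j\neq y$ of the pairwise boundaries --- is also sound, and is in fact more explicit than the paper, which fixes $j$ throughout without comment.
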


% \vspace{-1mm}

The detailed proof can be found in the Appendix 7.1. Based on the certified bound above, we deduce that the multi-modal robustness depends on three primary factors: uni-modal representation margins $\zeta_j^{(m)}$, integration $c_j^{(m)}$, and bias difference $\beta_j$.
Firstly, robustness increases proportionally with the enhancement of the uni-modal representation margin $\zeta_j^{(m)}$. Secondly, the integration $c_j^{(m)}$ is related to both uni-modal margins and the uni-modal Lipschitz constant, the expected integration for robustness requires considering both. Thus, a reasonable choice of integration factor can benefit higher robustness. 
Thirdly, when the sample is perturbed, the bias difference term $\beta_j$ stays invariant, since it depends on class $y$ and $j$ rather than the specific sample. Based on this fact, the bias difference is not considered in our analysis of model robustness. 
% and influences small to the robustness.
% falls outside the scope of our consideration.
In a nutshell, we recognize that uni-modal margins and the integration of modalities are two essential components.
In the following section, we will analyze how these essential components vary and influence the robustness of multi-modal models, especially under modality preference. 

\subsection{Analysis about modality preference for robustness}

Since modalities have different amounts of information, some specific modalities might hold more significance than others in decision-making processes~\citep{gat2021perceptual}. As a result, it is widely recognized that multi-modal models tend to show a preference for and predominantly rely on specific modality~\citep{huang2022modality, wu2022characterizing}. However, this preference or over-reliance on the specific modality poses challenges for achieving a robust multi-modal model. We delve deeper into how such preferences impact the two essential components of the certified bound in \autoref{ori_robustness}.

\vspace{-3mm}
\paragraph{Uni-modal representation margin.}  As shown in \autoref{ori_robustness}, a larger uni-modal margin $\zeta_{j}^{(m)}(\bm{x}^{(m)})$ determines higher certified robustness. However, when the learned information in the preferred modality is sufficiently reliable, the multi-modal model is reluctant to learn more information from other modalities~\citep{wu2022characterizing, huang2022modality}. Thus, the modality preference leads to an imbalance problem hindering the development of uni-modal representations, resulting in a narrower representation margin and ultimately constraining the certified robustness of the multi-modal model.

\vspace{-3mm}

\paragraph{Integration of modalities.} 
% We explain the influence of the integration by modality preference and limit the robustness. 
%1. 偏好导致偏好模态权重大
%2. 权重大使得 amplify 单模态攻击对多模态的效果。
%偏好到鲁棒模态是可以接受的；而偏好到脆弱模态会
%3. 可能偏好某个更脆弱的单模态，导致模型不鲁棒
%单模态？？
According to modality preference, the decision of the multi-modal model highly depends on a specific modality. Thus, considering the integration of modalities, the preferred modality contributes more and is allocated a larger integration factor $c_j^{(m)}$, which could amplify the variation of the uni-modal margin in multi-modal decision-making. Since the preference is only determined by whether the modality with ideal discriminative ability, the multi-modal model could prefer a vulnerable modality, which has a larger $\tau_j^{(m)}$. Thus, the perturbation for this preferred but vulnerable modality leads to larger variations in multi-modal margins, which is further amplified in decision-making. Motivated by this phenomenon, we define $\eta^{(m)} = c_j^{(m)} \tau_j^{(m)}$ as the vulnerability indicator of modality $m$. When the multi-modal model exhibits a preference for a vulnerable modality $1$, there is a significant imbalance in this indicator, with $\eta^{(1)} \gg \eta^{(2)}$. Consequently, we observe:

\begin{equation}
P(\bm{x}) \geq \frac{c_j^{(1)} \zeta_{j}^{(1)}(\bm{x}^{(1)}) + c_j^{(2)} \zeta_{j}^{(2)}(\bm{x}^{(2)})+ \beta_j}{\sqrt{( \eta^{(1)})^2 +(\eta^{(2)})^2 }} \approx \frac{c_j^{(1)} \zeta_{j}^{(1)}(\bm{x}^{(1)}) + c_j^{(2)} \zeta_{j}^{(2)}(\bm{x}^{(2)})+ \beta_j}{\eta^{(1)} }. 
\label{approx_uni-modal}
\end{equation}

That is to say, the modality preference on vulnerable modality leaves an unreasonable imbalance on $\eta^{(m)}$, thus the multi-modal robustness is highly dependent on the modality with a larger vulnerability indicator. In this way, distinctly attacking the vulnerable modalities alone is enough to obscure the model. Here we further provide the multi-modal robustness under uni-modal attack case:
%Furthermore, we retrospect the vulnerable modality problem in \autoref{teasor} by determining the robustness lower bound against uni-modal attacks:  
\begin{proposition}
Following the setting in Theorem 3.3, \emph{w.l.o.g.} considering the perturbation on $1$-th modality $\bm{x}'^{(1)}$, the lower bound for the uni-modal perturbation radius can be described as: 
\begin{equation}
\begin{aligned}
 \min_{\bm{x}'^{(1)}}& \left\|\bm{x}^{(1)} - \bm{x}'^{(1)} \right\|_2 
 \geq \frac{c_j^{(1)} \zeta_{j}^{(1)}(\bm{x}^{(1)}) + c_j^{(2)} \zeta_{j}^{(2)}(\bm{x}^{(2)})+ \beta_j}{c_j^{(1)} \tau_j^{(1)}} \\
  where &  \quad j \neq y \quad s.t. \quad  c_j^{(1)} \zeta_{j}^{(1)}(\bm{x}'^{(1)}) + c_j^{(2)} \zeta_{j}^{(2)}(\bm{x}^{(2)}) + \beta_j = 0.
\end{aligned}
\label{uni-modal-perturbation}
\end{equation}
\end{proposition}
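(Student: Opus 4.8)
The plan is to mirror the argument behind Theorem 3.3, specialized to the case where the second modality is frozen at $\bm{x}^{(2)}$, so that only $\bm{x}'^{(1)}$ is free to move. Because the second modality does not change, its contribution to the shift of the decision boundary vanishes, and the Cauchy--Schwarz step that produced the square-root denominator in \autoref{ori_robustness} collapses to a single term, which is exactly why the uni-modal denominator is $c_j^{(1)}\tau_j^{(1)}$ rather than $\sqrt{(c_j^{(1)}\tau_j^{(1)})^2+(c_j^{(2)}\tau_j^{(2)})^2}$.

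First I would start from the decision boundary constraint in \autoref{db_eq5}, which for a feasible uni-modal perturbation reads $c_j^{(1)} \zeta_{j}^{(1)}(\bm{x}'^{(1)}) + c_j^{(2)} \zeta_{j}^{(2)}(\bm{x}^{(2)}) + \beta_j = 0$. Subtracting this from the correct-classification gap $h_y(\bm{x}) - h_j(\bm{x}) = c_j^{(1)} \zeta_{j}^{(1)}(\bm{x}^{(1)}) + c_j^{(2)} \zeta_{j}^{(2)}(\bm{x}^{(2)}) + \beta_j$ cancels the (unchanged) second-modality margin together with the bias, leaving the single-modality identity $h_y(\bm{x}) - h_j(\bm{x}) = c_j^{(1)}\bigl(\zeta_{j}^{(1)}(\bm{x}^{(1)}) - \zeta_{j}^{(1)}(\bm{x}'^{(1)})\bigr)$.

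Next I would invoke the Lipschitz property \autoref{Lip_0} for the first modality, namely $|\zeta_{j}^{(1)}(\bm{x}^{(1)}) - \zeta_{j}^{(1)}(\bm{x}'^{(1)})| \leq \tau_j^{(1)}\,\|\bm{x}^{(1)} - \bm{x}'^{(1)}\|_2$, and combine it with the identity above. Since $h$ correctly classifies $\bm{x}$, the gap $h_y(\bm{x}) - h_j(\bm{x})$ is strictly positive, so I may drop the absolute value and rearrange to obtain $\|\bm{x}^{(1)} - \bm{x}'^{(1)}\|_2 \geq \bigl(h_y(\bm{x}) - h_j(\bm{x})\bigr)/\bigl(c_j^{(1)}\tau_j^{(1)}\bigr)$, which becomes the claimed bound after re-expanding the gap in terms of $\zeta_j^{(1)}$, $\zeta_j^{(2)}$ and $\beta_j$. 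Because the right-hand side depends only on $\bm{x}$ and not on the particular feasible $\bm{x}'^{(1)}$, the inequality is preserved under the minimization over $\bm{x}'^{(1)}$.

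I do not anticipate a genuine obstacle, since the statement is a direct specialization of Theorem 3.3. The only points that require care are the sign of the numerator, which must be guaranteed positive by the correct-classification hypothesis for the bound to be meaningful, and the bookkeeping observation that freezing modality $2$ eliminates one term from the denominator; this is precisely the structural feature that makes the attainable uni-modal radius smaller than the joint radius and underpins the vulnerability discussion surrounding \autoref{approx_uni-modal}.
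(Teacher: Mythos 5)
Your proposal is correct and follows exactly the route the paper intends: Proposition 3.4 is the specialization of the Theorem 3.3 argument (Appendix 7.1) obtained by freezing $\bm{x}^{(2)}$, so the boundary constraint cancels the second-modality margin and the bias, the first-modality Lipschitz bound replaces the Cauchy--Schwarz step, and the resulting denominator is the single term $c_j^{(1)}\tau_j^{(1)}$. You also correctly handle the one point needing care: the sign condition $\zeta_j^{(1)}(\bm{x}^{(1)}) \geq \zeta_j^{(1)}(\bm{x}'^{(1)})$, which in the full multi-modal proof requires a contradiction argument via the intermediate value theorem, here follows immediately from the positivity of $h_y(\bm{x})-h_j(\bm{x})$ and the single-term identity, so the absolute value can be dropped directly.
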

\vspace{-2mm}

% Moreover, we can observe the lower bound in \autoref{uni-modal-perturbation} corresponds with the approximation \autoref{approx_uni-modal}. 
% That is to say, the modality preference on vulnerable modality leaves an unreasonable imbalance on $\eta^{(m)}$, thus the multi-modal robustness is highly dependent on the vulnerable modality. 
The lower bounds of different uni-modal perturbations share identical numerators but differ in denominators, the vulnerability indicator $\eta^{(m)}$. The larger indicator on preferred modality reduces the lower bound for perturbations on this modality, thus making attacks on this preferred modality more effective, explaining the observation in \autoref{teasor}.

\vspace{-1mm}

\subsection{Certifiable Robust Multi-modal Training}

\vspace{-1mm}

Based on the above analysis, we target to improve the uni-modal representation margin $\zeta_j^{(m)}$ and adjust the integration of modalities $c_j^{(m)}$ as regulation for higher certified robustness. 
However, these regulations are intricately linked with the linear classifier $W$, potentially leading to conflicts in optimization objectives. As a result, stably enhancing the certified robustness presents challenges. 
Additionally, determining the Lipschitz constant for the margin is computationally demanding. 
To address these challenges, we propose to adopt orthogonality ~\citep{huang2018orthogonal} within uni-modal linear classifiers as our framework.
Detailedly, we ensure that in each modality, the class-specific vectors $\tilde{W}^{(m)}_{k\cdot}, k\in [K]$ are unit and orthogonal. 
Since the valuable information between modalities is different, we apply the weight $\bm{a}^{(m)}\in \mathbb{R}^{K}$ to lead the model focusing on more reliable modalities. Integrating these weights can enhance the model's ability to effectively utilize the valuable information from each modality.
Therefore, learning of uni-modal representations and integration of modalities can be decoupled. The score corresponding to the $k$-th class can be expressed as:
% \vspace{-2mm}
\begin{equation}
    \begin{aligned}
         \tilde{h}_k(\bm{x}) =  a_k^{(1)} \tilde{W}_{k\cdot}^{(1)} \phi^{(1)} (\bm{x}^{(1)}) +  a_k^{(2)}\tilde{W}_{k\cdot}^{(2)} \phi^{(2)} (\bm{x}^{(2)}) + \tilde{b}_k, 
    \end{aligned}
% \vspace{-2mm}
\end{equation}
% Thus, the modality-specific weights are independent of the classifier for the unit property, which can be explicitly adjusted for higher robustness. 
where $\tilde{W}^{(m)} \in \mathbb{R}^{K \times \mathrm{dim}(\phi^{(m)})}$ is the matrix with orthogonal rows, satisfying $\tilde{W}^{(m)} (\tilde{W}^{(m)})^T = I_K$. We can use $\tilde{W}_{k\cdot}^{(m)}\Phi(\bm{x}^{(m)})$ as the uni-modal score, which represents whether the uni-modal representation is well learned toward the $k$-th class. 
% With this framework in place, effectively regularizing both components becomes feasible. 
With this framework in place, we are able to express the new Lipschitz constant $\tilde{\tau}_k^{(m)}$ for the uni-modal score on class $k$ in the $m$-th modality as:
% Further, we consider are also able to consider the class-specific Lipschitz constant, which is easily extended to a pair-wise relationship since $\|\tilde{W}^{(m)}_{y\cdot} -\tilde{W}^{(m)}_{j\cdot} \|_2$ are equal, reducing the computational complexity.
% Next, let us re-evaluate the formulation of the minimum perturbation. Firstly, leveraging the orthogonal property of the classifier $\tilde{W}$, we can redefine the Lipschitz constant for a specific class. 
\vspace{-1mm}
\begin{equation}
    |\tilde{W}^{(m)}_{k\cdot} \phi^{(m)} (\bm{x}^{(m)})- \tilde{W}^{(m)}_{k\cdot} \phi^{(m)} (\bm{x}'^{(m)}) | \leq \tilde{\tau}_k^{(m)} \left\|\bm{x}^{(m)} - \bm{x}'^{(m)}\right\|_2.
\vspace{-1mm}
\end{equation}
With these definitions, we can derive the certified bound for the multi-modal perturbation radius, which is distinctly tailored to the framework employing orthogonal classifiers:
\begin{theorem} 
Given an input $\bm{x}$ with ground-truth label $y \in [K]$ and the closest label $j \neq y$, the orthogonal classifier $\tilde{W}^{(m)}$, the modality-specific weight $\bm{a}^{(m)}$, the Lipschitz constant $\tilde{\tau}_j^{(m)}$, and the difference of the bias $\tilde{\beta}_j =  \tilde{b}_y -  \tilde{b}_j$. The lower bound for the perturbation radius with the orthogonal-based framework can be described as:
% Define $\bm{x}' = (\bm{x}'^{(1)}, \bm{x}'^{(2)})$ as the perturbed sample, and $\bm{x} - \bm{x}'$ as the perturbation.
\begin{equation}
\begin{aligned}
    P(\bm{x}) \geq &\frac{\sum_{m=1}^2 \left( a_y^{(m)}\tilde{W}^{(m)}_{y\cdot} \phi^{(m)} (\bm{x}^{(m)})- a_j^{(m)}\tilde{W}^{(m)}_{j\cdot} \phi^{(m)} (\bm{x}^{(m)})\right) + \tilde{\beta}_j  }{\sqrt{\sum_{m=1}^2 \left( a^{(m)}_y \tilde{\tau}^{(m)}_y + a^{(m)}_j \tilde{\tau}^{(m)}_j\right)^2  }}  \\
  &\quad \quad  where  \quad  j \neq y, \quad s.t. \quad \tilde{h}_y(\bm{x}') = \tilde{h}_j(\bm{x}').
\end{aligned}
\label{orth_robustness}
\end{equation}
\end{theorem}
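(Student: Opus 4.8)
The plan is to mirror the argument behind Theorem 3.3 (\autoref{ori_robustness}), now exploiting the orthogonal parametrization to obtain a clean additive decomposition of the Lipschitz constant. First I would introduce the pairwise score gap $g(\bm{x}) = \tilde{h}_y(\bm{x}) - \tilde{h}_j(\bm{x})$. Since the sample is correctly classified we have $g(\bm{x}) > 0$, and expanding $\tilde{h}_y$ and $\tilde{h}_j$ shows that $g(\bm{x})$ equals exactly the numerator of \autoref{orth_robustness}: the two bias terms combine into $\tilde{\beta}_j = \tilde{b}_y - \tilde{b}_j$, and each modality contributes $a_y^{(m)}\tilde{W}^{(m)}_{y\cdot}\phi^{(m)}(\bm{x}^{(m)}) - a_j^{(m)}\tilde{W}^{(m)}_{j\cdot}\phi^{(m)}(\bm{x}^{(m)})$.

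Next I would use the boundary constraint $\tilde{h}_y(\bm{x}') = \tilde{h}_j(\bm{x}')$, i.e. $g(\bm{x}') = 0$, to write $g(\bm{x}) = g(\bm{x}) - g(\bm{x}')$. Expanding this difference modality by modality, the biases cancel and each block splits into a $y$-term and a $j$-term. Applying the per-class Lipschitz bounds to each of these four terms and then the triangle inequality (using non-negativity of the weights $a_k^{(m)} \ge 0$), I would collect them into
\[
g(\bm{x}) = g(\bm{x}) - g(\bm{x}') \le \sum_{m=1}^{2} \big( a_y^{(m)} \tilde{\tau}_y^{(m)} + a_j^{(m)} \tilde{\tau}_j^{(m)} \big)\, \| \bm{x}^{(m)} - \bm{x}'^{(m)} \|_2 .
\]
This is the crux of the argument and the place where orthogonality pays off: in Theorem 3.3, $\tau_j^{(m)}$ was the Lipschitz constant of the single \emph{coupled} directional margin $\zeta_j^{(m)}$, whereas here the orthonormality $\tilde{W}^{(m)}(\tilde{W}^{(m)})^\top = I_K$ decouples the two uni-modal scores, so the Lipschitz constant of the gap decomposes additively into $a_y^{(m)}\tilde{\tau}_y^{(m)} + a_j^{(m)}\tilde{\tau}_j^{(m)}$.

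Finally I would apply Cauchy--Schwarz to the sum $\sum_m L^{(m)}\delta^{(m)}$ with $L^{(m)} = a_y^{(m)}\tilde{\tau}_y^{(m)} + a_j^{(m)}\tilde{\tau}_j^{(m)}$ and $\delta^{(m)} = \| \bm{x}^{(m)} - \bm{x}'^{(m)} \|_2$, noting that $\sqrt{(\delta^{(1)})^2 + (\delta^{(2)})^2} = \| \bm{x} - \bm{x}' \|_2$ because the two modality blocks partition the coordinates of $\bm{x}$. Rearranging yields $\| \bm{x} - \bm{x}' \|_2 \ge g(\bm{x}) / \sqrt{\sum_m (L^{(m)})^2}$, and since this holds for every $\bm{x}'$ on the boundary $\tilde{h}_y(\bm{x}') = \tilde{h}_j(\bm{x}')$ (in particular the minimizer for the closest class $j$), taking the minimum over such $\bm{x}'$ gives the claimed lower bound on $P(\bm{x})$.

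The hard part will not be the Cauchy--Schwarz or algebra, which is routine, but justifying the additive Lipschitz decomposition rigorously: I must verify that bounding $g(\bm{x}) - g(\bm{x}')$ term-by-term via the triangle inequality is legitimate, which relies on the sign assumption $a_k^{(m)} \ge 0$ on the modality-specific weights and on the per-class Lipschitz estimates being defined exactly for the orthonormal rows $\tilde{W}^{(m)}_{k\cdot}$. I would also flag that, unlike the numerator, the denominator here is an upper bound on the true local Lipschitz constant of $g$, so the resulting radius is a valid (possibly conservative) lower bound rather than an exact distance.
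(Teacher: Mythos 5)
Your proof is correct and follows essentially the same route as the paper's: expand the score gap $\tilde{h}_y-\tilde{h}_j$ modality by modality using the boundary condition $\tilde{h}_y(\bm{x}')=\tilde{h}_j(\bm{x}')$, bound each modality's contribution by $\bigl(a_y^{(m)}\tilde{\tau}_y^{(m)}+a_j^{(m)}\tilde{\tau}_j^{(m)}\bigr)\|\bm{x}^{(m)}-\bm{x}'^{(m)}\|_2$ via the per-class Lipschitz estimates and the triangle inequality (with $a_k^{(m)}\ge 0$), then finish with Cauchy--Schwarz over the two modality blocks. The only difference is that you bypass the auxiliary sign lemma $\gamma_j^{(m)}(\bm{x}^{(m)})\ge\gamma_j^{(m)}(\bm{x}'^{(m)})$ that the paper establishes by an interpolation/contradiction argument---since the one-sided bound $t\le|t|$ already suffices for the inequality direction, omitting that lemma is legitimate and makes your version slightly leaner.
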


\vspace{-2mm}

See Appendix 7.2 for proof. With this lower bound, we can discuss designing a regulation method to explicitly enhance the certified robustness. 
Firstly, as analyzed in Section 3.3, the imbalance problem brought by modality preference impacts the unreliable modal representation margin. To explicitly regulate the uni-modal encoder and classifier independent from integration, we redefine the margin as the difference in uni-modal score between ground truth and the other label, which can also reflect the reliability of uni-modality. Then we propose to enlarge the relatively small margin through regularization, which is expressed as follows: 
% As analyzed in Section 3.3, uni-modal representation quality and integration factors are the two essential components for multi-modal robustness, which in this framework accord with uni-modal score and modality-specific weight respectively.   
% Hence, we can properly regulate these components to alleviate the influences introduced by modality preference. 
% % To enhance multi-modal robustness, proper regularization of two components is crucial. 
% % Firstly, we consider to improve the learning of uni-modal representation, 
% Due to the imbalance problem, the unpreferred modality learns representation with bad discriminative ability, thus we propose a strategy to improve the quality of relatively poor uni-modal representations. 
\begin{equation}
\begin{aligned}
        \max_{\tilde{W}^{(m)}, \phi^{(m)}} \min_{m; k \neq y} \quad \tilde{W}^{(m)}_{y\cdot} \phi^{(m)} (\bm{x}^{(m)})- \tilde{W}^{(m)}_{k\cdot} \phi^{(m)} (\bm{x}^{(m)}).   
\end{aligned}
\label{equation_11}
\end{equation}
% \vspace{-2mm}
Based on \autoref{equation_11}, we propose a regularization term, refining the maximum using the LogSumExp function to facilitate better optimization:
\begin{equation}
    L_1 = \frac{1}{N} \sum_{i=1}^N   \log \left(\sum_{m=1}^2 \frac{\sum_{k \neq y} \exp( \tilde{W}^{(m)}_{k\cdot} \phi^{(m)} (\bm{x}_i^{(m)}))}{ \exp(\tilde{W}^{(m)}_{y\cdot} \phi^{(m)} (\bm{x}_i^{(m)}))}\right),
\end{equation}
which is detailed in the Appendix 7.3. Furthermore, we can adjust the integration of different modalities by enlarging the lower bound in \autoref{orth_robustness}. Subsequently, we propose a two-step training procedure called \textit{Certifiable Robust Multi-modal Training} (CRMT), which can credibly obtain a robust multi-modal model. The training procedure of CRMT is as follows:

Step 1: optimize with cross-entropy loss and margin regularization with term $\rho$: \\$\min_{\bm{a}^{(m)}, \tilde{W}^{(m)}, \phi^{(m)}} \rho L_1 + \frac{1}{N} \sum_{i=1}^N CE(h(\bm{x}_i), y_i), $
where $CE$ is the cross-entropy loss function.

Step 2: fix $\tilde{W}^{(m)}, \phi^{(m)}$, update $\bm{a}^{(m)}$ to approach higher certified robustness:\\
$\min_{\bm{a}^{(m)}}~~ L_2 = -\frac{1}{N} \sum_{i=1}^N r(\bm{x}_i),$
where $r(\bm{x})$ is the lower bound in \autoref{orth_robustness}. 
%我们认为单次迭代就已经可以达到要求了。
%We further demonstrate the result with just one iteration, which can achieve considerable robustness. Applying more iterations is not necessary to further solve the problem of modality preference and achieve improvement~(as shown in Appendix 6.3.6) while consuming a large number of additional~time.

As shown in Appendix 8.3, we further demonstrate that only one iteration of our method can achieve considerable robustness, without a huge consumption of training cost.

% We tried to optimize both steps iteratively to obtain higher robustness. However its improvement is limited (as shown in Appendix 6.3.6), and it takes plenty of additional time. So we report the significant improvements in experiments with only one iteration.

% \vspace{-2mm}
% \paragraph{Discussion.}
% Through our analysis, we mainly study the components related to modality preference, which is considered the main problem in current multi-modal learning. Besides, we disregard to seek for solution that obtains more robust uni-modal encoders, which can lead to smaller Lipschitz constant and also improve the multi-modal robustness. Here are the reasons:  
% Firstly, we observe that the Lipschitz constant varies little during the training process, and it is not the essential component accounting for the robustness concerns for previous multi-modal models (as shown in Appendix 6.3.5). 
% Secondly, obtaining robust uni-modal representations necessitates the extra design of a robust uni-modal encoder, which has been widely investigated. 
% The design of 1-Lipschitz layers~\citep{meunier2022dynamical, prach2022almost, araujo2023unified} and different Lipschitz regularization~\citep{tsuzuku2018lipschitz, virmaux2018lipschitz} are the widely acknowledged methods, which can be introduced to guarantee the uni-modal encoder robust to perturbation.     
% These methods can serve as a possible promotion in the future. 

\vspace{-2mm}
\begin{table}[t]
\centering
{
\caption{Experimental results of adversarial accuracy results. Bold and underlined values indicate the top and the runner-up results. Combined with JT, MMAT, and Mixup, our proposed CRMT-based methods can improve both performance and robustness.}
\label{main_result}
\vspace{-2mm}
\footnotesize
\begin{tabular}{c|ccc|ccc|ccc}
\toprule
Attack   & \multicolumn{3}{c|}{w\textbackslash{}o}          & \multicolumn{3}{c|}{FGM}                         & \multicolumn{3}{c}{$\ell_2$ PGD}                 \\
Datasets & KS             & UCF101         & VGGS           & KS             & UCF101         & VGGS           & KS             & UCF101         & VGGS           \\ \midrule
JT       & 0.643          & 0.742          & 0.496          & 0.288          & 0.506          & 0.157          & 0.268          & 0.431          & 0.056          \\ \midrule
GB       & 0.704          & {\ul 0.784}    & 0.529          & 0.371          & 0.432          & 0.279          & 0.344          & 0.205          & 0.174          \\
OGM      & 0.651          & 0.743          & 0.498          & 0.292          & 0.492          & 0.190          & 0.270          & 0.377          & 0.069          \\
PMR      & 0.689          & 0.742          & 0.503          & 0.350          & 0.455          & 0.166          & 0.331          & 0.292          & 0.059          \\ \midrule
MSEFM    & 0.627          & 0.721          & 0.492          & 0.390          & 0.483          & 0.187          & 0.376          & 0.243          & 0.115          \\
MMAT     & 0.656          & 0.728          & 0.509          & {\ul 0.514}    & {\ul 0.609}    & {\ul 0.413}    & {\ul 0.507}    & {\ul 0.598}    & {\ul 0.409}    \\
Mixup    & 0.669          & 0.717          & 0.507          & 0.347          & 0.413          & 0.278          & 0.327          & 0.192          & 0.135          \\ \midrule
CMRT     & {\ul 0.758}    & \textbf{0.789} & 0.526          & 0.491          & 0.515          & 0.248          & 0.468          & 0.433          & 0.124          \\
CMRT-AT  & \textbf{0.762} & 0.759          & \textbf{0.538} & \textbf{0.608} & \textbf{0.614} & \textbf{0.422} & \textbf{0.602} & \textbf{0.602} & \textbf{0.414} \\
CMRT-Mix & 0.744          & 0.769          & {\ul 0.537}    & {\ul 0.514}    & 0.430          & 0.328          & 0.491          & 0.191          & 0.184          \\ \bottomrule
\end{tabular}
}
\vspace{-1mm}
\end{table}
\section{Experiments}
\vspace{-1mm}
\subsection{Setups}
\vspace{-1mm}
\paragraph{Dataset.}
We evaluate our method on different datasets including Kinetics-Sounds (Audio + Vision) \citep{arandjelovic2017look}, UCF101 (Optical flow + RGB) ~\citep{soomro2012ucf101}, and VGGSound (Audio + Vision) ~\citep{chen2020vggsound}. We use the backbone ResNet18~\citep{he2016deep} as the encoder for each uni-modality. Details about these datasets are presented in Appendix 8.1.  

\vspace{-1mm}

\paragraph{Multi-modal models.}

In this study, the comparison methods are selected to improve multi-modal learning and be suitable for the multi-modal joint training strategy. Comparison methods can be divided into two distinct groups:
Firstly, methods address the imbalance problem caused by modality preference: 
\textit{Gradient Blending} (GB)~\citep{wang2020makes}, \textit{On-the-fly Gradient Modulation with generalization enhancement} (OGM)~\citep{peng2022balanced},  \textit{Prototypical Modal Rebalance} (PMR)~\citep{fan2023pmr}. 
Secondly, methods aim at improving multi-modal robustness: 
\textit{Multi-Modal Adversarial Training} (MMAT)~\citep{li2022adversarial}, \textit{Multi-modal mixup} (Mixup)~\citep{madry2017towards, li2022adversarial}, \textit{MinSim+ExFMem} (MSEFM)~\citep{tian2021can}.  Our method can be extended to different training strategies, denoted as \textit{Certifiable Robust Multi-modal Training with Joint Training} (CRMT-JT), CRMT \textit{with Adversarial Training} (CRMT-AT), and CRMT \textit{with Mixup} (CRMT-Mix).

\vspace{-1mm}

\paragraph{Attack methods.}
Following previous work~\citep{tsuzuku2018lipschitz, singla2021improved}, we select \textit{Fast Gradient Method} (FGM)~\citep{goodfellow2014explaining} and  \textit{$\ell_2$ Projected Gradient Descent} ($\ell_2$-PGD)~\citep{madry2017towards} as two attack methods with attack size $\epsilon =  0.5$, which are widely used for verifying multi-modal robustness with $\ell_2$-norm. For uni-modal attack, we also introduce attacks FGM and $\ell_{2}$-PGD with $\epsilon =  1.0$, and missing on uni-modality.  

\vspace{-2mm}
\subsection{Robustness validation on multi-modal and uni-modal attack}
\vspace{-1mm}
\paragraph{Robustness against multi-modal attacks.} We validate the robustness of our method under multi-modal attacks. Based on the experimental results presented in \autoref{main_result}, we have identified four key observations that warrant attention.
Firstly, while imbalance methods effectively enhance performance on clean samples, robustness methods can demonstrate more notable defense capabilities against various attacks. Secondly, the imbalance method GB can be superior to the robustness method MSEFM under some situations. That is because GB introduces a multi-task method to enhance the learning of uni-modality and improve the uni-modal margin, thus it can enhance the robustness methods according to our analysis.
Thirdly, our proposed CRMT-based methods surpass the performance of the compared methods across these datasets in most cases. This superiority stems from our approach to addressing the imbalance problem through improving uni-modal representation margins, and the certificate adjustment of modality-specific weights for heightened certified robustness. Fourthly, the improvement of results for CRMT-AT and CRMT-Mix suggests that our training procedure can serve as a valuable component applicable to other robust training methods.

\vspace{-2mm}
\begin{table}[t]
\vspace{-2mm}
\centering
{ 
 \caption{Performance against distinct uni-modal attack methods on KS dataset. }
 \label{distinct_1}
\begin{tabular}{c|c|c|cc|cc|cc}
\toprule
\multirow{2}{*}{Method}     & \multirow{2}{*}{Attack} & w\textbackslash{}o & \multicolumn{2}{c|}{ FGM}  & \multicolumn{2}{c|}{ $\ell_2$-PGD} & \multicolumn{2}{c}{Missing modality} \\
                            &                         & Clean              & \#v            & \#a            & \#v                & \#a                & \#v               & \#a              \\ \midrule
Baseline                    & JT                      & 0.643              & 0.616          & 0.143          & 0.616              & 0.110              & 0.480             & 0.230            \\ \midrule
\multirow{3}{*}{Imbalance}  & GB                      & 0.704              & 0.658          & 0.195          & 0.656              & 0.157              & 0.483             & 0.430            \\
                            & OGM                     & 0.651              & 0.624          & 0.147          & 0.624              & 0.116              & 0.463             & 0.207            \\
                            & PMR                     & 0.689              & 0.649          & 0.181          & 0.649              & 0.152              & 0.484             & 0.315            \\ \midrule
\multirow{3}{*}{Robustness} & MSEFM                   & 0.627              & 0.577          & 0.320          & 0.576              & 0.314              & 0.466             & 0.239            \\
                            & MMAT                    & 0.656              & 0.633          & {\ul 0.390}    & 0.632              & {\ul 0.369}        & 0.482             & 0.262            \\
                            & Mixup                   & 0.669              & 0.628          & 0.191          & 0.626              & 0.147              & 0.450             & 0.256            \\ \midrule
\multirow{3}{*}{Ours}       & CRMT-JT                 & {\ul 0.758}        & 0.685          & 0.384          & 0.682              & 0.327              & {\ul 0.560}       & {\ul 0.591}      \\
                            & CRMT-AT                 & \textbf{0.762}     & {\ul 0.703}    & \textbf{0.488} & {\ul 0.698}        & \textbf{0.464}     & 0.547             & \textbf{0.626}   \\
                            & CRMT-Mix                & 0.744              & \textbf{0.706} & 0.377          & \textbf{0.704}     & 0.320              & \textbf{0.572}    & 0.566            \\ \bottomrule
\end{tabular}
}
\vspace{-4mm}
\end{table}
\vspace{-1mm}

\begin{figure}[b]
\vspace{-3mm}
\centering
% \hspace{0\textwidth}
\subcaptionbox{$\eta^{(v)}/\eta^{(a)}$ in MMAT. }{
\includegraphics[width=0.31\textwidth]{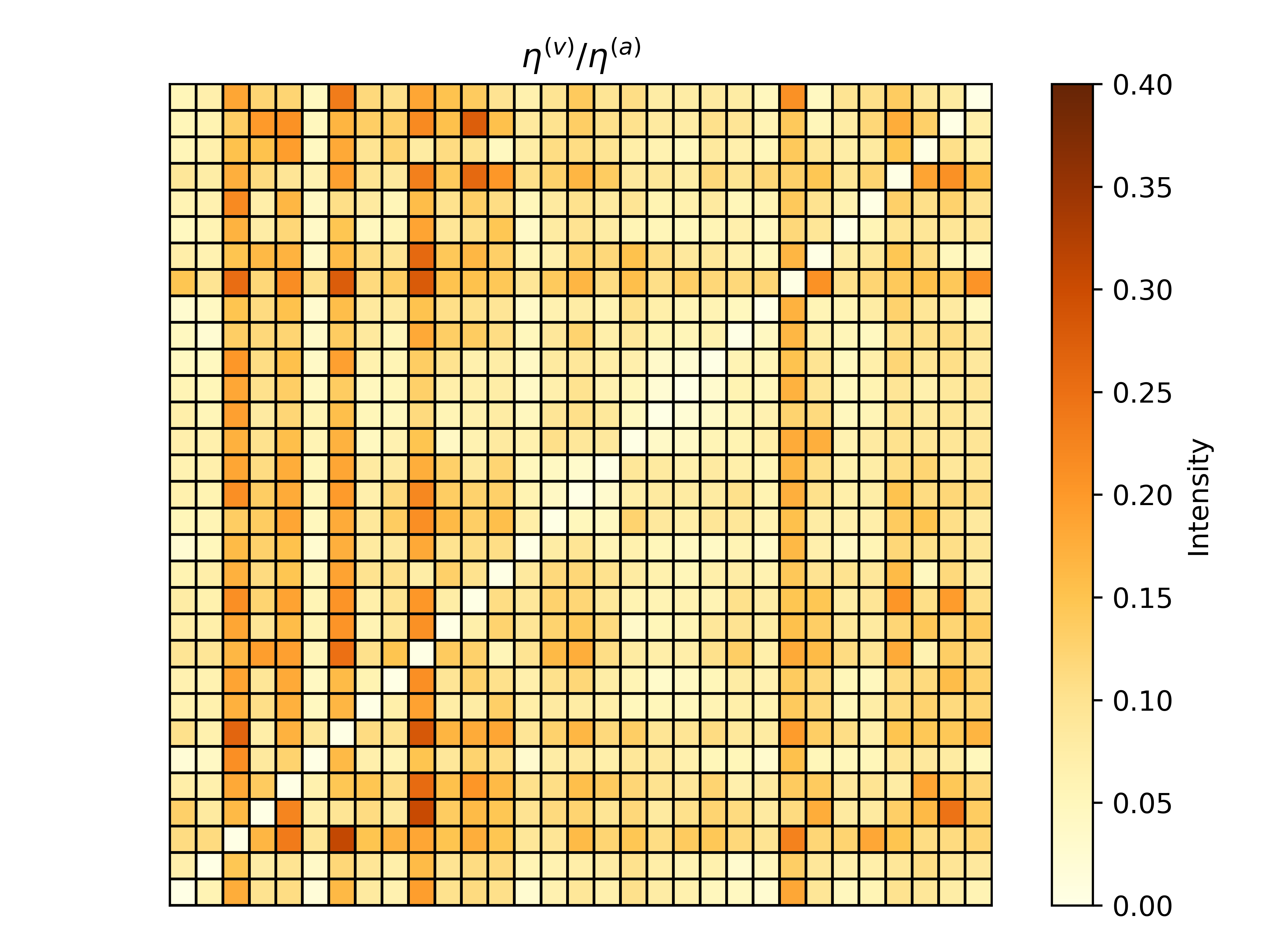}}
\subcaptionbox{$\eta^{(v)}/\eta^{(a)}$ in CRMT-AT step-1.}{
\includegraphics[width=0.31\textwidth]{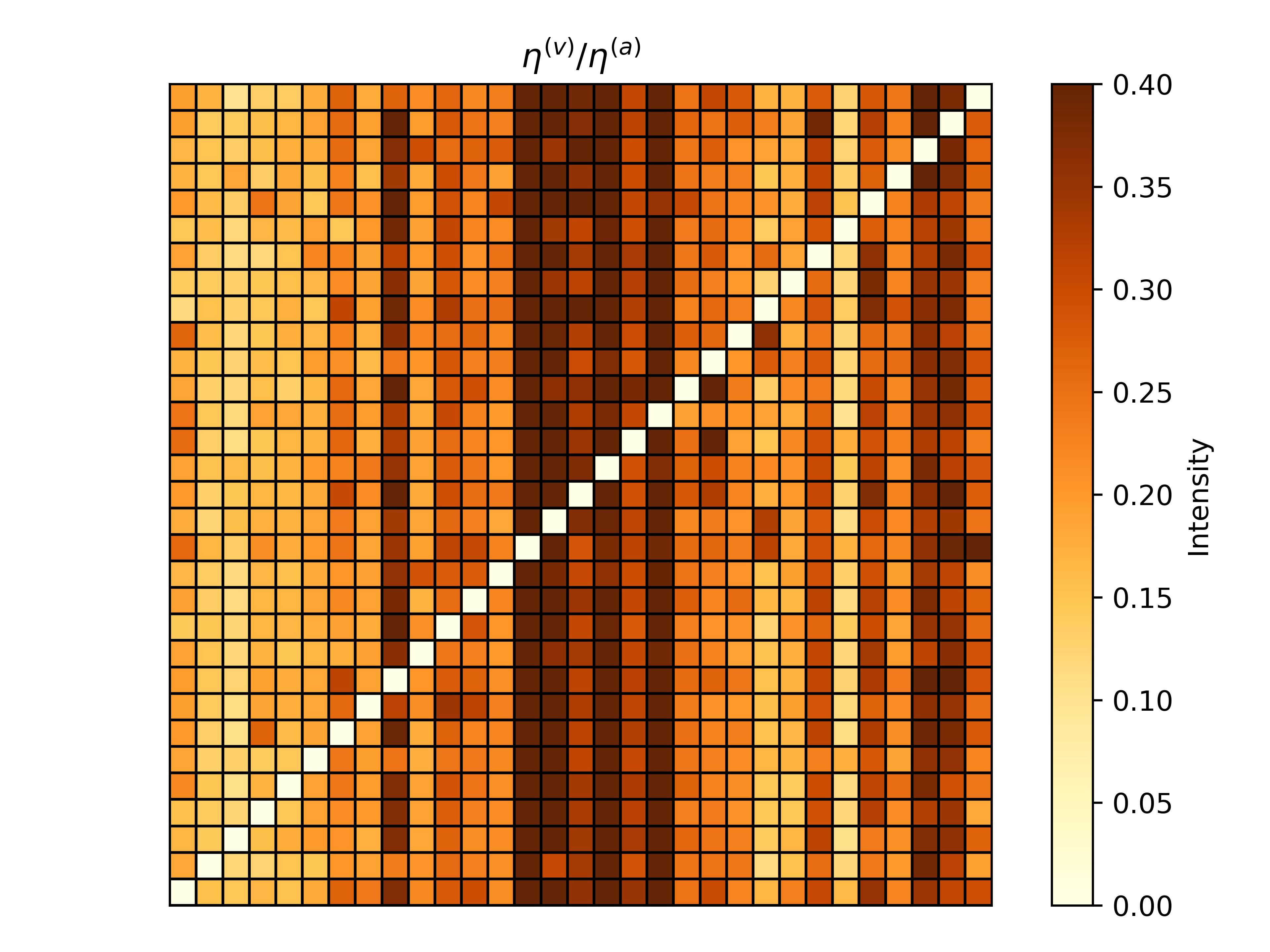}}
\subcaptionbox{$\eta^{(v)}/\eta^{(a)}$ in CRMT-AT.}{
\includegraphics[width=0.31\textwidth]{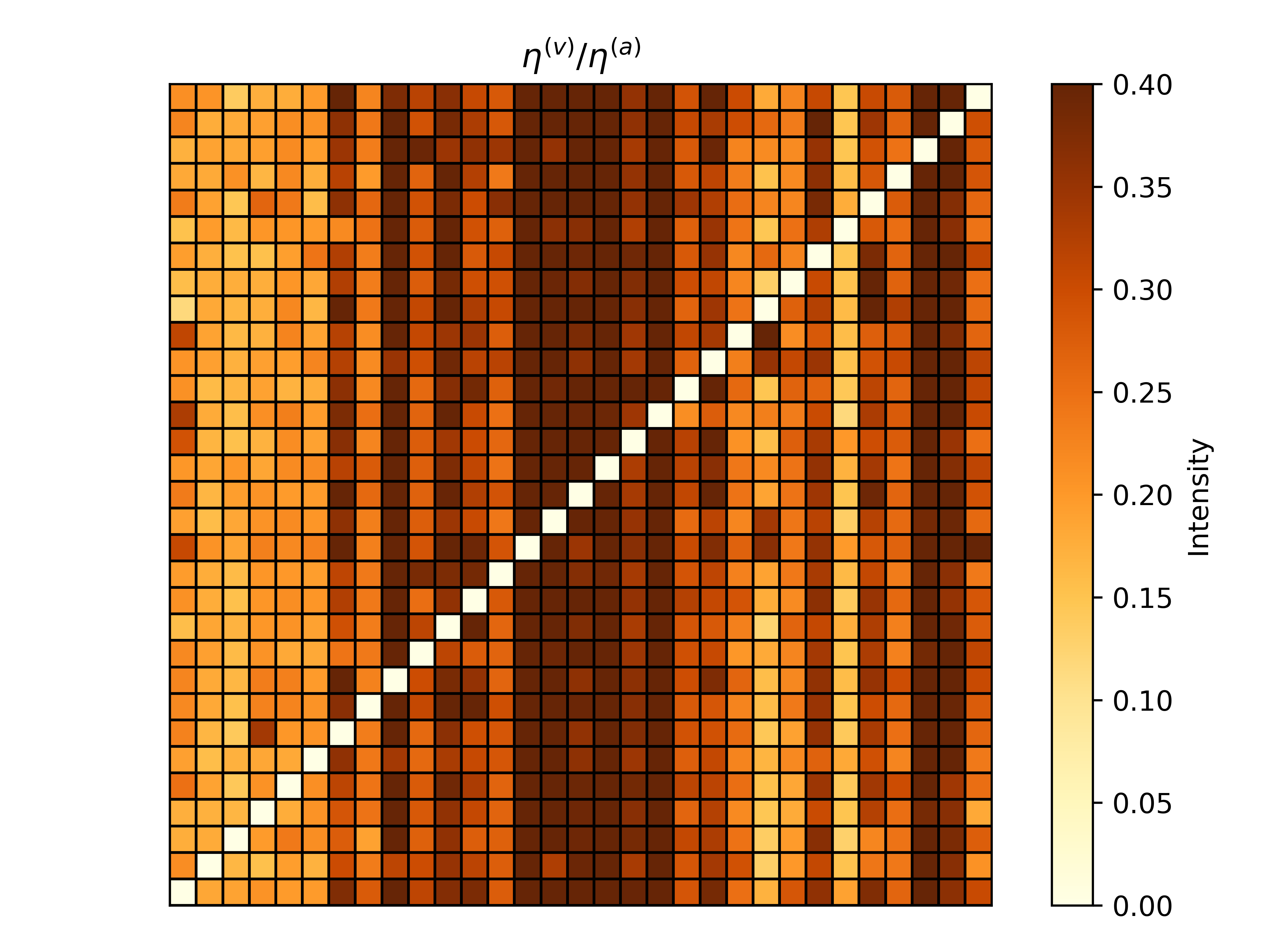}}
\vspace{-2mm}
\caption{Evaluation of the ratio of vulnerability indicators between modality \#$v$ and \#$a$ (preferred). We illustrate the ratio in MMAT, CRMT-AT, and CRMT-AT with only the first training procedure. 
}
\label{preference}
\vspace{-1mm}
\end{figure}

\paragraph{Robustness against distinct uni-modal attack.}

To substantiate the efficacy of our CRMT methods, we conducted additional experiments on the KS dataset, involving a broader range of different uni-modal attacks. As demonstrated in \autoref{distinct_1}, the absence of modality \#$a$ leads to a larger performance decline than \#$v$, since it is more preferred by multi-modal models. 
As shown in \autoref{distinct_1}, previous multi-modal methods demonstrated poor performance when subjected to attacks on the preferred modality \#$a$, aligning with our analysis. In contrast, our CRMT-based methods consider addressing the imbalance problem introduced by modality preference and adjusting integration, thus making the model more robust against uni-modal attacks.
Furthermore, when encountering different uni-modal attacks, our CRMT-based approach shows superior performance and consistently ensures higher robustness in various scenarios. This strongly demonstrates the effectiveness and versatility of our proposed method in enhancing the robustness of multi-modal models.

\vspace{-3mm}

\paragraph{Imbalance on uni-modal vulnerability indicators.}
\begin{wrapfigure}{rh!}{0.35\linewidth}
\vspace{-5mm}
\includegraphics[width=\linewidth]{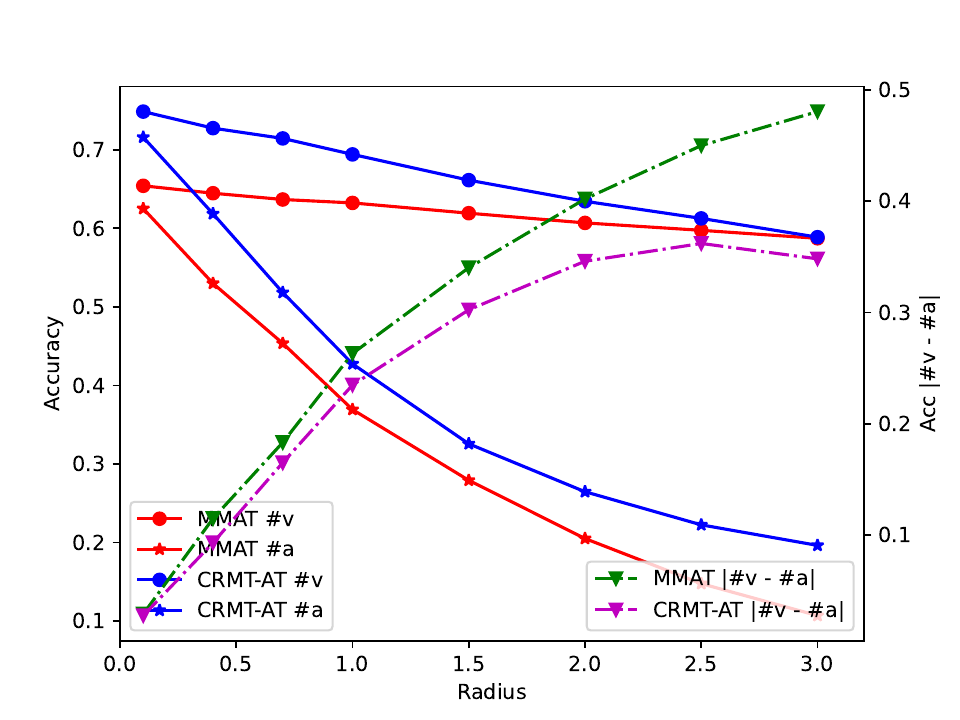}
\vspace{-4mm}
\caption{This figure presents the robustness accuracy against uni-modal attacks with different sizes, where the dotted line signifies the difference in robustness accuracy between two uni-modalities. }
  \label{figure_4}
\vspace{-5mm}
\end{wrapfigure}
We verify the conclusion drawn in Section 3.3 that the imbalance problem introduced by modality preference leads to the vulnerability of attack on specific modality, as illustrated in \autoref{teasor}. To achieve this, we compare MMAT with our CRMT-AT approach. As depicted in \autoref{uni-modal-perturbation}, $\eta^{(m)}$ is utilized as a measurement to assess the robustness of multi-modal models against uni-modal perturbations, and we employ the ratio of uni-modal vulnerability indicators $\eta^{(v)}/\eta^{(a)}$ (see ~\autoref{preference}). It can be demonstrated that our approach diligently works to reduce the imbalance in the indicator $\eta$, effectively darkening the heated map. Furthermore, according to the robustness accuracy, our approach significantly reduces the disparity in attack performance between the two modalities (see ~\autoref{figure_4}). This evidence serves to illustrate the efficacy of our method in mitigating the imbalance problem and ultimately enhancing multi-modal robustness.

\vspace{-2mm}

\subsection{Validation for Effectiveness and scalability}

\vspace{-1mm}
\begin{wrapfigure}{rhtp!}{0.32\linewidth}
\vspace{-5mm}
\includegraphics[width=\linewidth]{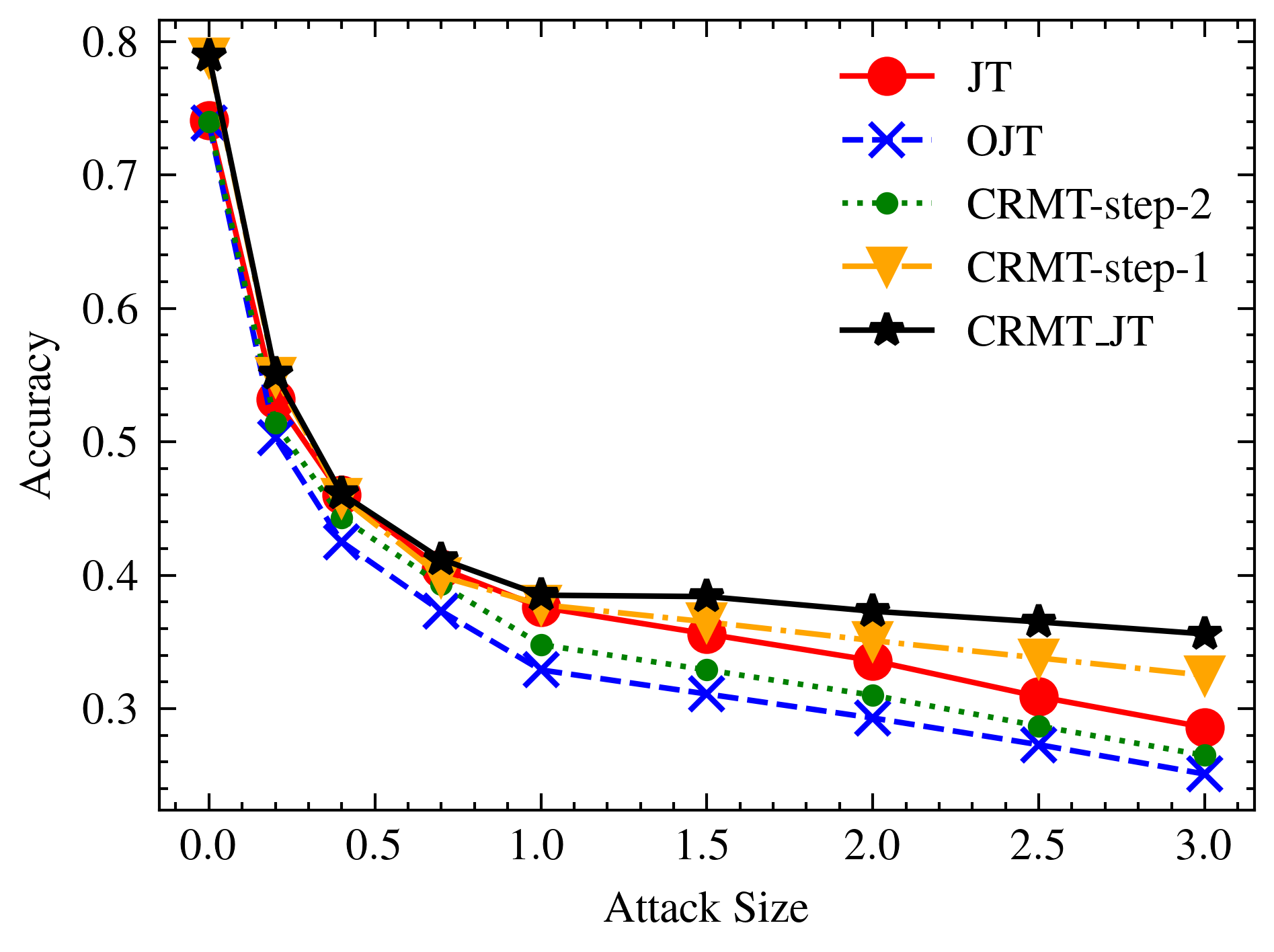}
% \vspace{0mm}
\caption{Ablation studies of our methods on the UCF101 dataset, revealing the effect of each part we introduced. }
  \label{ablation_studies}
\vspace{-3mm}
\end{wrapfigure}
\paragraph{Ablation studies. }
To delve deeper into the efficacy of our training procedure, we conduct an ablation analysis, aiming to elucidate the contribution of each component of the proposed training procedure toward the overall result. When compared with the baseline Joint Training (JT), our results include the orthogonal-based framework and our proposed training procedure. We report our findings related to solely using the orthogonal framework (OJT), CRMT focusing exclusively on each step (CRMT-step-1 and CRMT-step-2), and our full proposal, CRMT-JT, which incorporates both steps. As depicted in \autoref{ablation_studies}, it is evident that the orthogonal-based framework does not contribute to improved robustness. Meanwhile, both step-1 and step-2 of the training procedure can enhance robustness, particularly in the face of larger-size attacks. Furthermore, the results highlight that each step consistently enhances multi-modal robustness, making CRMT-JT superior in multi-modal robustness.

\vspace{-2mm}
\begin{table}[t]
\centering
{
\caption{Extension results of adversarial accuracy with transformer-based method on VGGS dataset.}
\label{extension}
\vspace{-2mm}
\begin{tabular}{c|c|cccc|cc}
\toprule
Attack   & Clean              & \multicolumn{4}{c|}{Uni-modal attack}                   & \multicolumn{2}{c}{Multi-modal attack} \\ \midrule
Method   & w\textbackslash{}o & FGM \#v & FGM \#a & $\ell_2$-PGD \#v & $\ell_2$-PGD \#a & FGM            & $\ell_2$-PGD          \\ \midrule
MMT      & 0.465              & 0.433   & 0.259   & 0.431            & 0.228            & 0.331          & 0.326                 \\
CRMT-MMT & 0.471              & 0.440   & 0.276   & 0.428            & 0.237            & 0.344          & 0.340                 \\ 
\bottomrule
\end{tabular}
}
\vspace{-4mm}
\end{table}
\paragraph{Extension studies to transformer.}
\vspace{-2mm}
We further provide experiment results of our methods extended to the Multi-Modal Transformer-based framework with hierarchical attention~\citep{xu2023multimodal} (MMT) on the VGGS dataset.
Both visual and audio class tokens are concatenated and linearly projected into the output space. We mainly evaluate the robustness under the Transformer model accurately, hence we train from scratch instead of using the pre-trained model. To validate our method, we only introduce the CRMT procedure step-1 on this transformer architecture as a comparison. That is because both audio and visual input influence each token, which is beyond our assumption. 
Based on the results in~\autoref{extension}, our method can also improve the robustness of transformer-based architecture in most cases, which proves the broad prospects of our approach.

\vspace{-3mm}

\vspace{-1mm}
\section{Conclusion}
\vspace{-3mm}
In this study, we present the essential components for multi-modal robustness and delve into the limitations imposed by modality preference. 
Additionally, we explore why multi-modal models exhibit vulnerabilities to attack specific modalities. Further, we introduce the Certifiable Robust Multi-modal Training procedure, a novel approach explicitly designed to boost the certified robustness. 

\section{Acknowledgement}
This research was supported by National Natural Science Foundation of China (NO.62106272), the Young Elite Scientists Sponsorship Program by CAST (2021QNRC001), and Public Computing Cloud, Renmin University of China.

% \newpage
\bibliography{iclr2024_conference}
\bibliographystyle{iclr2024_conference}
% \appendix
\newpage

\section{DETAILED ANALYSIS AND PROOFS IN SECTION 3}

\subsection{Proof of Theorem 1}
\label{Proof_theorem_1}
In this section, we provide proof for two lower bounds in \autoref{ori_robustness} and \autoref{orth_robustness}. 

\begin{theorem}
     Given an input $\bm{x} = (\bm{x}^{(1)},\bm{x}^{(2)})$ with ground-truth label $y \in [K]$ and the runner-up label $j \neq y$, $\zeta_{j}^{(m)}(\bm{x}^{(m)})$ as the representation margin for $m$-th modality, satisfying the Lipschitz constraint $\tau_j^{(m)}$, and the fusion factor $c^{(m)}_j$. Define $\bm{x}'=(\bm{x}’^{(1)},\bm{x}‘^{(2)})$ as the perturbed sample, and $\bm{x} - \bm{x}'$ denotes the perturbation. The bound for the perturbation radius can be described as:
\begin{equation}
\begin{aligned}
 P(\bm{x}) = \min_{\bm{x}'} \left\|\bm{x} - \bm{x}'\right\|_2 
& \geq \frac{c_j^{(1)} \zeta_{j}^{(1)}(\bm{x}^{(1)}) + c_j^{(2)} \zeta_{j}^{(2)}(\bm{x}^{(2)})+ \beta_j}{\sqrt{(c_j^{(1)} \tau_j^{(1)})^2 +(c_j^{(2)} \tau_j^{(2)})^2 }} \\
  s.t.  ~~ c_j^{(1)} \zeta_{j}^{(1)}(\bm{x}'^{(1)}) &+ c_j^{(2)} \zeta_{j}^{(2)}(\bm{x}'^{(2)}) + \beta_j = 0.
\end{aligned}
\label{ori_robustness_proof}
\end{equation}
\end{theorem}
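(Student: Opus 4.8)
The plan is to reduce the geometric statement about $P(\bm{x})$ to a one-dimensional comparison along the decision function $g(\bm{x}) := h_y(\bm{x}) - h_j(\bm{x}) = c_j^{(1)} \zeta_{j}^{(1)}(\bm{x}^{(1)}) + c_j^{(2)} \zeta_{j}^{(2)}(\bm{x}^{(2)}) + \beta_j$, which is exactly the quantity appearing in the constraint of \autoref{db_eq5}. The starting point is the two boundary facts: since $\bm{x}$ is correctly classified, $g(\bm{x}) > 0$, and any admissible $\bm{x}'$ lies on the boundary between $y$ and $j$, so $g(\bm{x}') = 0$. Hence the numerator of the claimed bound equals $g(\bm{x}) = g(\bm{x}) - g(\bm{x}')$, and the whole task becomes bounding this difference from above by a multiple of $\left\|\bm{x} - \bm{x}'\right\|_2$.

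First I would expand $g(\bm{x}) - g(\bm{x}') = \sum_{m=1}^2 c_j^{(m)}\big(\zeta_j^{(m)}(\bm{x}^{(m)}) - \zeta_j^{(m)}(\bm{x}'^{(m)})\big)$, where the bias terms $\beta_j$ cancel. Since each factor $c_j^{(m)} = \left\|W^{(m)}_{y\cdot} - W^{(m)}_{j\cdot}\right\|_2 \geq 0$, I can pass to absolute values term by term and apply the uni-modal Lipschitz inequality \autoref{Lip_0}, obtaining $g(\bm{x}) \leq \sum_{m=1}^2 c_j^{(m)} \tau_j^{(m)} \left\|\bm{x}^{(m)} - \bm{x}'^{(m)}\right\|_2$.

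Then I would combine the two modality contributions with Cauchy--Schwarz, treating $(c_j^{(1)}\tau_j^{(1)}, c_j^{(2)}\tau_j^{(2)})$ and $(\left\|\bm{x}^{(1)}-\bm{x}'^{(1)}\right\|_2, \left\|\bm{x}^{(2)}-\bm{x}'^{(2)}\right\|_2)$ as two vectors in $\R^2$. The crucial identity that makes this clean is that the concatenated perturbation splits as $\left\|\bm{x}-\bm{x}'\right\|_2^2 = \left\|\bm{x}^{(1)}-\bm{x}'^{(1)}\right\|_2^2 + \left\|\bm{x}^{(2)}-\bm{x}'^{(2)}\right\|_2^2$, so the second vector has norm exactly $\left\|\bm{x}-\bm{x}'\right\|_2$. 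This yields $g(\bm{x}) \leq \sqrt{(c_j^{(1)}\tau_j^{(1)})^2 + (c_j^{(2)}\tau_j^{(2)})^2}\,\left\|\bm{x}-\bm{x}'\right\|_2$, and dividing by the positive square root gives the stated lower bound for every admissible $\bm{x}'$; taking the minimum over $\bm{x}'$ on the $y$--$j$ boundary then preserves it.

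The main obstacle is not any single step but making sure the inequality directions and sign conditions line up: I must verify $g(\bm{x}) > 0$ so that dividing does not flip the inequality, and confirm that dropping to absolute values is valid precisely because the coefficients $c_j^{(m)}$ are nonnegative norms. A secondary point worth stating carefully is that the bound is derived for the fixed runner-up class $j$; since $j$ is the closest label, the distance to this particular boundary controls $P(\bm{x})$, so restricting the minimization to the $y$--$j$ boundary is legitimate rather than lossy.
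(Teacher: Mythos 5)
Your proof is correct, and it follows the same skeleton as the paper's (rewrite the numerator as $g(\bm{x})-g(\bm{x}')$ using the boundary constraint, apply the per-modality Lipschitz bound, then combine the two modality terms via Cauchy--Schwarz using $\left\|\bm{x}-\bm{x}'\right\|_2^2=\left\|\bm{x}^{(1)}-\bm{x}'^{(1)}\right\|_2^2+\left\|\bm{x}^{(2)}-\bm{x}'^{(2)}\right\|_2^2$). The genuine difference is in the middle step: the paper insists on the \emph{equality} $\sum_m c_j^{(m)}\bigl|\zeta_j^{(m)}(\bm{x}^{(m)})-\zeta_j^{(m)}(\bm{x}'^{(m)})\bigr| = g(\bm{x})$ at the minimizer, and to justify dropping the absolute values it proves the auxiliary claim $\zeta_j^{(m)}(\bm{x}^{(m)})\ge\zeta_j^{(m)}(\bm{x}'^{(m)})$ for each $m$ by contradiction, invoking continuity of the margin and the intermediate-value theorem to construct a strictly closer boundary point. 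You observe instead that only the one-sided bound $g(\bm{x})=\sum_m c_j^{(m)}\bigl(\zeta_j^{(m)}(\bm{x}^{(m)})-\zeta_j^{(m)}(\bm{x}'^{(m)})\bigr)\le\sum_m c_j^{(m)}\bigl|\cdots\bigr|$ is needed, which holds for \emph{every} admissible $\bm{x}'$ because the $c_j^{(m)}$ are nonnegative norms. This makes your argument shorter, valid pointwise before taking the minimum, and free of any continuity assumption on $\zeta_j^{(m)}$; what you give up is only the paper's extra information about which configuration attains the minimum, which the theorem statement does not require.
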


\begin{proof}
The key to this proof is how to bridge the gap between multi-modal perturbation and the uni-modal representation, by properly introducing the perturbed sample through the condition in Inequality \ref{ori_robustness_proof} and the margin on perturbation $\zeta^{(m)}_j(\bm{x}'^{(m)})$. 
First, considering this constraint, we can obtain the following equation:

\begin{equation}
\begin{aligned}
       h_y(\bm{x}) - h_j(\bm{x}) &= c_j^{(1)} \zeta_{j}^{(1)}(\bm{x}^{(1)}) + c_j^{(2)} \zeta_{j}^{(2)}(\bm{x}^{(2)})+ \beta_j \\ 
       &= c_j^{(1)} (\zeta_{j}^{(1)}(\bm{x}^{(1)}) - \zeta_{j}^{(1)}(\bm{x}'^{(1)})) + c_j^{(2)} (\zeta_{j}^{(2)}(\bm{x}^{(2)}) - \zeta_{j}^{(2)}(\bm{x}'^{(2)})),
\end{aligned}
\label{uni-modal-perturbation_1}
\end{equation}
where the formulation of the difference of margin occurs, which can serve as a bond to connect with the perturbation. Then consider the following inequality:
\begin{equation}
\begin{aligned}
       &c_j^{(1)} |\zeta_{j}^{(1)}(\bm{x}^{(1)}) - \zeta_{j}^{(1)}(\bm{x}'^{(1)})|+ c_j^{(2)} |\zeta_{j}^{(2)}(\bm{x}^{(2)}) - \zeta_{j}^{(2)}(\bm{x}'^{(2)})| \\ 
       & \leq  c_j^{(1)} \tau_j^{(1)}\left\|\bm{x}^{(1)} - \bm{x}'^{(1)}\right\|_2 + c_j^{(2)} \tau_j^{(2)}\left\|\bm{x}^{(2)} - \bm{x}'^{(2)}\right\|_2 . 
\end{aligned}
\label{forward_1}
\end{equation}

Once the condition $\forall m = 1, 2, \zeta_{j}^{(m)}(\bm{x}^{(m)}) \geq \zeta_{j}^{(m)}(\bm{x}'^{(m)})$ is satisfied, which imposes the minimum requirement on $\bm{x}'$, as proven later, we can establish the relationship with the perturbation.
Furthermore, by employing the Cauchy Inequality, we obtain the following inequality:

\begin{equation}
\begin{aligned}
      c_j^{(1)} \tau_j^{(1)}\left\|\bm{x}^{(1)} - \bm{x}'^{(1)}\right\|_2 + c_j^{(2)} \tau_j^{(2)}\left\|\bm{x}^{(2)} - \bm{x}'^{(2)}\right\|_2 \leq  \sqrt{(c_j^{(1)} \tau_j^{(1)})^2 +(c_j^{(2)} \tau_j^{(2)})^2 } \left\|\bm{x} - \bm{x}'\right\|_2 . 
\end{aligned}
\label{forward_2}
\end{equation}

Thus, we have successfully established the bound for perturbation. However, two challenges remain unresolved. Firstly, we need to minimize the objective function by finding an optimal perturbation $\bm{x}'$. Secondly, the validity of condition $\zeta_{j}^{(m)}(\bm{x}^{(m)}) \geq \zeta_{j}^{(m)}(\bm{x}'^{(m)})$ cannot be guaranteed. In the subsequent sections, we will address these issues and provide formal proofs to support our claims.

Initially, drawing inspiration from the Inequality \ref{uni-modal-perturbation}-\ref{forward_2}, we perform a straightforward transformation on the Inequality \ref{ori_robustness_proof}, aiming to establish a connection between multi-modal perturbation and uni-modal perturbation:

\begin{equation}
\begin{aligned}
 \min_{\bm{x}'} \sqrt{(c_j^{(1)} \tau_j^{(1)})^2 +(c_j^{(2)} \tau_j^{(2)})^2 } \left\|\bm{x} - \bm{x}'\right\|_2 
& \geq c_j^{(1)} \zeta_{j}^{(1)}(\bm{x}^{(1)}) + c_j^{(2)} \zeta_{j}^{(2)}(\bm{x}^{(2)})+ \beta_j 
 \\
  s.t.  ~~ c_j^{(1)} \zeta_{j}^{(1)}(\bm{x}'^{(1)}) &+ c_j^{(2)} \zeta_{j}^{(2)}(\bm{x}'^{(2)}) + \beta_j = 0.
\end{aligned}
\label{inequation_1}
\end{equation}
Subsequently, we arrive at the following inequality: 

    \begin{equation}
\begin{aligned}
&\min_{\bm{x}'} \sqrt{(c_j^{(1)} \tau_j^{(1)})^2 +(c_j^{(2)} \tau_j^{(2)})^2 } \left\|\bm{x} - \bm{x}'\right\|_2 \\
\geq & \min_{\bm{x}'^{(1)}, \bm{x}'^{(2)}} c_j^{(1)} \tau_j^{(1)}\left\|\bm{x}^{(1)} - \bm{x}'^{(1)}\right\|_2 + c_j^{(2)} \tau_j^{(2)}\left\|\bm{x}^{(2)} - \bm{x}'^{(2)}\right\|_2  \quad\quad\quad\quad ~(\rm{Cauchy ~Inequality}) \\
\geq & \min_{\bm{x}'^{(1)}, \bm{x}'^{(2)}} c_j^{(1)} |\zeta_{j}^{(1)}(\bm{x}^{(1)}) -\zeta_{j}^{(1)}(\bm{x}'^{(1)})| + c_j^{(2)} |\zeta_{j}^{(2)}(\bm{x}^{(2)}) -\zeta_{j}^{(2)}(\bm{x}'^{(2)})| ~(\rm{Lipschitz ~condition})
\\ = & \quad  c_j^{(1)} \zeta_{j}^{(1)}(\bm{x}^{(1)}) + c_j^{(2)} \zeta_{j}^{(2)}(\bm{x}^{(2)})+ \beta_j,  \quad \quad \quad \quad\quad \quad\quad\quad \quad \quad  \quad\quad \quad\quad\quad \quad \quad\quad\quad \quad \rm{(*)}
\end{aligned}
\label{robustness_main}
\end{equation}
where the Equation $(*)$ is required to be proved. 
In the following, we prove this target equation in detail. The main concern is that whether $\zeta_{j}^{(m)}(\bm{x}^{(m)}) \geq \zeta_{j}^{(m)}(\bm{x}'^{(m)})$ holds for each modality $m$.
As the robustness analysis focuses solely on correctly classified samples, where $\forall j \neq y, h_y(\bm{x})>h_j(\bm{x})$, we can establish the following inequality:

\begin{equation}
    c_j^{(1)} \zeta_{j}^{(1)}(\bm{x}^{(1)}) + c_j^{(2)} \zeta_{j}^{(2)}(\bm{x}^{(2)}) + \beta_j > 0.
    \label{perturbation_0}
\end{equation} 

The condition stated in Inequality \ref{ori_robustness_proof} can be expressed as follows:

\begin{equation}
   h_y(\bm{x}')-h_j(\bm{x}') =  c_j^{(1)} \zeta_{j}^{(1)}(\bm{x}'^{(1)}) + c_j^{(2)} \zeta_{j}^{(2)}(\bm{x}'^{(2)}) + \beta_j = 0.
    \label{condition_1}
\end{equation}
By incorporating Inequality \ref{perturbation_0} and \autoref{condition_1}, we can obtain the following inequality:

\begin{equation}
    c_j^{(1)} \zeta_{j}^{(1)}(\bm{x}^{(1)}) + c_j^{(2)} \zeta_{j}^{(2)}(\bm{x}^{(2)}) > c_j^{(1)} \zeta_{j}^{(1)}(\bm{x}'^{(1)}) + c_j^{(2)} \zeta_{j}^{(2)}(\bm{x}'^{(2)}).
    \label{perturbation_1}
\end{equation} 

Without loss of generality, we suppose $\zeta_{j}^{(1)}(\bm{x}^{(1)}) < \zeta_{j}^{(1)}(\bm{x}'^{(1)})$, and we have:

\begin{equation}
\begin{aligned}
c_j^{(1)} \zeta_{j}^{(1)}(\bm{x}^{(1)}) + c_j^{(2)} \zeta_{j}^{(2)}(\bm{x}'^{(2)}) + \beta_j < 0. 
    % \min_{\bm{x}'^{(1)}, \bm{x}'^{(2)}} c_j^{(1)} |\zeta_{j}^{(1)}(\bm{x}^{(1)}) -\zeta_{j}^{(1)}(\bm{x}'^{(1)})| + c_j^{(2)} |\zeta_{j}^{(2)}(\bm{x}^{(2)}) -\zeta_{j}^{(2)}(\bm{x}'^{(2)})| \\
    % \geq c_j^{(2)} |\zeta_{j}^{(2)}(\bm{x}^{(2)}) -\zeta_{j}^{(2)}(\bm{x}'^{(2)})|     \geq  c_j^{(2)} |\zeta_{j}^{(2)}(\bm{x}^{(2)}) -\zeta_{j}^{(2)}(\hat{\bm{x}}'^{(2)})|
\end{aligned}
\label{perturbation_2}
\end{equation}
Hence, since the margin function $\zeta_{j}^{(2)}$ is continuous, we define an additional variable to describe this process. Let $f(\lambda) = c_j^{(1)} \zeta_{j}^{(1)}(\bm{x}^{(1)}) + c_j^{(2)} \zeta_{j}^{(2)}(\lambda\bm{x}^{(2)} + (1 - \lambda)\bm{x}'^{(2)}) + \beta_j$, we can observe that both $f(0)$ and $f(1)$ equal zero. 
Applying the Zero Existence Theorem, we can always find a $0 <\hat{\lambda} < 1$, and a perturbation $\hat{\bm{x}}'^{(2)} = \hat{\lambda}\bm{x}^{(2)} + (1 - \hat{\lambda})\bm{x}'^{(2)}$.
% can find a perturbation $\hat{\bm{x}}'^{(2)}$  for modality $2$ by  on Inequality \ref{perturbation_0} and  \ref{perturbation_2},
satisfying: 
\begin{equation}
c_j^{(1)} \zeta_{j}^{(1)}(\bm{x}^{(1)}) + c_j^{(2)} \zeta_{j}^{(2)}(\hat{\bm{x}}'^{(2)}) + \beta_j =0.
\end{equation}
Hence, $\hat{\bm{x}} = (\bm{x}^{(1)},\hat{\bm{x}}'^{(2)})$ is also a feasible perturbation satisfying the condition of \autoref{ori_robustness_proof}. And obviously, 
\begin{equation}
    \|\bm{x} - \hat{\bm{x}}\|_2 = \|(1 - \hat{\lambda}) (\bm{x}^{(2)} - \bm{x}'^{(2)})\|_2 \leq \|\bm{x} - \bm{x}'\|_2,
\end{equation}
which contradicts the minimization constraint. 
% where $ \zeta_{j}^{(2)}(\hat{\bm{x}}'^{(2)}) > \zeta_{j}^{(2)}(\bm{x}'^{(2)})$. Thus, $\hat{\bm{x}}'$ is also a perturbed sample landed on the decision boundary. We consider the minimization objective as:
% \begin{equation}
% \begin{aligned}
%     \min_{\bm{x}'^{(1)}, \bm{x}'^{(2)}} c_j^{(1)} |\zeta_{j}^{(1)}(\bm{x}^{(1)}) -\zeta_{j}^{(1)}(\bm{x}'^{(1)})| + c_j^{(2)} |\zeta_{j}^{(2)}(\bm{x}^{(2)}) -\zeta_{j}^{(2)}(\bm{x}'^{(2)})| \\ 
%    \geq  c_j^{(2)} (\zeta_{j}^{(2)}(\bm{x}^{(2)}) -\zeta_{j}^{(2)}(\bm{x}'^{(2)})) >  c_j^{(2)} (\zeta_{j}^{(2)}(\bm{x}^{(2)}) -\zeta_{j}^{(2)}(\hat{\bm{x}}'^{(2)})).
% \end{aligned}
% \label{equ_10}
% \end{equation}

% That indicates that there exist a perturbed sample $\hat{\bm{x}}' = (\bm{x}^{(1)}, \hat{\bm{x}}'^{(2)} )$ can reach a smaller value in objective~\ref{equ_10}, which contradict with the condition that $\bm{x}'$ leads to the minimum objective. Hence we get the contradiction. 
In conclusion, we have verify that $\zeta_{j}^{(m)}(\bm{x}^{(m)}) \geq \zeta_{j}^{(m)}(\bm{x}'^{(m)})$ for each modality $m$. Thus, we can approach the target equation in \autoref{robustness_main} by the following equation: 
\begin{equation}
\begin{aligned}
        &\quad \min_{\bm{x}'^{(1)}, \bm{x}'^{(2)}} c_j^{(1)} |\zeta_{j}^{(1)}(\bm{x}^{(1)}) -\zeta_{j}^{(1)}(\bm{x}'^{(1)})| + c_j^{(2)} |\zeta_{j}^{(2)}(\bm{x}^{(2)}) -\zeta_{j}^{(2)}(\bm{x}'^{(2)})| 
    \\ &=  \min_{\bm{x}'^{(1)}, \bm{x}'^{(2)}} c_j^{(1)} (\zeta_{j}^{(1)}(\bm{x}^{(1)}) -\zeta_{j}^{(1)}(\bm{x}'^{(1)})) + c_j^{(2)} (\zeta_{j}^{(2)}(\bm{x}^{(2)}) -\zeta_{j}^{(2)}(\bm{x}'^{(2)}))
    \\ &=  c_j^{(1)} \zeta_{j}^{(1)}(\bm{x}^{(1)}) + c_j^{(2)} \zeta_{j}^{(2)}(\bm{x}^{(2)})+ \beta_j. 
\end{aligned}
\end{equation}

Therefore, combined with \autoref{robustness_main}, we can obtain the target inequality. 
\begin{equation}
    \min_{\bm{x}'} \left\|\bm{x} - \bm{x}'\right\|_2  \geq \frac{c_j^{(1)} \zeta_{j}^{(1)}(\bm{x}^{(1)}) + c_j^{(2)} \zeta_{j}^{(2)}(\bm{x}^{(2)})+ \beta_j}{\sqrt{(c_j^{(1)} \tau_j^{(1)})^2 +(c_j^{(2)} \tau_j^{(2)})^2 }}.
\end{equation}
\end{proof}

\subsection{Proof of Theorem 2}
\label{Proof_theorem_2}
\begin{theorem} 
Given an input $\bm{x} =  (\bm{x}^{(1)},\bm{x}^{(2)})$ with ground-truth label $y \in [K]$ and the runner-up label $j \neq y$, the orthogonal classifier $\tilde{W}^{(m)}$, the modality-specific weight $\bm{a}^{(m)}$, the Lipschitz constant $\tilde{\tau}_j^{(m)}$, and the difference of the bias $\tilde{\beta}_j =  \tilde{b}_y -  \tilde{b}_j$. The bound for the perturbation radius can be described as:
% Define $\bm{x}' = (\bm{x}'^{(1)}, \bm{x}'^{(2)})$ as the perturbed sample, and $\bm{x} - \bm{x}'$ as the perturbation.
\begin{equation}
\begin{aligned}
    P(\bm{x}) \geq &\frac{\sum_{m=1}^2 \left( a_y^{(m)}\tilde{W}^{(m)}_{y.} \phi^{(m)} (\bm{x}^{(m)})- a_j^{(m)}\tilde{W}^{(m)}_{j.} \phi^{(m)} (\bm{x}^{(m)})\right) + \tilde{\beta}_j  }{\sqrt{\sum_{m=1}^2 \left( a^{(m)}_y \tilde{\tau}^{(m)}_y + a^{(m)}_j \tilde{\tau}^{(m)}_j\right)^2  }}  \\
  &\quad \quad \quad \quad s.t.  ~~ \exists j \neq y, ~ \tilde{h}_y(\bm{x}') = \tilde{h}_j(\bm{x}').
\end{aligned}
\label{orth_robustness_1}
\end{equation}
\end{theorem}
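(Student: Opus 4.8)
The plan is to mirror the structure of the Theorem~1 proof, replacing the combined uni-modal margins $\zeta_j^{(m)}$ by the two orthogonal uni-modal scores that enter $\tilde{h}$. Write $s_k^{(m)}(\bm{x}) = \tilde{W}^{(m)}_{k\cdot}\phi^{(m)}(\bm{x}^{(m)})$ for the uni-modal score of class $k$ under modality $m$, so that $\tilde{h}_k(\bm{x}) = \sum_{m=1}^2 a_k^{(m)} s_k^{(m)}(\bm{x}) + \tilde{b}_k$. First I would use the boundary constraint $\tilde{h}_y(\bm{x}') = \tilde{h}_j(\bm{x}')$ to subtract a zero term, rewriting the classification gap at $\bm{x}$ purely as a sum of per-modality score increments:
\[
\tilde{h}_y(\bm{x}) - \tilde{h}_j(\bm{x}) = \sum_{m=1}^2 \Big( a_y^{(m)}\big(s_y^{(m)}(\bm{x}) - s_y^{(m)}(\bm{x}')\big) - a_j^{(m)}\big(s_j^{(m)}(\bm{x}) - s_j^{(m)}(\bm{x}')\big)\Big).
\]
The left-hand side is exactly the numerator of \autoref{orth_robustness_1}, and it is positive because $\bm{x}$ is correctly classified, so it suffices to upper-bound the right-hand side by a multiple of $\|\bm{x} - \bm{x}'\|_2$.

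The key new step relative to Theorem~1 is that each modality now carries two score terms instead of a single margin. For fixed $m$ I would bound the bracketed quantity by $v \le |v|$ and then the triangle inequality, using $a_y^{(m)}, a_j^{(m)} \ge 0$, to split it into $a_y^{(m)}|s_y^{(m)}(\bm{x}) - s_y^{(m)}(\bm{x}')| + a_j^{(m)}|s_j^{(m)}(\bm{x}) - s_j^{(m)}(\bm{x}')|$. Applying the uni-modal Lipschitz bound to each term, and noting that both scores share the \emph{same} modality perturbation $\|\bm{x}^{(m)} - \bm{x}'^{(m)}\|_2$, collapses the per-modality contribution into $\big(a_y^{(m)}\tilde{\tau}_y^{(m)} + a_j^{(m)}\tilde{\tau}_j^{(m)}\big)\|\bm{x}^{(m)} - \bm{x}'^{(m)}\|_2$, which is precisely the combination inside the denominator. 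Summing over $m$ and applying Cauchy--Schwarz over the two modalities, together with $\|\bm{x} - \bm{x}'\|_2^2 = \sum_m \|\bm{x}^{(m)} - \bm{x}'^{(m)}\|_2^2$, yields
\[
\tilde{h}_y(\bm{x}) - \tilde{h}_j(\bm{x}) \le \sqrt{\sum_{m=1}^2 \big(a_y^{(m)}\tilde{\tau}_y^{(m)} + a_j^{(m)}\tilde{\tau}_j^{(m)}\big)^2}\;\|\bm{x} - \bm{x}'\|_2 .
\]
Rearranging and taking the minimum over all $\bm{x}'$ reaching the decision boundary then gives the claimed lower bound on $P(\bm{x})$.

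I expect the main obstacle to be the same delicate point the Theorem~1 proof handled via the Zero Existence Theorem: converting the signed score increments into absolute values is tight, rather than a strict loss, only when the constrained minimizer satisfies a per-modality sign condition, namely $a_y^{(m)} s_y^{(m)}(\bm{x}) - a_j^{(m)} s_j^{(m)}(\bm{x}) \ge a_y^{(m)} s_y^{(m)}(\bm{x}') - a_j^{(m)} s_j^{(m)}(\bm{x}')$ in each modality. For the lower bound itself the trivial direction $v \le |v|$ is enough, but to match the Theorem~1 argument I would reproduce the contradiction step: were the inequality to fail in one modality, continuity of the scores and an intermediate-value argument would produce a strictly closer feasible $\hat{\bm{x}}'$ still lying on the boundary $\tilde{h}_y = \tilde{h}_j$, contradicting minimality of $\bm{x}'$. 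The genuinely new bookkeeping, beyond Theorem~1, is verifying that the triangle-inequality split of the two-score modality term does not lose the factor $a_y^{(m)}\tilde{\tau}_y^{(m)} + a_j^{(m)}\tilde{\tau}_j^{(m)}$, and that the nonnegativity of the integration weights $\bm{a}^{(m)}$ needed for that split is guaranteed by the construction of the orthogonal-based framework.
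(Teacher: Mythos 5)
Your proposal is correct and follows essentially the same route as the paper's proof: subtract the zero boundary gap $\tilde{h}_y(\bm{x}')-\tilde{h}_j(\bm{x}')$, split each modality's contribution via the triangle inequality into the two class-score increments, apply the per-class Lipschitz constants to get the factor $a_y^{(m)}\tilde{\tau}_y^{(m)}+a_j^{(m)}\tilde{\tau}_j^{(m)}$, and finish with Cauchy--Schwarz over the two modalities; the paper likewise handles the per-modality sign condition by the same intermediate-value contradiction. Your observation that the trivial direction $v\le|v|$ already suffices for the lower bound (making the contradiction step dispensable) is a small but accurate streamlining of the paper's argument, and the nonnegativity of $\bm{a}^{(m)}$ you flag is indeed implicitly assumed in the paper's own Lipschitz step.
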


\begin{proof}

For simplify, here we use $\tilde{h}_y^{(m)}(\bm{x}^{(m)})= a_y^{(m)}\tilde{W}^{(m)}_{y.} \phi^{(m)} (\bm{x}^{(m)})$ as the score of the sample $\bm{x}$ of $y$-th class in $m$-th modality. Hence, we have:
\begin{equation}
    \tilde{h}_y = \tilde{h}_y^{(1)}(\bm{x}^{(1)}) + \tilde{h}_y^{(2)}(\bm{x}^{(2)}) + b_y. 
\end{equation}

Notice that the numerator term of the lower bound equals to $\tilde{h}_y(\bm{x}) - \tilde{h}_j(\bm{x})$, combining with the condition, we have:

\begin{equation}
 \begin{aligned}
     \tilde{h}_y(\bm{x}) - \tilde{h}_j(\bm{x}) = \tilde{h}_y(\bm{x}) - \tilde{h}_j(\bm{x})  - (\tilde{h}_y(\bm{x}') - \tilde{h}_j(\bm{x}'))\\
 = \sum_{m=1}^2 \tilde{h}^{(m)}_y(\bm{x}^{(m)}) - \tilde{h}^{(m)}_j(\bm{x}^{(m)}) - (\tilde{h}^{(m)}_y(\bm{x}'^{(m)}) - \tilde{h}^{(m)}_j(\bm{x}'^{(m)})) .
 \end{aligned}
\end{equation}
Denote $\gamma_j^{(m)} (\bm{x}^{(m)})= \tilde{h}^{(m)}_y(\bm{x}^{(m)}) - \tilde{h}^{(m)}_j(\bm{x}^{(m)})$. Inspired by the proof above, we have :
\begin{equation}
    \gamma_j^{(m)} (\bm{x}^{(m)}) \geq \gamma_j^{(m)} (\bm{x}'^{(m)})
    \label{conditions_for_orth}
\end{equation}
holds for each modality $m$, which requires the minimization condition of $\bm{x}'$ and will be proved later.

% $  \\
%  \sum_{m=1}^2 \left( a_y^{(m)}\tilde{W}^{(m)}_{y.} \phi^{(m)} (\bm{x}^{(m)})- a_j^{(m)}\tilde{W}^{(m)}_{j.} \phi^{(m)} (\bm{x}^{(m)})\right) + \tilde{\beta}_j$
% Inspired by the proof above, we can obtain the following inequality: 
% \begin{equation}
% \begin{aligned}
%     \tilde{h}_y(\bm{x}) \geq \tilde{h}_y(\bm{x}') \\
%     \tilde{h}_j(\bm{x}) \leq \tilde{h}_j(\bm{x}'),
% \end{aligned}
% \label{conditions_for_orth}
% \end{equation}
Thus, we can use the Lipschitz condition for $j$-th class:
\begin{equation}
   | \tilde{W}^{(m)}_{j.} \phi^{(m)} (\bm{x}^{(m)})- \tilde{W}^{(m)}_{j.} \phi^{(m)} (\bm{x}'^{(m)}) |\leq \tilde{\tau}_j^{(m)} \left\|\bm{x}^{(m)} - \bm{x}'^{(m)}\right\|_2.
\end{equation}
and we have: 

\begin{equation}
\begin{aligned}   
 &\quad ~~ \sum_{m=1}^2 |\gamma_j^{(m)} (\bm{x}^{(m)}) - \gamma_j^{(m)} (\bm{x}'^{(m)})|\\
 &= \sum_{m=1}^2 \left| \tilde{h}^{(m)}_{y}(\bm{x}^{(m)})- \tilde{h}^{(m)}_{y} (\bm{x}'^{(m)}) - \tilde{h}^{(m)}_{j} (\bm{x}^{(m)}) + \tilde{h}^{(m)}_{j} (\bm{x}'^{(m)})\right| \\
 &\leq \sum_{m=1}^2 \left| a_y^{(m)}\tilde{W}^{(m)}_{y.} \phi^{(m)} (\bm{x}^{(m)})- a_y^{(m)}\tilde{W}^{(m)}_{y.} \phi^{(m)} (\bm{x}'^{(m)})\right| \\
 &\quad\quad\quad  + \left| a_j^{(m)}\tilde{W}^{(m)}_{j.} \phi^{(m)} (\bm{x}^{(m)})- a_j^{(m)}\tilde{W}^{(m)}_{j.} \phi^{(m)} (\bm{x}'^{(m)})\right|\\
 &\leq  \left( a^{(1)}_y \tilde{\tau}^{(1)}_y + a^{(1)}_j \tilde{\tau}^{(1)}_j \right)\|\bm{x}^{(1)} - \bm{x}'^{(1)} \|_2 +  \left( a^{(2)}_y \tilde{\tau}^{(2)}_y + a^{(2)}_j \tilde{\tau}^{(2)}_j \right)\|\bm{x}^{(2)} - \bm{x}'^{(2)} \|_2
\end{aligned}
\end{equation}

Thus, we can further solve it by employing the Cauchy Inequality. 

\begin{equation}
     \sum_{m=1}^2 |\gamma_j^{(m)} (\bm{x}^{(m)}) - \gamma_j^{(m)} (\bm{x}'^{(m)})| \leq \sqrt{\sum_{m=1}^2 \left( a^{(m)}_y \tilde{\tau}^{(m)}_y + a^{(m)}_j \tilde{\tau}^{(m)}_j\right)^2  } \|\bm{x} - \bm{x}'\|_2. 
\end{equation}

Further, we verify that when $\bm{x}'$ sets the size of perturbation the minimum value, \autoref{conditions_for_orth} satisfies. The proof is similar to the proof above. 
As our robustness analysis focuses solely on correctly classified samples, we can establish the following inequality:
\begin{equation}
  \forall j \neq y,  \tilde{h}_y (\bm{x}) -  \tilde{h}_j (\bm{x}) = \gamma_j^{(1)} (\bm{x}^{(1)}) + \gamma_j^{(2)} (\bm{x}^{(2)}) > 0
\end{equation} 

Without loss of generality, we suppose $\gamma_j^{(1)} (\bm{x}) < \gamma_j^{(1)} (\bm{x}')$. Considering the conditions above, we have:
\begin{equation}
    \gamma_j^{(1)}  (\bm{x}^{(1)}) + \gamma_j^{(2)}  (\bm{x}'^{(2)}) < \gamma_j^{(1)}  (\bm{x}'^{(1)}) + \gamma_j^{(2)} (\bm{x}'^{(2)}) = 0
\end{equation}
Hence, there exist a $\hat{\bm{x}} = (\bm{x}^{(1)}, \hat{\bm{x}}^{(2)})$, subject to :
\begin{equation}
    \gamma_j^{(1)}  (\bm{x}^{(1)}) + \gamma_j^{(2)} (\hat{\bm{x}}^{(2)}) = 0 \quad \mathrm{and} \quad \|\bm{x} - \hat{\bm{x}}\|_2 < \|\bm{x} - \bm{x}'\|_2.
\end{equation}
Contradict with the assumption. 
% \end{proof}
% \begin{equation}
%   P(\bm{x}) = \min_{\bm{x}'} \left\|\bm{x} - \bm{x}'\right\|_2 \geq  | \tilde{W}^{(m)}_{j.} \phi^{(m)} (\bm{x}^{(m)})- \tilde{W}^{(m)}_{j.} \phi^{(m)} (\bm{x}'^{(m)}) |\leq \tilde{\tau}_j^{(m)} \left\|\bm{x}^{(m)} - \bm{x}'^{(m)}\right\|_2.
% \end{equation}
Thus, the proposed bound can be verified.
\begin{equation}
\begin{aligned}
    P(\bm{x}) = \min_{\bm{x}'} \|\bm{x} - \bm{x}'\|_2 & \geq \frac{\sum_{m=1}^2 \left( a_y^{(m)}\tilde{W}^{(m)}_{y.} \phi^{(m)} (\bm{x}^{(m)})- a_j^{(m)}\tilde{W}^{(m)}_{j.} \phi^{(m)} (\bm{x}^{(m)})\right) + \tilde{\beta}_j  }{\sqrt{\sum_{m=1}^2 \left( a^{(m)}_y \tilde{\tau}^{(m)}_y + a^{(m)}_j \tilde{\tau}^{(m)}_j\right)^2  }}  \\
  &\quad \quad \quad \quad s.t.  ~~ \exists j \neq y, ~ \tilde{h}_y(\bm{x}') = \tilde{h}_j(\bm{x}').
\end{aligned}
\end{equation}
\end{proof}

\subsection{Smoothing process}
\label{Smoothing}
In this section, we provide the smoothing process of the margin regularization. Since the imbalance problem is introduced by modality preference, the unpreferred modality can barely obtain discriminative representation. Hence, we aim to enhance the unpreferred uni-modal representation learning, through enhancing its margin. In detail, we focus on the following objectives: 
\begin{equation}
\begin{aligned}
        \max_{\tilde{W}^{(m)}, \phi^{(m)}} \min_{m; k \neq y} \quad \tilde{W}^{(m)}_{y.} \phi^{(m)} (\bm{x}^{(m)})- \tilde{W}^{(m)}_{k.} \phi^{(m)} (\bm{x}^{(m)}).   
        % \zeta_j^{(m)}&=(a_y^{(m)}\tilde{W}^{(m)}_{y.} - a_j^{(m)}\tilde{W}^{(m)}_{j.} )\phi^{(m)} (\bm{x}^{(m)}),
\end{aligned}
\end{equation}
This objective aims to improve the learning of the weaker uni-modality by encouraging the correct class and punishing others. For better optimization, we first the smooth minimization among class $j$ through LogSumExp function~\citep{nielsen2016guaranteed}:

\begin{equation}
    \begin{aligned}
                \max_{\tilde{W}^{(m)}, \phi^{(m)}}& \min_m \quad \tilde{W}^{(m)}_{y.} \phi^{(m)} (\bm{x}^{(m)}) - \log \left( \sum_{k \neq y} \exp(\tilde{W}^{(m)}_{k.} \phi^{(m)} (\bm{x}^{(m)}))\right),
    \end{aligned}
\end{equation}
and further, smooth the minimization among modality $m$, and transform it into minimization problem:
\begin{equation}
      \min_{\tilde{W}^{(m)}, \phi^{(m)}} ~~-\log \left(\sum_{m=1}^2 \frac{\exp(\tilde{W}^{(m)}_{y.} \phi^{(m)} (\bm{x}^{(m)}))}{\sum_{k \neq y} \exp( \tilde{W}^{(m)}_{k.} \phi^{(m)} (\bm{x}^{(m)})) }\right).
\end{equation}

Above all, we extend the regularization to the average of all samples and obtain the objective $L_1$:
\begin{equation}
    \begin{aligned}
    \min_{\tilde{W}^{(m)}, \phi^{(m)}}&~  \frac{1}{N}\sum_{i=1}^N \log \left(\sum_{m=1}^2 \frac{\sum_{k \neq y} \exp( \tilde{W}^{(m)}_{k.} \phi^{(m)} (\bm{x}_i^{(m)}))}{ \exp(\tilde{W}^{(m)}_{y.} \phi^{(m)} (\bm{x}_i^{(m)}))}\right),
    \end{aligned}
\end{equation}
which explicitly enlarges the uni-modal representation learning to solve the imbalance problem without changing the fusion factor $\bm{a}^{(m)}$, which can lead to a better uni-modal representation.

\subsection{Extensive analysis}

\paragraph{Extended to different fusion strategies.}
In the main manuscript, our analysis mainly focuses on the representative fusion strategy, late fusion, which is widely used in multi-modal research~\cite{huang2022modality, wang2020makes}. 
Meanwhile, our method are adaptable to other fusion mechanisms, where the modality-specific representations could interact earlier in the process. Previously, we defined the margin as $\zeta^{(m)}_j(\bm{x}^{(m)})$, where $j$ is the nearest class to calculate the margin, and the margin is only determined by uni-modal input $\bm{x}^{(m)}$. To adapt our method for these scenarios including intermediate fusion, we can redefine the representation margin as $\zeta^{(m)}_j(\bm{x}^{(1)}, \bm{x}^{(2)})$, rather than the previously, indicating that both modalities' input influence the margin. This modification allows us to extend the framework to measure multi-modal perturbations in a more integrated manner. Additionally, we can adapt the definition of the Lipschitz constant in our theoretical analysis here to measure how the multi-modal perturbation influences the margin:

\begin{equation}
\begin{aligned}
|\zeta_j^{(m)}(\bm{x}^{(1)}, \bm{x}^{(2)}) - \zeta_j^{(m)}(\bm{x}'^{(1)}, \bm{x}'^{(2)})| \le \tau_j^{(m1)}\left\| \bm{x}^{(1)} - \bm{x}'^{(1)} \right\|_2 + \tau_j^{(m2)}\left\| \bm{x}^{(2)} - \bm{x}'^{(2)} \right\|_2
\end{aligned}
\end{equation}
where $\tau_j^{(m1)}$ represents the Lipschitz constant of modality $m$ from modality $1$. This constant can reflect how the alteration in modality $1$ influences the margin in modality $m$. Then following the proof in the manuscript, we can observe that:
 \begin{equation}
\begin{aligned}
      c_j^{(1)} |\zeta_{j}^{(1)}(\bm{x}^{(1)}, \bm{x}^{(2)}) - \zeta_{j}^{(1)}(\bm{x}'^{(1)}, \bm{x}'^{(2)})|+ c_j^{(2)} |\zeta_{j}^{(2)}(\bm{x}^{(1)}, \bm{x}^{(2)}) - \zeta_{j}^{(2)}(\bm{x}'^{(1)}, \bm{x}'^{(2)})| \\         \leq  (c_j^{(1)} \tau_j^{(11)} + c_j^{(2)} \tau_j^{(21)})\left\|\bm{x}^{(1)} - \bm{x}'^{(1)}\right\|_2 + (c_j^{(1)} \tau_j^{(12)} + c_j^{(2)} \tau_j^{(22)})\left\|\bm{x}^{(2)} - \bm{x}'^{(2)}\right\|_2 .
\end{aligned}
\end{equation}
Thus, we can obtain the perturbation bound in this setting:
\begin{equation}
\begin{aligned}
 & \min_{\bm{x}'} \left\|\bm{x} - \bm{x}'\right\|_2 
 \geq \frac{c_j^{(1)} \zeta_{j}^{(1)}(\bm{x}^{(1)}, \bm{x}^{(2)}) + c_j^{(2)} \zeta_{j}^{(2)}(\bm{x}^{(1)}, \bm{x}^{(2)})+ \beta_j}{\sqrt{(c_j^{(1)} \tau_j^{(11)} + c_j^{(2)} \tau_j^{(21)})^2 +(c_j^{(1)} \tau_j^{(12)} + c_j^{(2)} \tau_j^{(22)})^2 }} \\
  whe&re   \quad j \neq y \quad s.t. \quad  c_j^{(1)} \zeta_{j}^{(1)}(\bm{x}'^{(1)}, \bm{x}'^{(2)}) + c_j^{(2)} \zeta_{j}^{(2)}(\bm{x}'^{(1)}, \bm{x}'^{(2)}) + \beta_j = 0.
\end{aligned}
\end{equation}

The idea of the proof is similar to the one in \autoref{Proof_theorem_1}. 

\paragraph{Extended to more modalities.}
 In this study, our analysis discusses the universal situation of two modalities, and can also be extended to the scenario with more than two modalities. To consider more modalities, the key is to introduce the margin of these modalities’s representation. Suppose we have $l$ different modality, the input of the $m$-th modality is $\bm{x}^{(m)}$, define the representation margin $\zeta_{j}^{(m)} (\bm{x}^{(m)})$, and the corresponding Lipschitz constant $\tau_j^{(m)}$. Thus, our bound can be extended to the following formulation.
 
\begin{equation}
\begin{aligned}
  \min_{\bm{x}'} \left\|\bm{x} - \bm{x}'\right\|_2 
 \geq \frac{ \sum_{m=1}^l c_j^{(m)} \zeta_{j}^{(m)}(\bm{x}^{(m)}) +  \beta_j}{\sqrt{ \sum_{m=1}^l (c_j^{(m)} \tau_j^{(m)})^2  }} \\
  where   \quad j \neq y \quad s.t. \quad   \sum_{m=1}^l c_j^{(m)} \zeta_{j}^{(m)}(\bm{x}'^{(m)}) +  \beta_j = 0.
\end{aligned}
\end{equation}

\section{Addition for Experiment}
\subsection{Details for datasets}
\label{detail_dataset}
We conduct experiments on three datasets Kinetics-Sounds  \citep{arandjelovic2017look}, UCF101~\citep{soomro2012ucf101}, and VGGSound~\citep{chen2020vggsound}. Here are the details.  

Kinetics-Sounds (KS) \citep{arandjelovic2017look} is a dataset containing 31 human action classes selected from Kinetics dataset~\citep{kay2017kinetics} which contains 400 classes of YouTube videos. All videos are manually annotated for human action using Mechanical Turk and cropped to 10 seconds long around the action. The 31 classes were chosen to be potentially manifested visually and aurally, such as playing various instruments. This dataset contains 22k 10-second video clips, and for each video, we select 3 frames (first, middle, and last frame)

UCF101 \citep{soomro2012ucf101} consists of in-the-wild videos from 101 action classes, it is typically regarded as a multi-modal dataset with RGB and optical flow modalities. This dataset contains 13,320 videos of human actions across 101 classes that have previously been used for video synthesis and prediction. For each video, we select the middle frame of RGB and 10 frames of optimal flow for training and prediction. And our backbone is the pre-trained ResNet 18. 

VGGSound(VGGS)~\citep{chen2020vggsound} is a large-scale video dataset that contains 309 classes, covering a wide range of audio events in everyday life. All videos in VGGSound are captured “in the wild” with audio-visual correspondence in the sense that the sound source is visually evident. The duration of each video is 10 seconds. In our experimental settings, 168,618 videos are used for training and validation, and 13,954 videos are used for testing because some videos are not available now on YouTube. For each video, we also select the middle frame for prediction.

\subsection{Effectiveness Validation}

\subsubsection{Validation across various models}

\paragraph{CRMT with different backbones} 
To evaluate the adaptability and effectiveness of our method across diverse frameworks, we conduct comprehensive experiments on the Kinetic-Sounds dataset utilizing different backbone architectures. These architectures encompass the use of ResNet34 for both Vision (V) and Audio (A) modalities, as well as an integrated approach combining ResNet18 for Vision (V) and a Transformer model for Audio (A). The results in \autoref{different_backbone} of these experiments demonstrate both the improvement and flexibility of our method when implemented across different backbones.

\begin{table}[htp!]
\caption{Comparative analysis of robustness across different backbones, comparing our method (CRMT-JT) with the joint training (JT) approach. }
\begin{tabular}{c|c|c|cc|cc}
\toprule
Backbone                                                                               & Method        & Clean  & Missing \#v
 & Missing \#a & FGM    & PGD-$\ell_2$ \\ \midrule
\multirow{2}{*}{\begin{tabular}[c]{@{}c@{}}ResNet34 (V) +\\ ResNet34 (A)\end{tabular}} & JT            & 0.6424 & 0.4528     & 0.2471     & 0.3132 & 0.2863 \\
                                                                                       & CRMT-JT & 0.7435 & 0.5269     & 0.5705     & 0.4978 & 0.4746 \\ \midrule
\multirow{2}{*}{\begin{tabular}[c]{@{}c@{}}Resnet18(V)\\ +Transformer(A)\end{tabular}} & JT            & 0.5538 & 0.3387     & 0.2703     & 0.2456 & 0.2100 \\
                                                                                       & CRMT-JT & 0.5807 & 0.3721     & 0.3539     & 0.3067 & 0.2711 \\ \bottomrule
\end{tabular}
\label{different_backbone}
\end{table}

\paragraph{CRMT with intermediate fusion strategies.}
In this manuscript, our analysis mainly focuses on the widely used late fusion method. 
To further validate our effectiveness, we delve into a widely used intermediate fusion strategy, the Multi-Modal Transfer Module (MMTM) \citep{vaezi20mmtm}, in which the interaction occurs among representations at mid-level. We compare MMTM and the one that implements our method (CRMT-MMTM) on the Kinetic-Sounds dataset. The experimental results in~\autoref{CRMT_MMTM} indicate that our method can also boost the performance and robustness for more multi-modal fusion strategies.

\begin{table}[htp!]
\centering
\caption{Experiment on the intermediate fusion strategy MMTM and with our method. }
\begin{tabular}{c|c|cc|cc}
\toprule
Method              & Clean  & Missing \#v & Missing \#a & FGM    & PGD-$\ell_2$ \\ \midrule
MMTM             & 0.6693 & 0.4542      & 0.2028      & 0.3438 & 0.3205       \\
CRMT-MMTM  & 0.6737 & 0.5211      & 0.3169      & 0.3445 & 0.3358       \\ \bottomrule
\end{tabular}
\label{CRMT_MMTM}
\end{table}

\paragraph{CRMT on pre-trained transformer.}
In the primary manuscript, we have verified the performance of our method using transformers trained from scratch. To extend the validation of our approach's effectiveness, we have conducted additional experiments utilizing an ImageNet-pretrained Transformer applied to the Kinetic-Sounds dataset. The findings of this extended analysis, detailed in \autoref{pretrained_trainsformer}, demonstrate that our method retains its effectiveness even when implemented with a pre-trained model. This further verifies the robustness and adaptability of our approach in diverse training scenarios. 

\begin{table}[htp!]
\centering
\caption{Experiment on Multi-modal Transformer with pre-trained on KS dataset. }
\begin{tabular}{c|c|cc}
\toprule
Transformer    & Clean  & FGM    & PGD-$\ell_2$ \\ \midrule
Baseline             & 0.6788 & 0.1366 & 0.0865       \\
CRMT (ours)  & 0.7406 & 0.3198 & 0.2078       \\ \bottomrule
\end{tabular}
\label{pretrained_trainsformer}
\end{table}

\subsubsection{Validation under different setting}

\paragraph{Robustness evaluation under various multi-modal attack. }
In the Experiment section, we initially describe two multi-modal attack methods: Fast Gradient Method (FGM) and Projected Gradient Descent with $\ell_2$ norm (PGD-$\ell_2$). To further substantiate the robustness of our approach, this section introduces and evaluates three additional multi-modal attack methods. Firstly, we implement the Multi-modal Embedding Attack (MEA) method as proposed by \citet{zhang2022towards}, which focuses on shifting the joint representation rather than altering the prediction output. Secondly, we explore the efficacy of our method against the Multi-modal Gaussian Noise (MGN) and the Multi-modal Pixel Missing (MPM) attacks. The results in \autoref{mm_attack} of these expanded experiments demonstrate that our methodology is not only robust to the previously examined attacks (FGM and PGD-$\ell_2$) but also capable of defending various multi-modal attacks.

\begin{table}[htp!]
\centering
\caption{Comparison of robustness against different multi-modal attacks on KS dataset. }
{\footnotesize
\setlength{\tabcolsep}{5pt}
\begin{tabular}{c|c|ccc|ccc|ccc}
\toprule
    & JT    & GB    & OGM   & PMR   & AT    & Mixup & MSEFM & CMRT-JT & CRMT-Mix & CRMT-AT \\ \midrule
MGN & 0.525 & 0.462 & 0.527 & 0.480 & 0.524 & 0.501 & 0.477 & 0.560   & 0.634    & 0.604   \\
MPM & 0.315 & 0.344 & 0.446 & 0.328 & 0.370 & 0.467 & 0.298 & 0.507   & 0.570    & 0.548   \\
MEA & 0.340 & 0.412 & 0.363 & 0.455 & 0.588 & 0.478 & 0.500 & 0.556   & 0.565    & 0.689   \\  \bottomrule
\end{tabular}
}
\label{mm_attack}
\end{table}

\paragraph{Extended to the UCF dataset with three modalities.}
During our analysis and experiments, we consider the universal situation containing two modalities. Our method can also be extended to the scenario with more than two modalities. We conduct experiments on the UCF101 dataset using three modalities: RGB, Optical Flow, and RGB Frame Difference. 
We compared the performance of methods for both trained from scratch and with pre-training ImageNet-pretrained ResNet18. The experimental results detailed in \autoref{three_modality}. The outcomes demonstrate the effectiveness of our method in enhancing multi-modal robustness in more than two modalities.

\begin{table}[htp!]
\centering
\caption{Comparison on UCF101 with three modalities. }
\begin{tabular}{c|cc|cc}
\toprule
             & \multicolumn{2}{c|}{Training from scratch} & \multicolumn{2}{c}{Pretrained} \\ \midrule
Attack       & JT                  & CRMT-JT              & JT            & CRMT-JT        \\ \midrule
Clean        & 0.4490              & 0.5640               & 0.8312        & 0.8506         \\
FGM          & 0.4005              & 0.4567               & 0.2471        & 0.4138         \\
PGD-$\ell_2$ & 0.3963              & 0.4312               & 0.0783        & 0.2623         \\ \bottomrule
\end{tabular}
\label{three_modality}
\end{table}

\paragraph{Experiment on Image-Text dataset.}
We apply experiments on vision-text classification tasks to verify our effectiveness. We utilize a widely used Food101 dataset, which inputs an image-text pair for classification. We employ a Vision Transformer (ViT) as our image encoder and BERT as our text encoder, subsequently concatenating their outputs to achieve a unified joint representation. To evaluate robustness, we apply multiple attacks including modality missing and descent-based attacks (FGM and PGD-$\ell_2$). It is important to note that attacks like the Fast Gradient Method (FGM) and Projected Gradient Descent with $\ell_2$ norm (PGD-$\ell_2$) are typically applied to continuous data. Given that text represents discontinuous data, we focus on implementing these attack methods on the image modality. Our results reveal that the text modality is more critical than the image modality, as its absence significantly impacts model performance, as shown in \autoref{image_text}. Concentrating on the $\ell_2$-norm, we achieve enhanced robustness under both FGM and PGD-$\ell_2$ attack. Our method demonstrates a notable performance increase in scenarios where text is absent, though there is a slight decline in performance when the image modality is missing. This is attributed to the huge performance difference between text and image.
\begin{table}[htp!]
\centering
\caption{Comparison of our proposed CRMT method on Image-Text dataset Food101. }
\begin{tabular}{c|c|cc|cc}
\toprule
Methods & Clean  & FGM on Image & PGD-$\ell_2$ on Image & Missing Text & Missing Image \\ \midrule
JT      & 0.8218 & 0.7603       & 0.7281                & 0.0497    & 0.7831    \\
CRMT-JT & 0.8257 & 0.7656       & 0.7313                & 0.0759    & 0.7779    \\ \bottomrule
\end{tabular}
\label{image_text}
\end{table}

\paragraph{Comparison with additional robust methods}
% Despite the comparison methods in the Experiment section, we here further validate our method comparing with more multi-modal robustness approaches. We compare our method with two recently proposed methods; Robust Multi-Task Learning (RMTL) \citep{ma2022multimodal} and Uni-Modal Ensemble with Missing Modality Adaptation (UME-MMA) \citep{li2023makes}. For RMTL, we apply the idea of multi-task learning to obtain a robust multi-modal model, which includes full-modal tasks, and modality-specific tasks. We apply these attacks on the Kinetic-Sounds dataset. As shown in \autoref{more_robustness}, our method outperforms these approach in both modality missing and attacks, indicating that our analysis and method are more reasonable and effective.
In addition to the comparative analyses presented in the Experiment section, we further extend the validation of our method by contrasting it with additional multi-modal robustness approaches. Specifically, we compare our approach with two recent methods: Robust Multi-Task Learning (RMTL) as described by \citet{ma2022multimodal} and Uni-Modal Ensemble with Missing Modality Adaptation (UME-MMA) proposed by \citet{li2023makes}. For the RMTL implementation, we adopt a multi-task learning framework to develop a robust multi-modal model, incorporating both full-modal and modality-specific tasks. These methods are applied and compared to the Kinetic-Sounds dataset. As illustrated in \autoref{more_robustness}, our method demonstrates superior performance in scenarios involving both modality absence and targeted attacks. These results present the robustness and effectiveness of our approach and the validity of our proposed analysis.

\begin{table}[htp!]
\centering
\caption{Comparison with recent multi-modal attack approaches. }
\begin{tabular}{c|c|cc|cc}
\toprule
Method  & Clean  & Missing \#v & Missing \#a & FGM    & PGD-$\ell_2$ \\ \midrule
UME-MMA & 0.6999 & 0.5334      & 0.4666      & 0.3394 & 0.3125       \\
RMTL    & 0.6672 & 0.5015      & 0.2994      & 0.3641 & 0.3445       \\
CRMT-JT & 0.7580 & 0.5596      & 0.5908      & 0.4906 & 0.4680       \\ \bottomrule
\end{tabular}
\label{more_robustness}
\end{table}

\begin{table}[htp!]
\caption{Performance against different sizes of FGM attack on UCF101 dataset to evaluate the robustness of different iterations in CRMT.}
\vspace{-1em}
\label{different_iter}
{\footnotesize
\begin{tabular}{c|ccccccccc}
\toprule
Attack Size   & 0.0    & 0.2    & 0.4    & 0.7    & 1.0    & 1.5    & 2.0    & 2.5    & 3.0    \\ \midrule
Iter 1 & 0.7888 & 0.5511 & 0.4605 & 0.4120 & 0.3851 & 0.3841 & 0.3732 & 0.3647 & 0.3562 \\ 
Iter 2 & 0.7888 & 0.5514 & 0.4589 & 0.4029 & 0.3864 & 0.3851 & 0.3737 & 0.3648 & 0.3559 \\
Iter 4 & 0.7891 & 0.5543 & 0.4660 & 0.4113 & 0.3921 & 0.3881 & 0.3769 & 0.3705 & 0.3629 \\ \bottomrule
\end{tabular}
}
\end{table}
\subsection{Iterations in training procedure}

\label{Iterations}

In the method section's concluding part, we currently perform two sequential steps at once. However, it is worth noting that in various applications, incorporating more iterations in the training procedure is a common practice, which holds the potential to further enhance the robustness of our method.
To validate the validity of our approach, we conducted experiments by extending our method to incorporate more iterations. As depicted in \autoref{different_iter}, the results of employing additional iterations demonstrate improved robustness capabilities. But only one iteration of our method can achieve considerable robustness, without a huge consumption of training cost.
\subsection{Experiments about modality preference}

\subsubsection{Verification of vulnerable modality}
As shown in \autoref{preference} in the manuscript, we demonstrate how the ratio of the vulnerability indicator ($\eta$) varies in our method. Furthermore, to clearly explain this phenomenon, in \autoref{etas} (a, b), we provide the heat map of the indicator $\eta$ to represent the robustness of each uni-modality, where the smaller indicator means the modality is more robust. Meanwhile, we also provide the uni-modal perturbation radius to further verify this modality preference, where we list the percentage of safe samples that can always resist this size of perturbation, as shown in \autoref{etas} (c). 
It can be seen that when joint training is applied to the Kinetic-Sounds dataset, the audio modality is definitely more vulnerable than the vision modality, thus explaining the phenomenon in \autoref{teasor} in the paper. 

\begin{figure}[htp!]
\vspace{-3mm}
\centering
% \hspace{0\textwidth}
\subcaptionbox{$\eta^{(v)}$ in JT. }{
\includegraphics[width=0.31\textwidth]{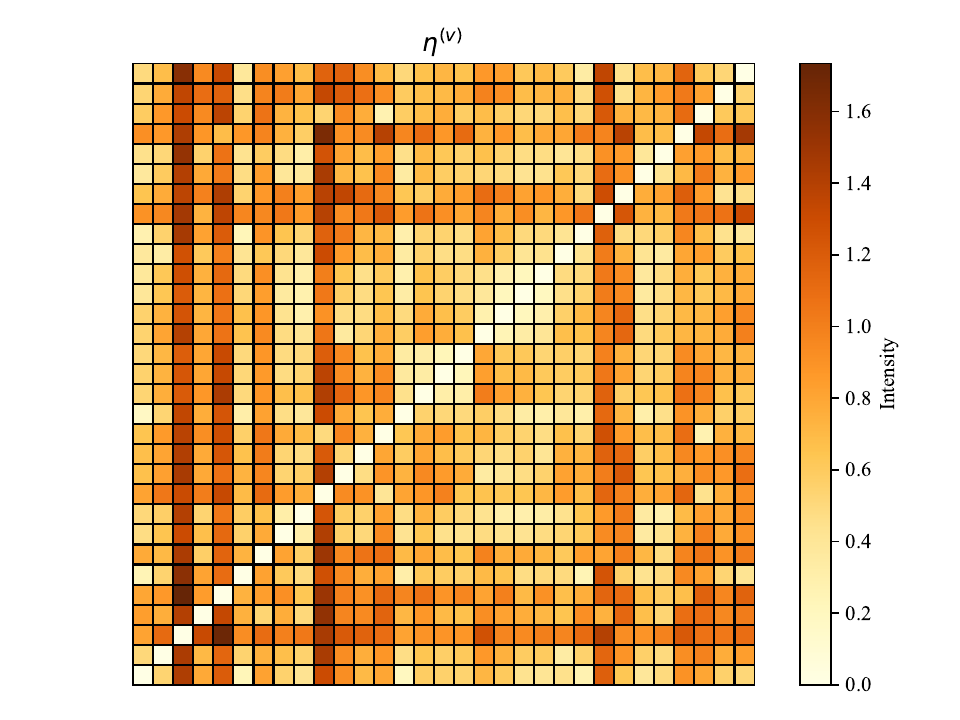}}
\subcaptionbox{$\eta^{(a)}$ in JT.}{
\includegraphics[width=0.31\textwidth]{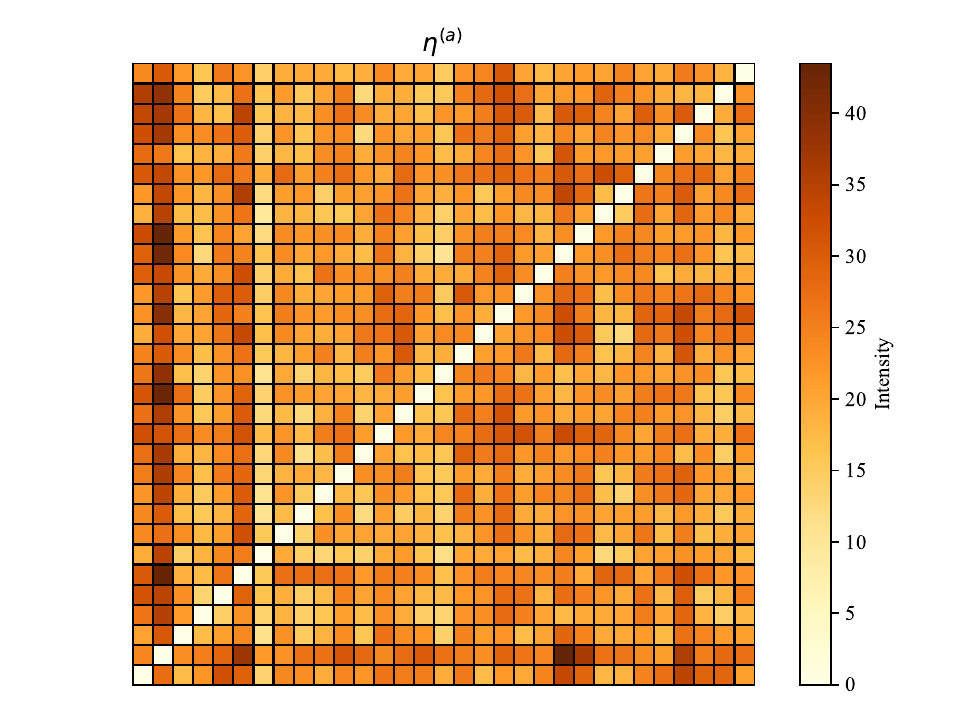}}
\subcaptionbox{Uni-modal perturbation bound.}{
\includegraphics[width=0.31\textwidth]{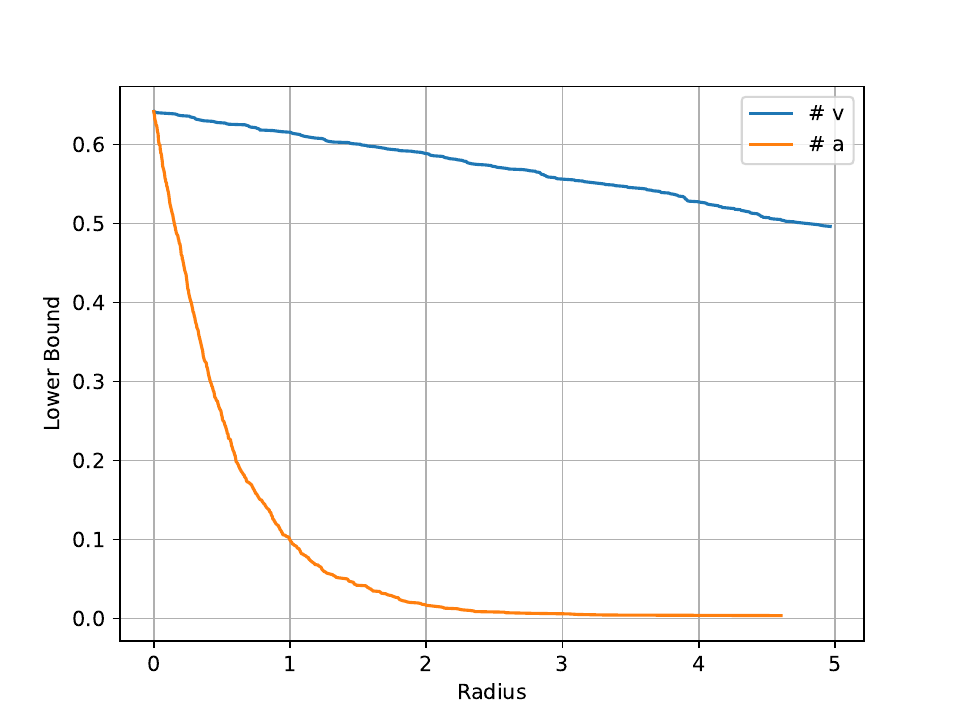}}
\vspace{-2mm}
\caption{Evaluation of the vulnerability indicators $\eta$ between modality \#$v$ and \#$a$ (preferred) and the comparison of safe samples against uni-modal perturbation in the Joint Training (JT) method. 
}
\label{etas}
\vspace{-1mm}
\end{figure}

\subsubsection{Uni-modal representation analysis}

In this part, we verify the imbalanced problem brought by modality preference, which results in the low quality of uni-modal representation. Here, we apply \textit{t-distributed Stochastic Neighbor Embedding} (t-SNE) method on each uni-modal representation and illustrate the visualization of uni-modal representation on KS dataset. As shown in \autoref{tsne}, in JT, MMAT, and Mixup, the quality of representation in modality \#$a$ is better than modality \#$v$, which implies the margins of representation \#$v$ are relatively small. Moreover, by applying our training procedure (see CRMT-JT, CRMT-AT, CRMT-Mix), representations of both modalities show good discrimination, indicating that we enlarge the uni-modal representation margins and eventually enhance the robustness. 

\begin{figure}[htp!]
\centering
% \hspace{0\textwidth}
\subcaptionbox{JT modality \#$v$. }{
\includegraphics[width=0.235\textwidth]{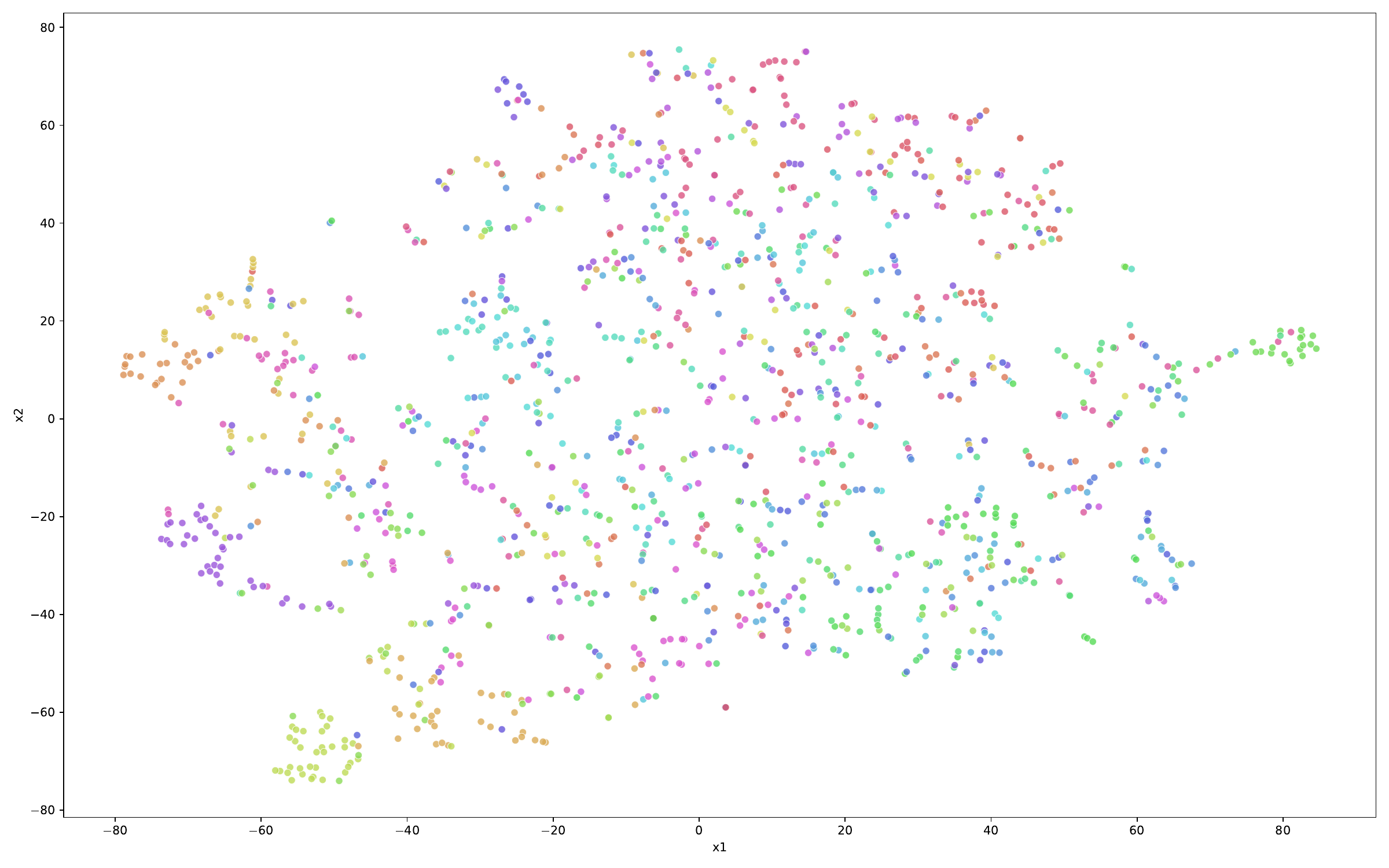}}
\subcaptionbox{JT modality \#$a$.}{
\includegraphics[width=0.235\textwidth]{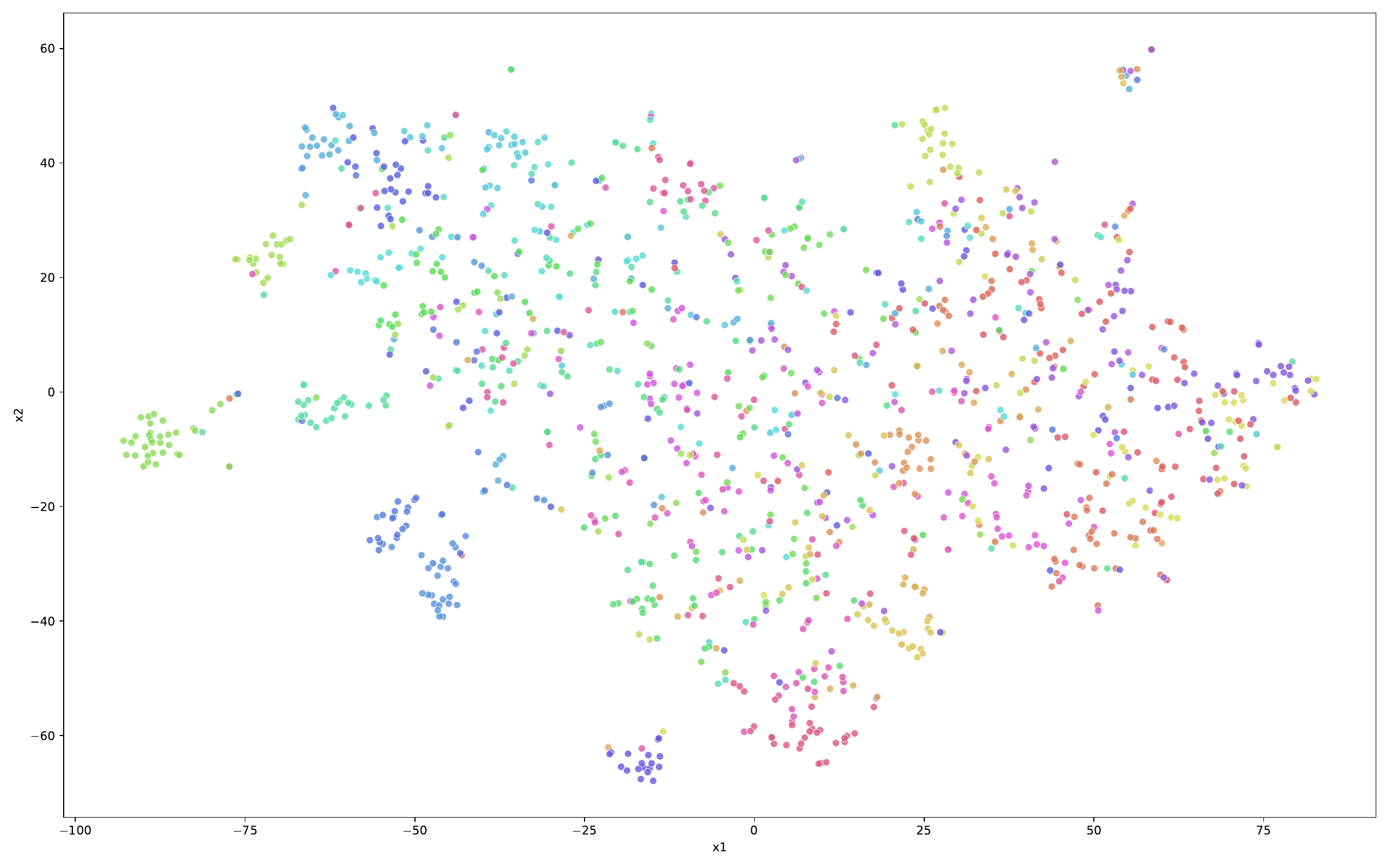}}
\subcaptionbox{CRMT-JT modality \#$v$. }{
\includegraphics[width=0.235\textwidth]{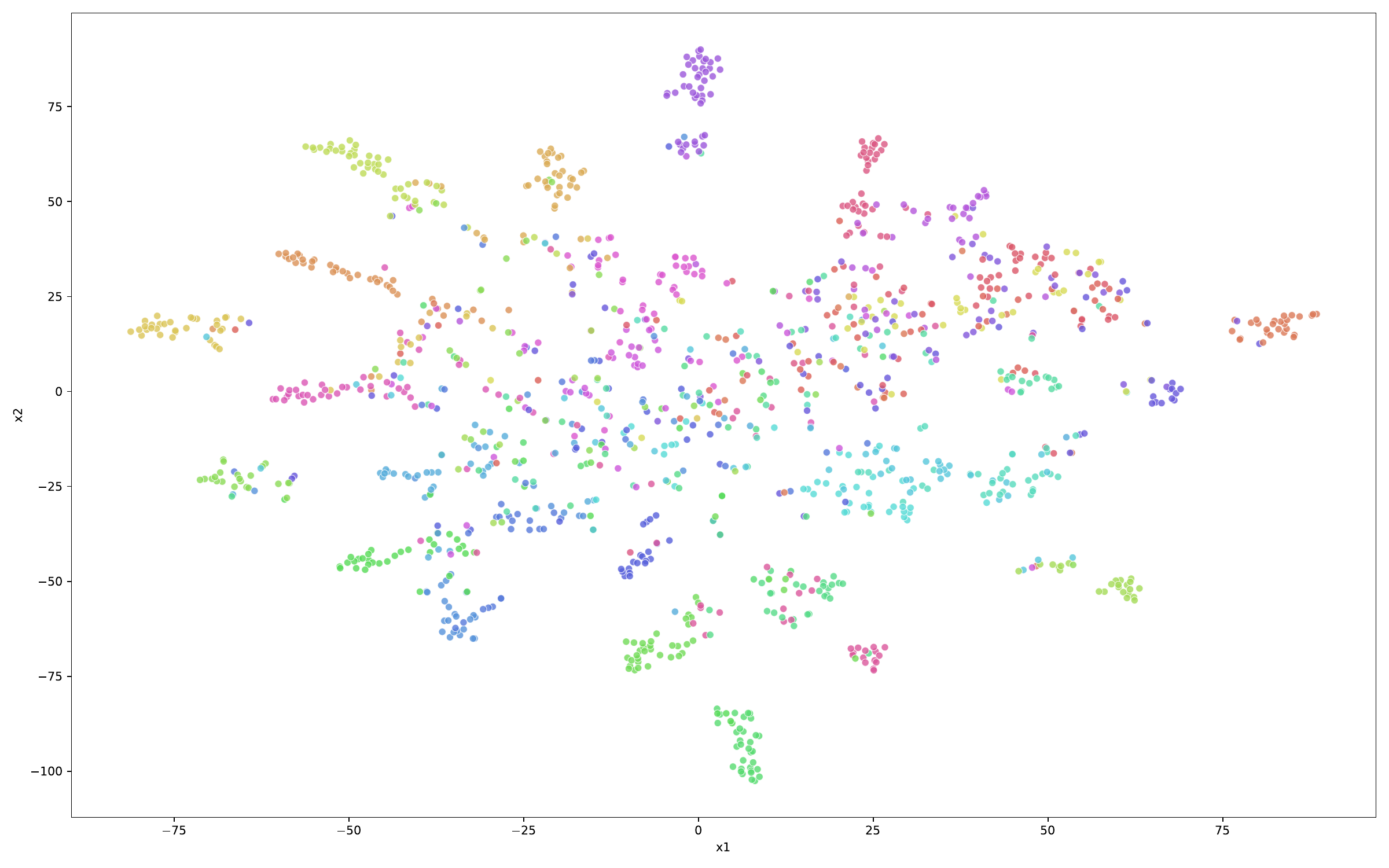}}
\subcaptionbox{CRMT-JT modality \#$a$.}{
\includegraphics[width=0.235\textwidth]{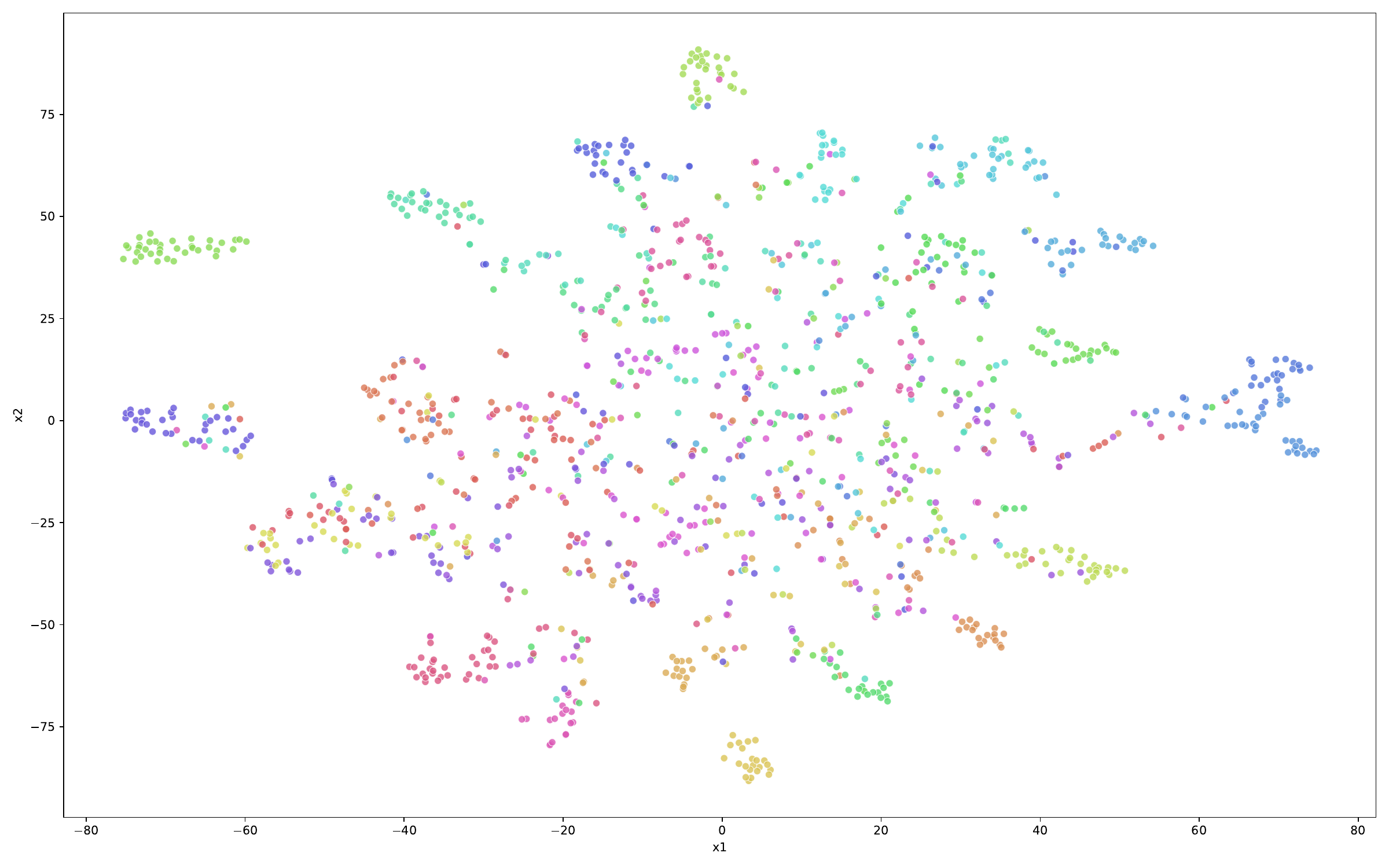}}
% \hspace{0\textwidth}
\subcaptionbox{MMAT modality \#$v$. }{
\includegraphics[width=0.235\textwidth]{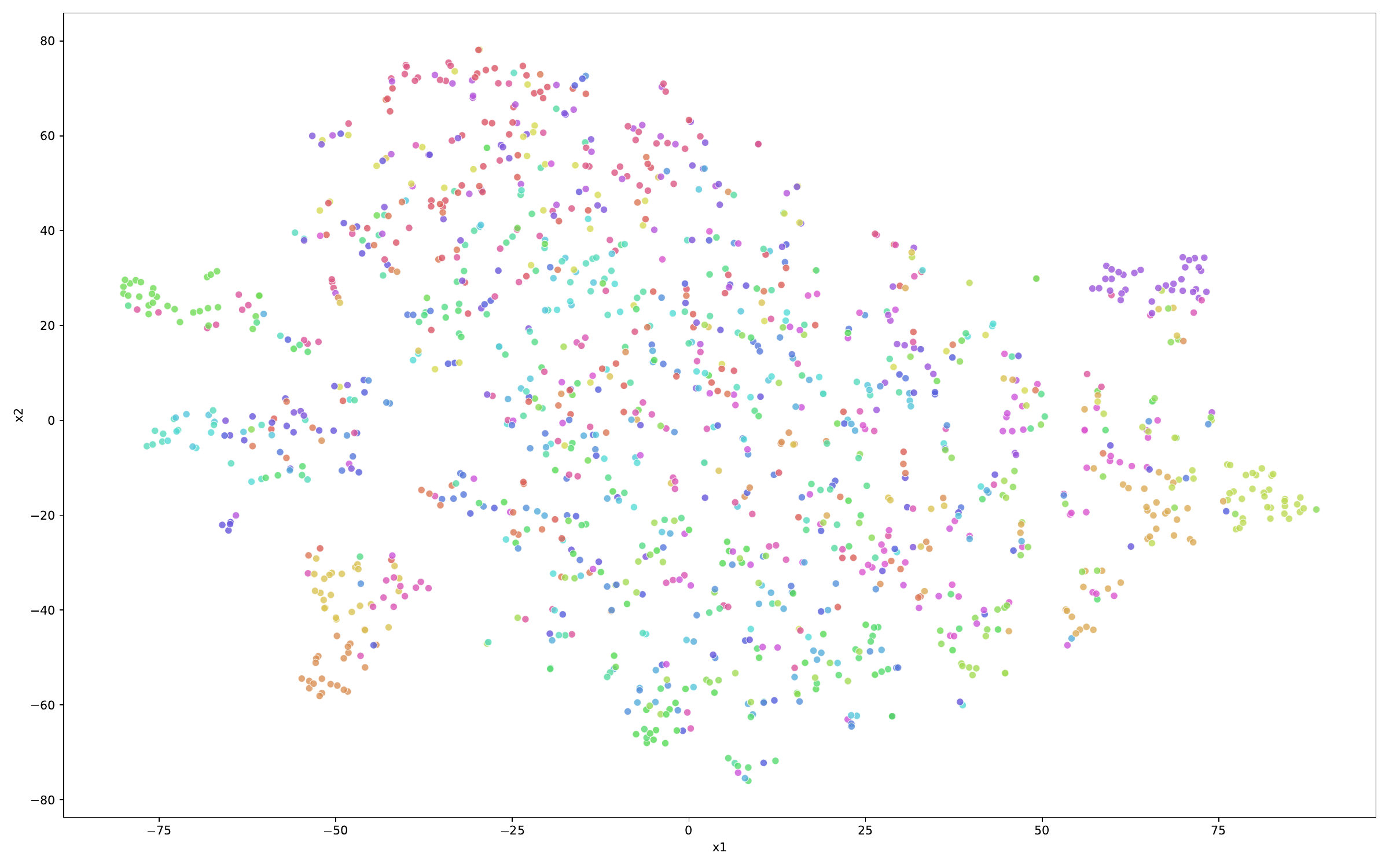}}
\subcaptionbox{MMAT modality \#$a$.}{
\includegraphics[width=0.235\textwidth]{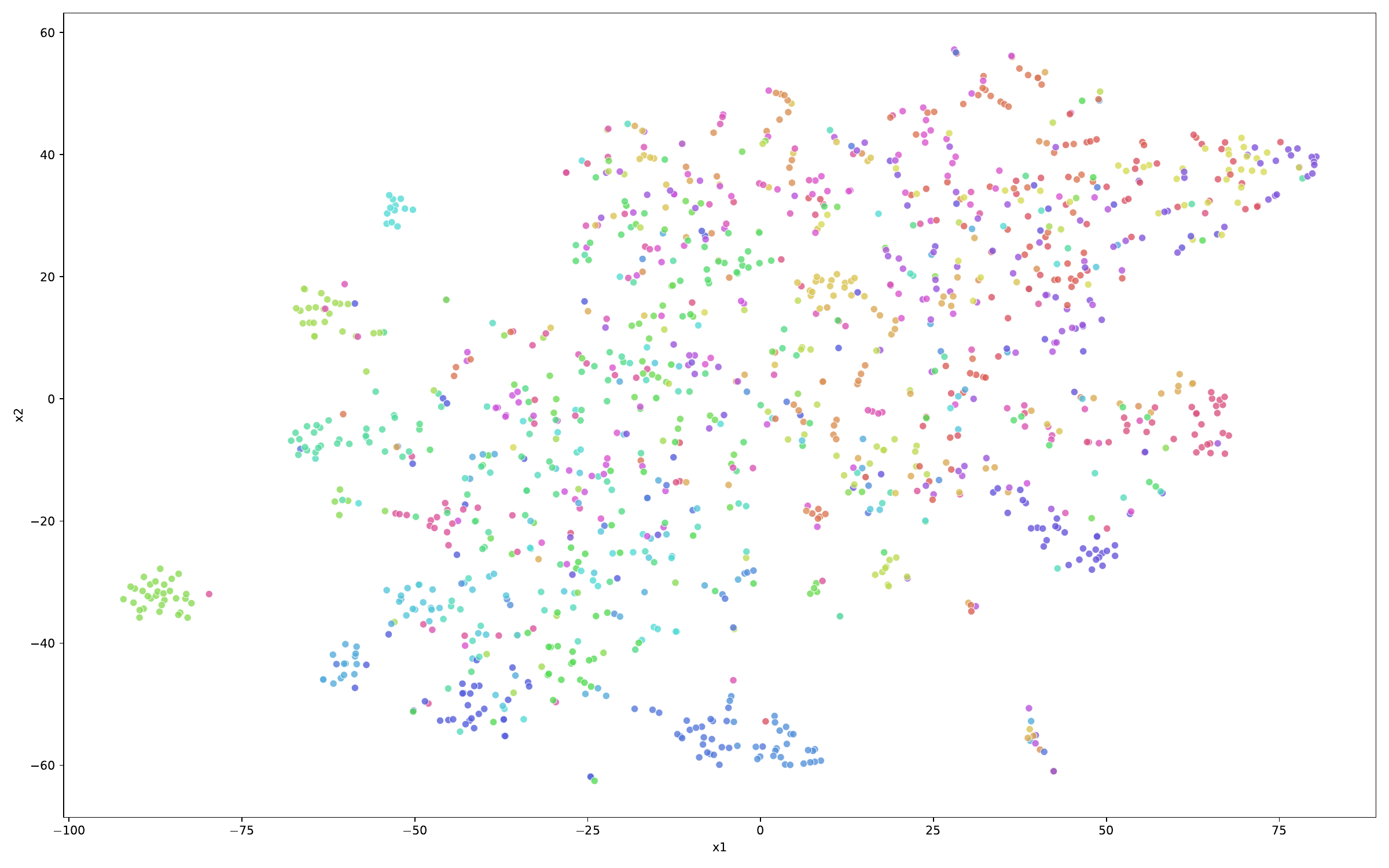}}
\subcaptionbox{CRMT-AT modality~\#$v$. }{
\includegraphics[width=0.235\textwidth]{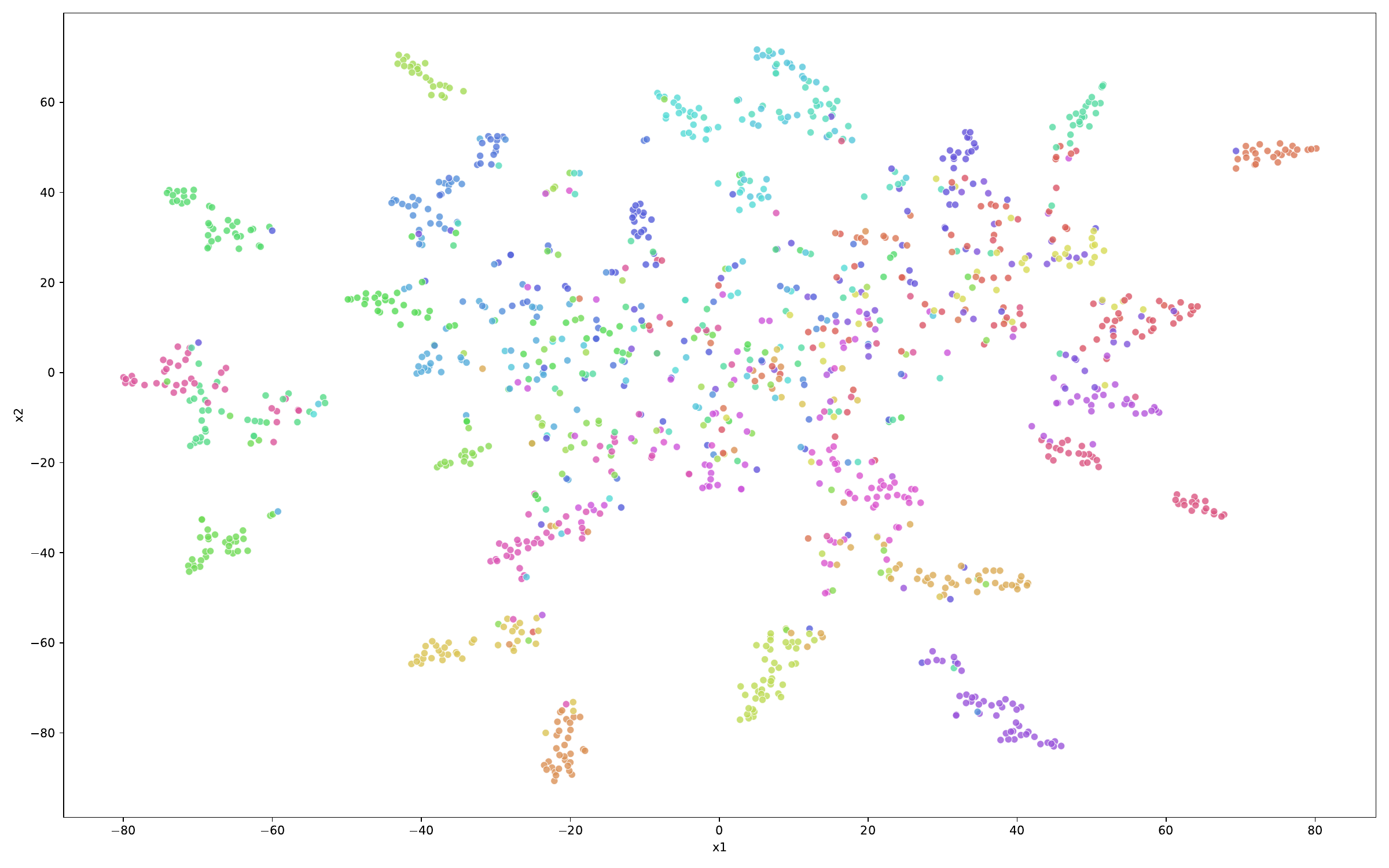}}
\subcaptionbox{CRMT-AT modality~\#$a$.}{
\includegraphics[width=0.235\textwidth]{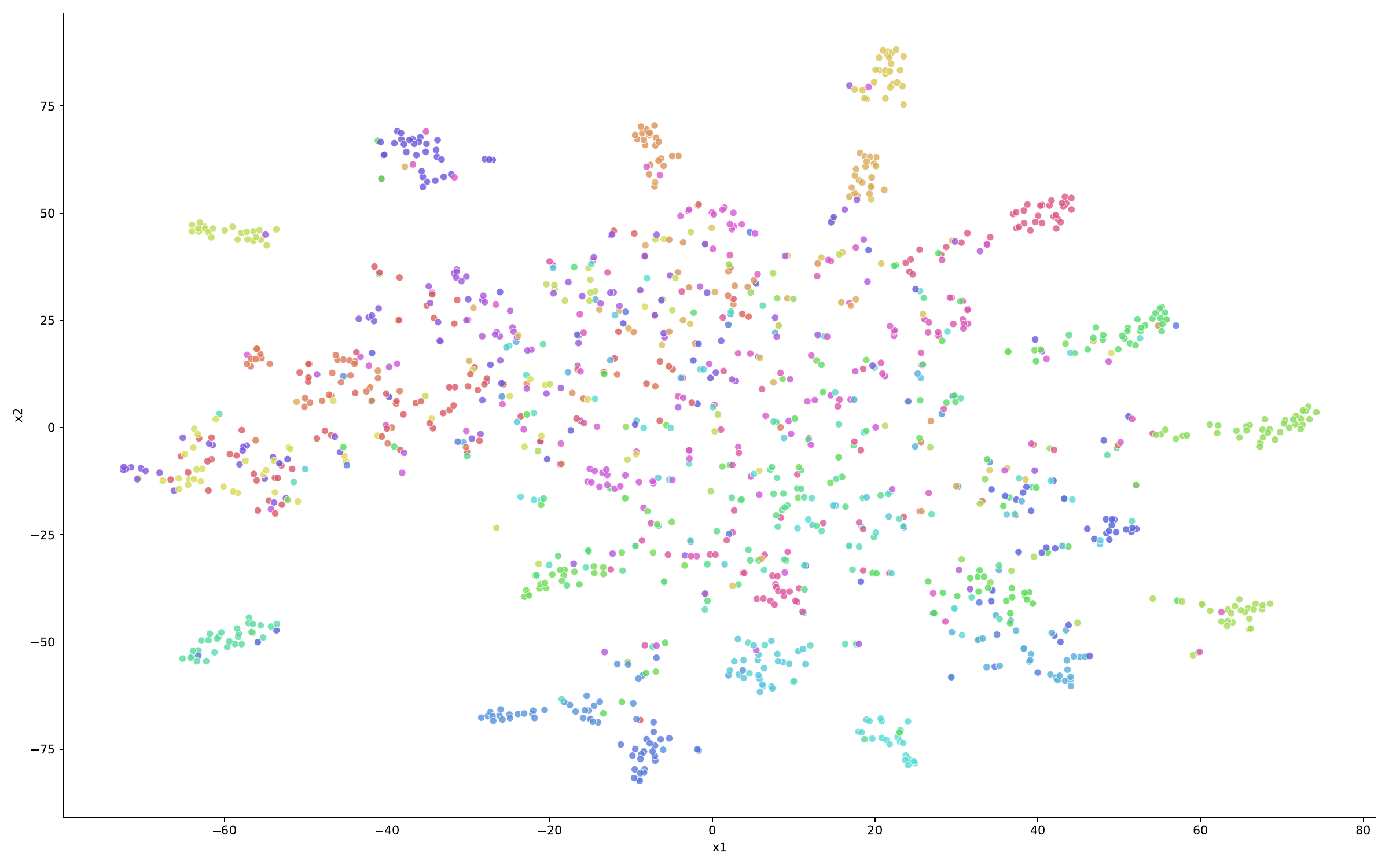}}

\subcaptionbox{Mixup modality \#$v$. }{
\includegraphics[width=0.238\textwidth]{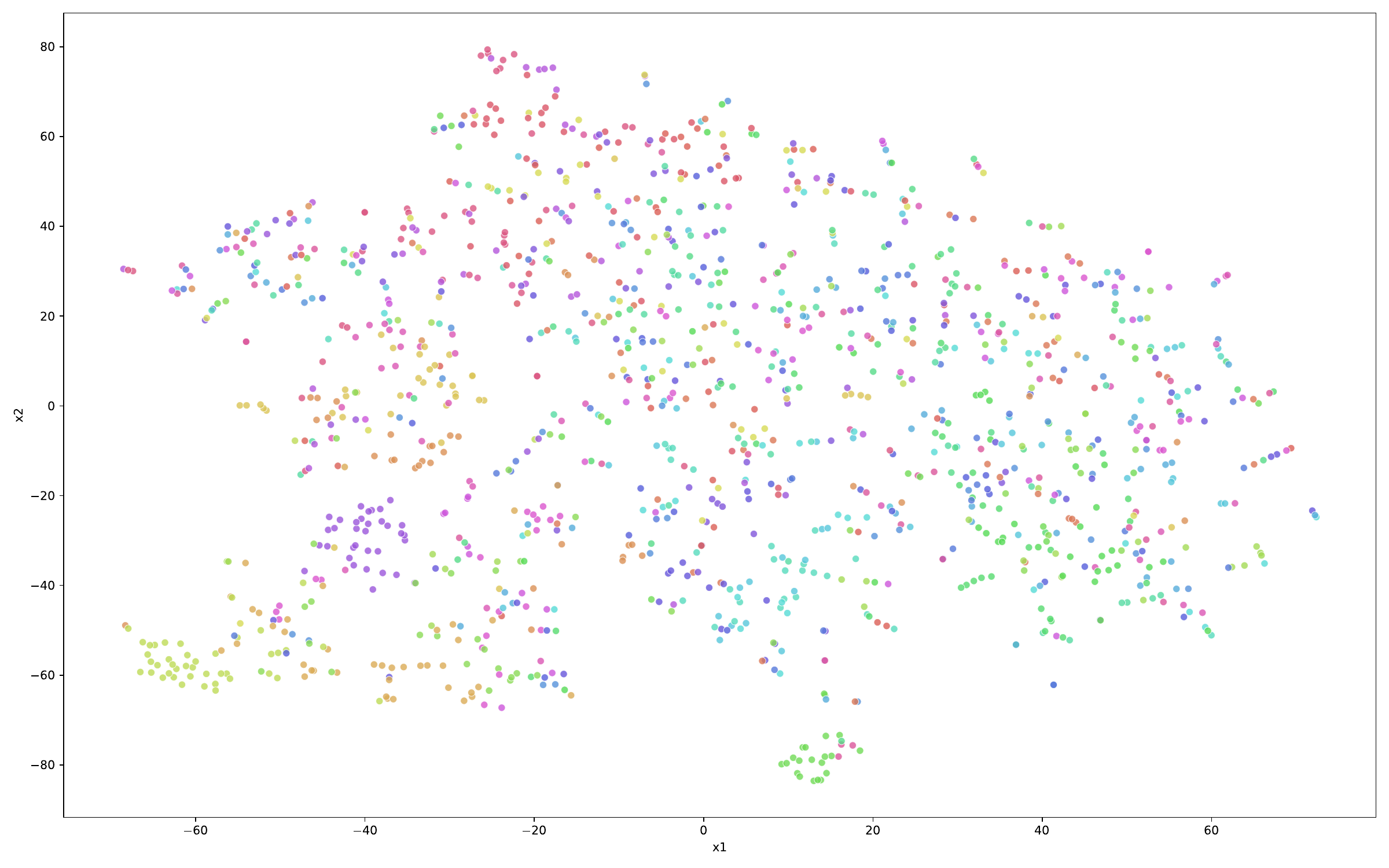}}
\subcaptionbox{Mixup modality \#$a$.}{
\includegraphics[width=0.238\textwidth]{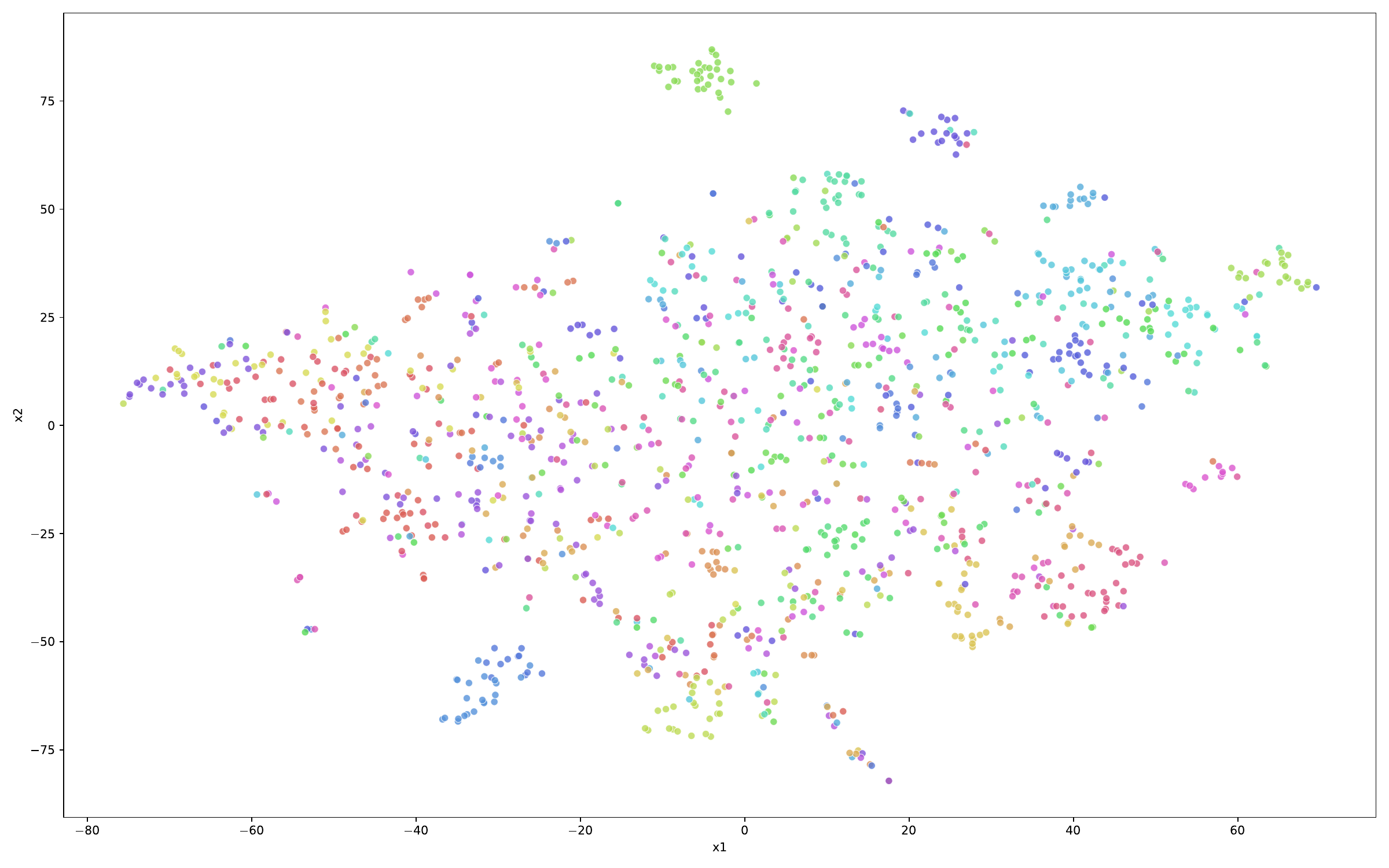}}
\subcaptionbox{CRMT-Mix modality \#$v$. }{
\includegraphics[width=0.238\textwidth]{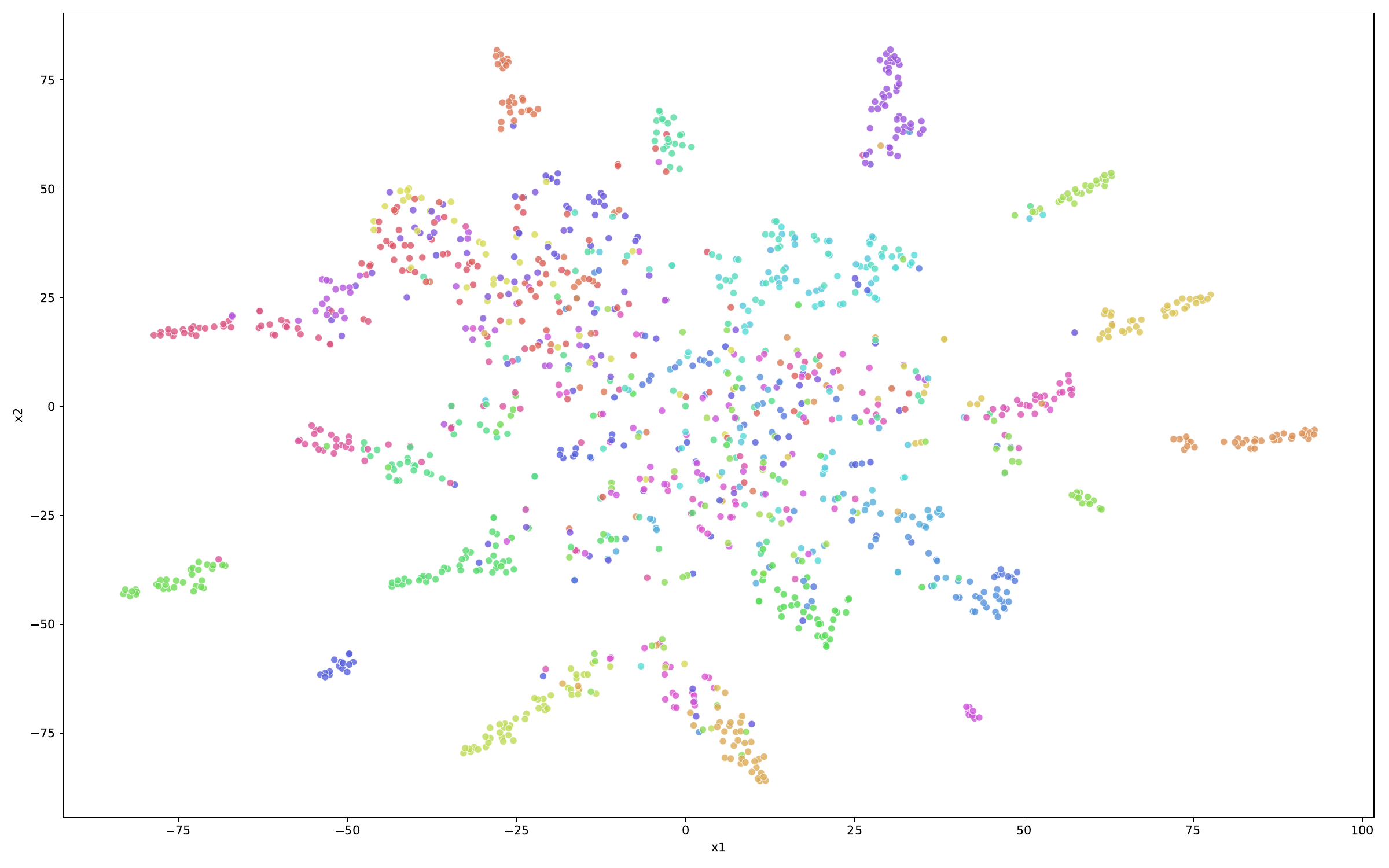}}
\subcaptionbox{CRMT-Mix modality \#$a$.}{
\includegraphics[width=0.238\textwidth]{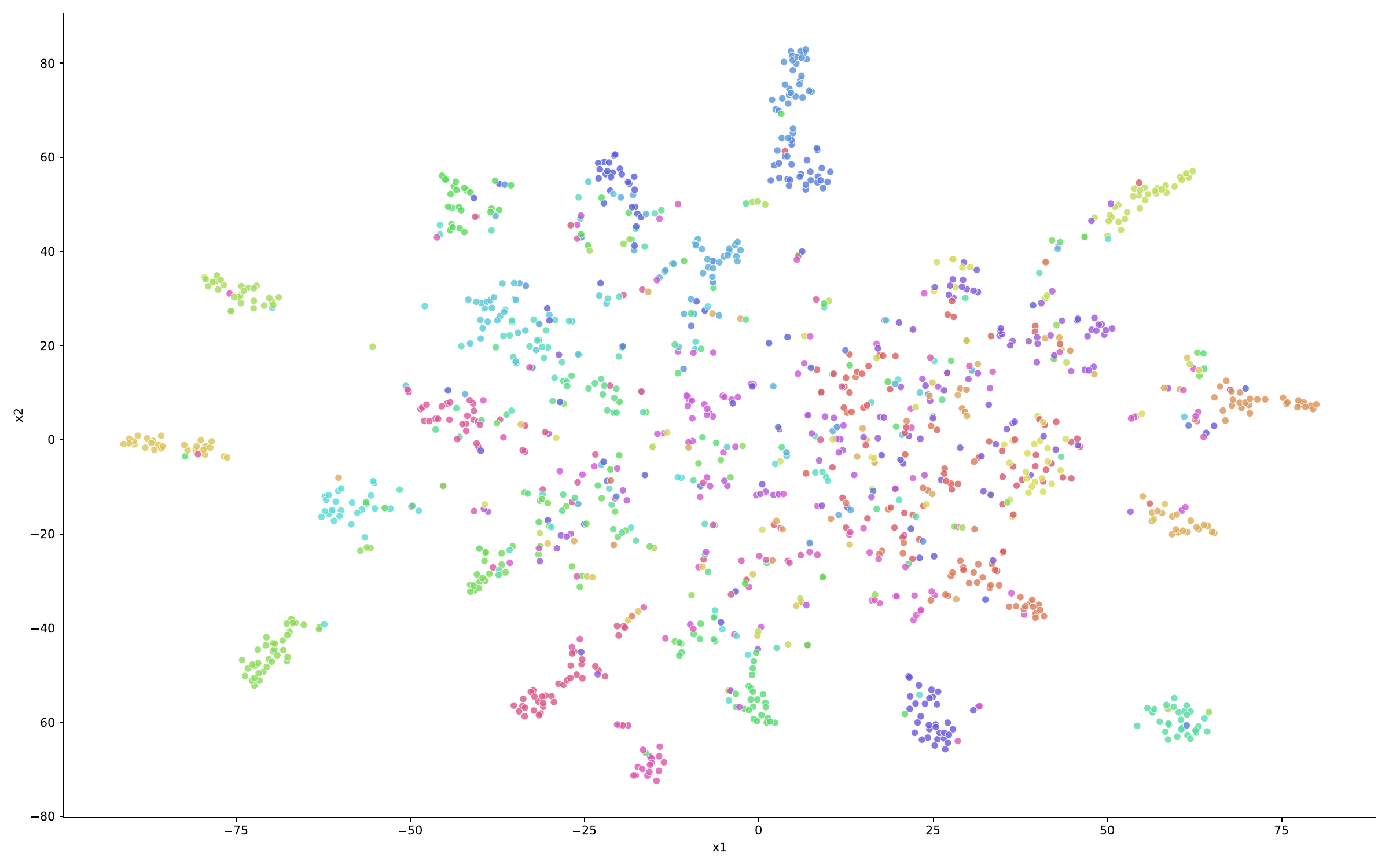}}
\caption{An experiment analyze the uni-modal representation of modality \#$v$ and \#$a$ (preferred) using t-SNE. We compare different methods and our corresponding CRMT procedure (including CRMT-JT, CRMT-AT, CRMT-Mix) and illustrate that our methods well learn the uni-modal representation. }
% Joint Training (JT), Multi-Modal Adversarial Training (MMAT) and Multi-modal Mixup (Mixup)
\label{tsne}
\end{figure}

\begin{figure}[htp!]
\vspace{-2em}
\centering
% \hspace{0\textwidth}
\subcaptionbox{$\eta^{(v)}/\eta^{(a)}$ in JT. }{
\includegraphics[width=0.32\textwidth]{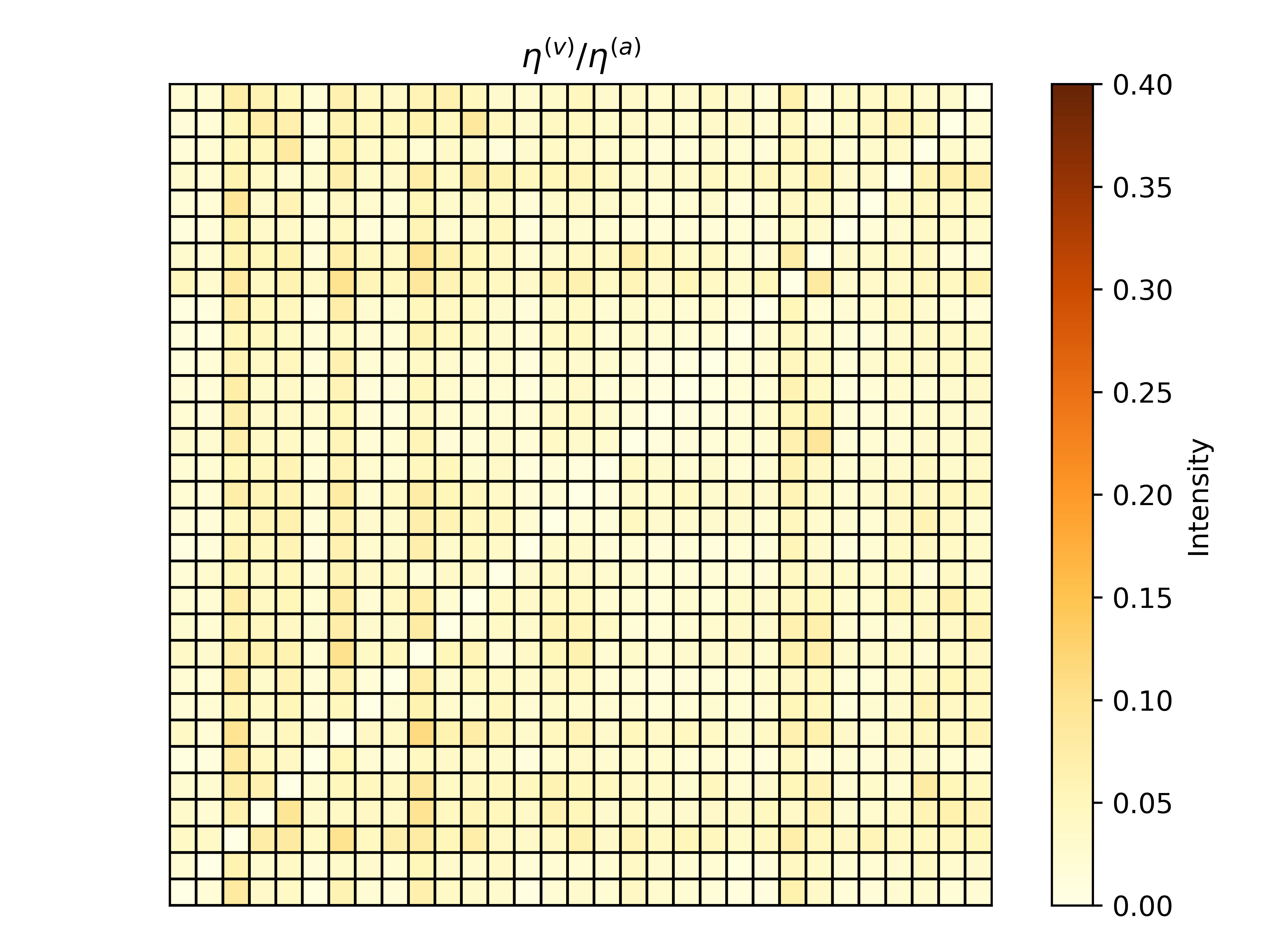}}
\subcaptionbox{$\eta^{(v)}/\eta^{(a)}$ in OJT.}{
\includegraphics[width=0.32\textwidth]{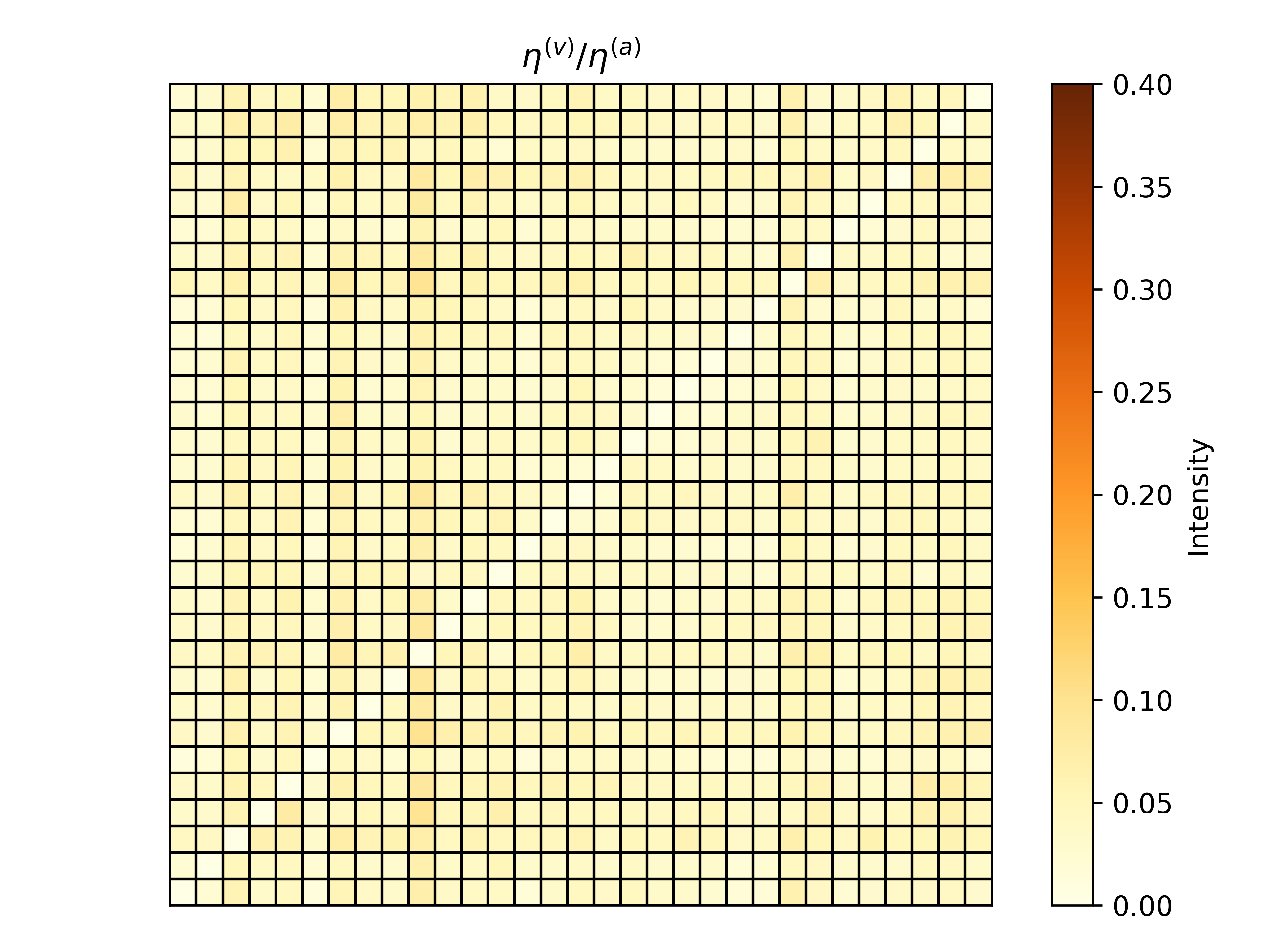}}
\subcaptionbox{$\eta^{(v)}/\eta^{(a)}$ in CRMT step-1. }{
\includegraphics[width=0.32\textwidth]{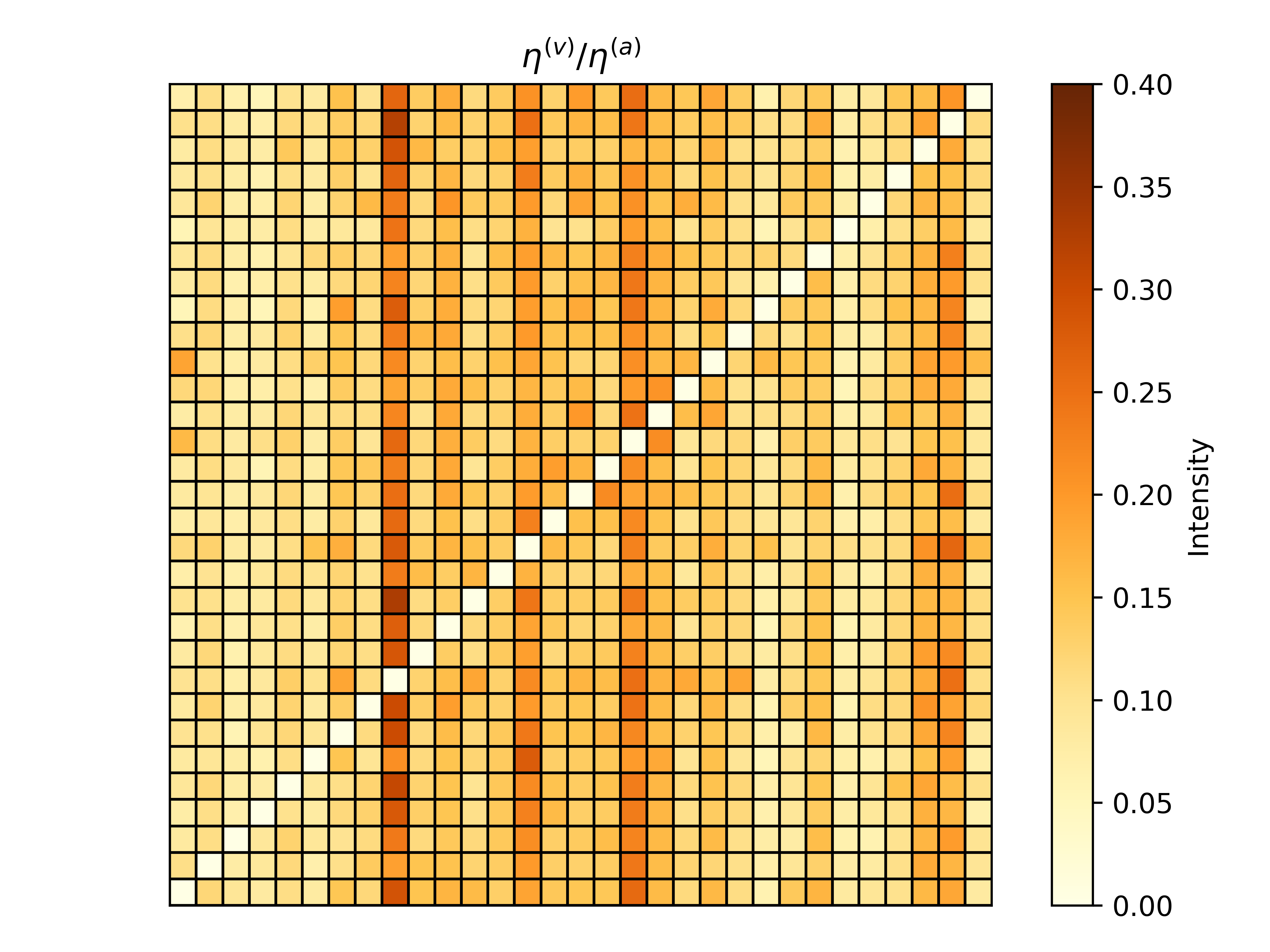}}
\subcaptionbox{$\eta^{(v)}/\eta^{(a)}$ in CRMT step-2.}{
\includegraphics[width=0.32\textwidth]{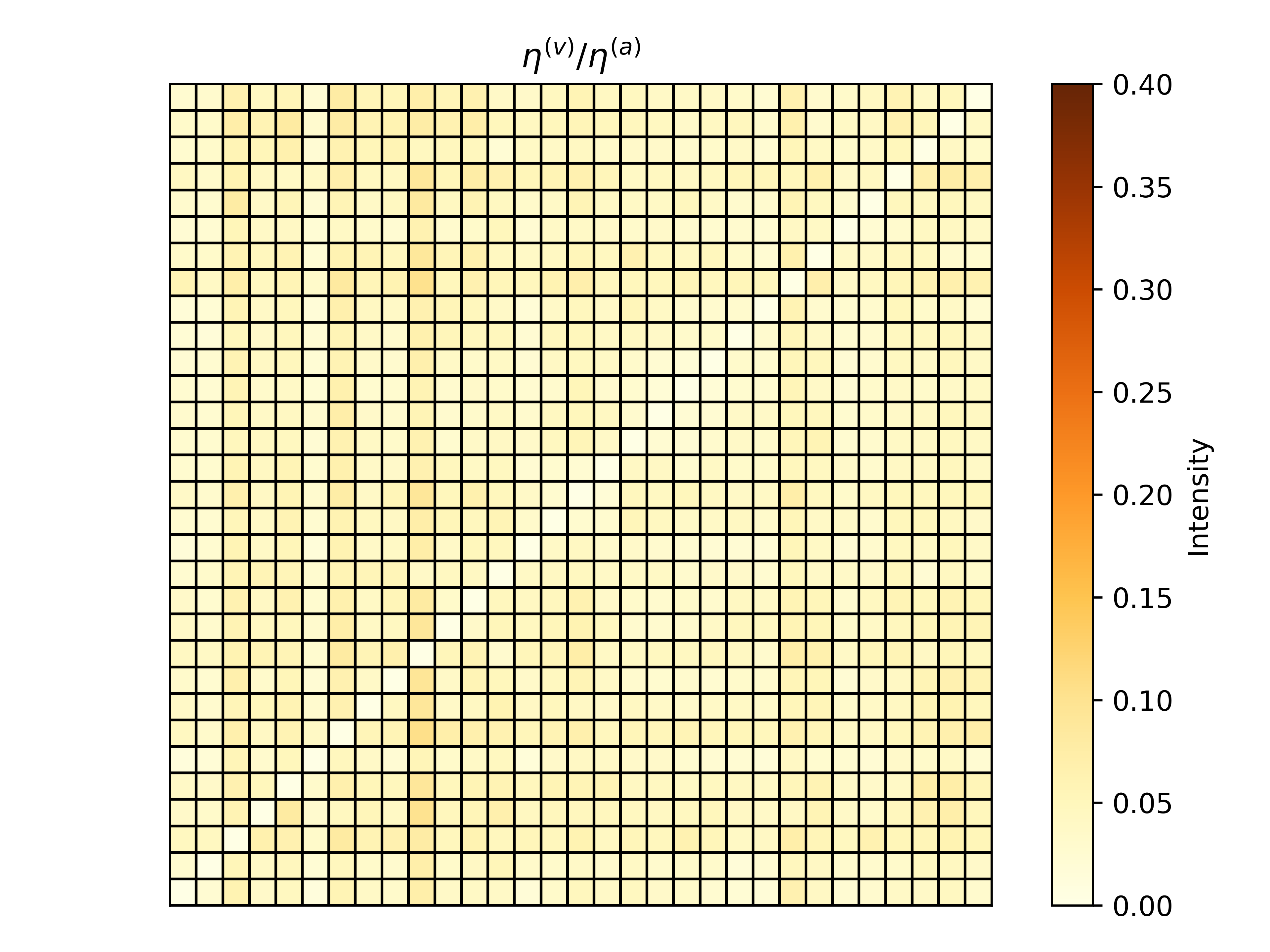}}
\subcaptionbox{$\eta^{(v)}/\eta^{(a)}$ in CRMT-JT.}{
\includegraphics[width=0.32\textwidth]{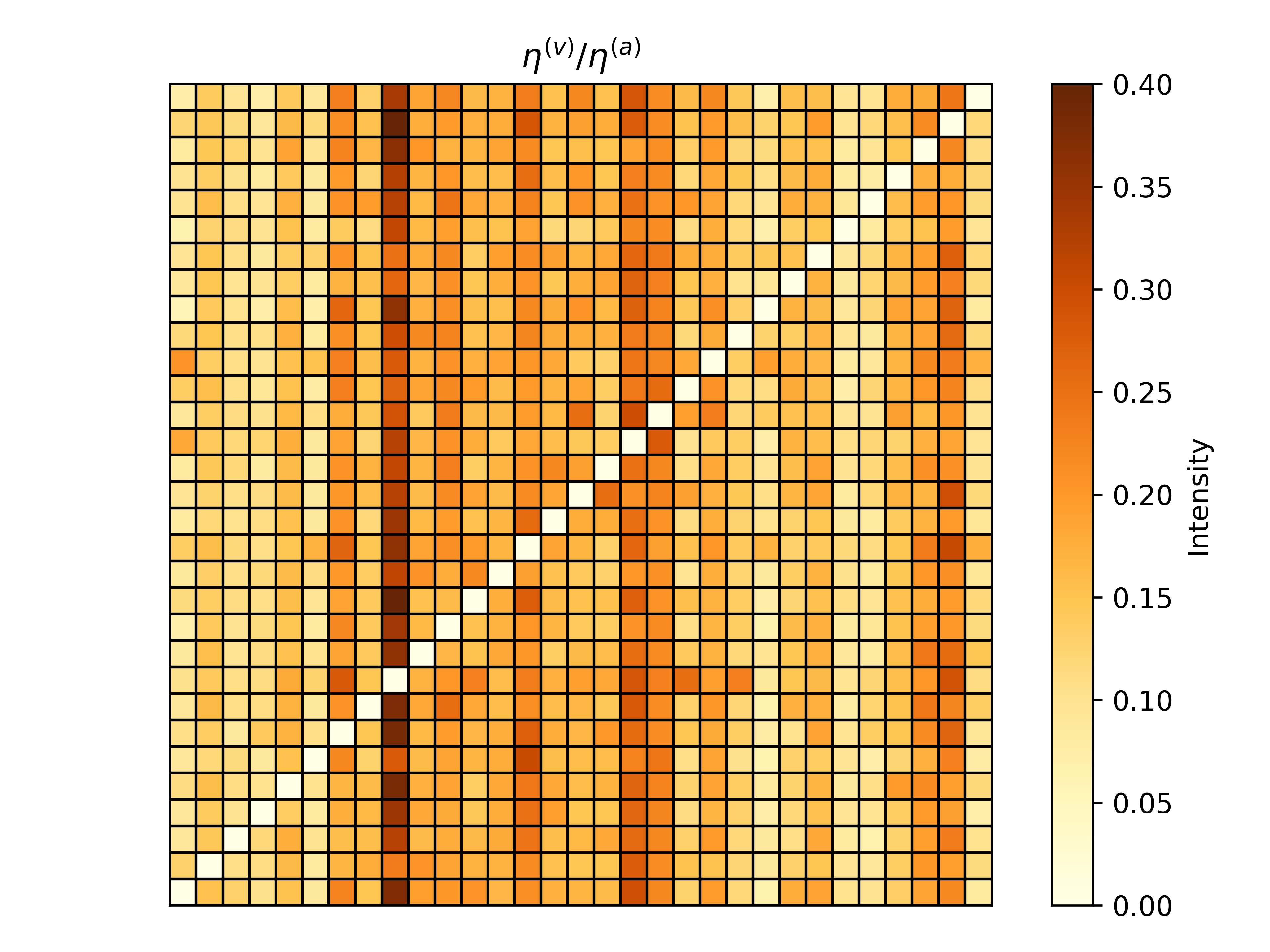}}
% \hspace{0\textwidth}
\caption{Experiment demonstrates the regularization of the vulnerability indicator in ablation studies.  }
\label{etas}
\end{figure}

\subsubsection{Complement for Ablation studies}

As shown in \autoref{ablation_studies}, we verify the effectiveness of each step of our training process. Here we provide more evidence that the improved multi-modal robustness is because we alleviate the influence caused by modality preference. Similar to the ablation studies, we report our findings related to solely using the orthogonal framework (OJT), our CRMT focusing exclusively on each step (CRMT-step-1 and CRMT-step-2), and our full method, CRMT-JT, which incorporates both steps. We also apply the ratio of uni-modal robustness indicators, to show the difference of resistance against uni-modal attack. From these heat maps of the ratio in \autoref{etas}, we have the following observations. First, since JT and OJT are different frameworks, their imbalance problem occurs in different class pairs (\textit{i.e.} entries in the heat map). And just applying the orthogonal framework could hardly solve the imbalance problem on the robustness indicator.  
Second, since the uni-modal representation may not be well learned, just applying step-2 of CRMT only brings marginal improvement. Third, CRMT step-1 can also alleviate the imbalance problem, since it enhances the learning of modality with less discriminative ability, and makes different modality representations more balanced. Fourth, equipped with both enlarging uni-modal representation margin and adjusting integration factors, our method can progressively alleviate the imbalanced problem in attack with different modalities, which further contributes to better robustness.

\end{document}